\newtheorem{theorem}{Theorem}
\newtheorem{corollary}{Corollary}
\newtheorem{lemma}{Lemma}
\theoremstyle{definition}
\newtheorem{remark}{Remark}
\newtheorem{condition}{Condition}
\title{Convergence analysis of wide shallow neural operators within the framework of Neural Tangent Kernel}
\author{Xianliang Xu$^1$, Ye Li$^{2}$ \and Zhongyi Huang$^1$} 
\address{1 Tsinghua University, Beijing, China. \\
	2 Nanjing University of Aeronautics and Astronautics, Nanjing, China.}
\thanks{$^*$ This work was partially supported by the NSFC Projects No. 12025104, 11871298, 81930119, 62106103.}
\begin{document}
	\maketitle

\begin{abstract}
Neural operators are aiming at approximating operators mapping between Banach spaces of functions, achieving much success in the field of scientific computing. Compared to certain deep learning-based solvers, such as Physics-Informed Neural Networks (PINNs), Deep Ritz Method (DRM), neural operators can solve a class of Partial Differential Equations (PDEs). Although much work has been done to analyze the approximation and generalization error of neural operators, there is still a lack of analysis on their training error. In this work, we conduct the convergence analysis of gradient descent for the wide shallow neural operators and physics-informed shallow neural operators within the framework of Neural Tangent Kernel (NTK). The core idea lies on the fact that over-parameterization and random initialization together ensure that each weight vector remains near its initialization throughout all iterations, yielding the linear convergence of gradient descent. In this work, we demonstrate that under the setting of over-parametrization, gradient descent can find the global minimum regardless of whether it is in continuous time or discrete time.
\end{abstract}
	
\section{Introduction}

Partial Differential Equations (PDEs) are essential for modeling a wide range of phenomena in physics, biology, and engineering. Nonetheless, the numerical solution of PDEs has always been a significant challenge in the field of scientific computation. Traditional numerical approaches, such as finite difference, finite element, finite volume, and spectral methods, can encounter difficulties due to the curse of dimensionality when applied to PDEs with a high number of dimensions. In recent years, the impressive achievements of deep learning in various domains, including computer vision, natural language processing, and reinforcement learning, have led to an increased interest in utilizing machine learning techniques to tackle PDE-related problems.

For scientific problems, neural network-based methods are primarily divided into two categories: neural solvers and neural operators. Neural solvers, such as PINNs \cite{14}, DRM \cite{15}, utilize neural networks to represent the solutions of PDEs, minimizing some form of residual to enable the neural networks to approximate the true solutions closely. There are two potential advantages of neural solvers. First, this is an unsupervised learning approach, which means it does not require the costly process of obtaining a large number of labels as in supervised learning. Second, as a powerful represention tool, neural networks are known to be effictive for approximating continuous functions \cite{16}, smooth functions \cite{17}, Sobolev functions \cite{18}. This presents a potentially viable avenue for addressing the challenges of high-dimensional PDEs. Nevertheless, existing neural solvers face several limitations when compared to classical numerical solvers like FEM and FVM, particularly in terms of accuracy, and convergence issues. In addition, neural solvers are typically limited to solving a fixed PDE. If certain parameters of the PDE change, it becomes necessary to retrain the neural network.

Neural operator (also called operator learning) aims to approximate unknown operator, which often takes the form of the solution operator associated with a differential equation. Unlike most supervised learning methods in the field of machine learning, where both inputs and outputs are of finite dimensions, operator learning can be regarded as a form of supervised learning in function spaces. Because the inputs and outputs of neural networks are of finite dimensions, some operator learning methods, such as PCA-net and DeepONet, use an encoder to convert infinite-dimensional inputs into finite-dimensional ones, and a decoder to convert finite-dimensional outputs back into infinite-dimensional outputs. The PCA-Net architecture was proposed as an operator learning framework in \cite{19}, where principal component analysis (PCA) is employed to obtain data-driven encoders and decoders, combining with a neural network mapping between the finite-dimensional latent spaces. Building on early work by \cite{4}, DeepONet \cite{20} consists of a deep neural network
for encoding the discrete input function space  and another deep neural network for encoding the domain of the output functions. The encoding network is conventionally
referred to as the “branch-net”, while the decoding network is referred to as the “trunk-net”.  In contrast to the PCA-Net and DeepONet architectures mentioned above,, Fourier Neural Operator (FNO), introduced in \cite{21}, does not follow the encoder-decoder-net paradigm. Instead, FNO is a composition of linear integral operators and nonlinear activation functions, which can be seen as a generalization the structure of finite-dimensional neural networks to a function space setting.  

The theoretical research on neural operators mostly focuses on the study of approximation errors and generalization errors. As we know, the theoretical basis for the application of neural networks lies in the fact that neural networks are universal approximators. This also holds for neural operators. Regarding the analysis of approximation errors in neural operators, the aim is to identify whether neural operator also possess a universal approximation property, i.e. the ability to approximate a wide class of operators to any
given accuracy. As shown in \cite{4}, (shallow) operator networks can approximate continuous operators mapping between spaces of continuous
functions with arbitrary accuracy. Building on this result, DeepONets have also been proven to be universal approximators. For neural operators following the encoder-decoder paradigm like DeepONet and PCA-Net, Lemma 22 in \cite{22} provides a consistent approximation result, which states that if two Banach spaces have the approximation property, then continuous maps between them can be approximated in a finite-dimensional manner. The universal approximation capability of the FNO was initially established in \cite{23}, drawing on concepts from Fourier analysis, and specifically leveraging the density of Fourier series to demonstrate the FNO's ability to approximate a broad spectrum of operators. For a more quantitative analysis of the approximation error of neural operators, see \cite{3}. In addition to approximation errors, the error analysis of encoder-decoder style neural operators also includes encoding and reconstruction errors. \cite{1} has provided both lower and upper bounds on the
total error for DeepONets by using the spectral decay properties of the covariance operators associated with the underlying measures.  By employing tools from non-parametric regression, \cite{2} has provided an analysis of the generalization error for neural operators with basis encoders and decoders. The results in \cite{2} holds on neural operators with some popular encoders and decoders, such as those using Legendre polynomials, trigonometric functions, and PCA. For more details on the recent advances and theoretical research in operator learning, refer to the review \cite{3}.

Up to this point, the theoretical exploration of the convergence and optimization aspects of neural operators has received relatively little attention. To our best knowledge, only \cite{5} and \cite{6} have touched upon the optimization of neural operators. Based on restricted strong convexity (RSC), \cite{5} has presented a unified framework for gradient descent and apply the framework to DeepONets and FNOs, establishing convergence guarantees for both. \cite{6} has briefly analyzed the training of physics-informed DeepONets and derived a weighting scheme guided by NTK theory to balance the data and the PDE residual terms in the loss function. In this paper, we focus on the training error of shallow neural operator in \cite{4} with the framework of NTK, showing that gradient descent converges at a global linear rate to the global optimum. 

\subsection{Notations}
We denote $[n]=\{1,2,\cdots, n\}$ for $n\in \mathbb{N}$. Given a set $S$, we denote the uniform distribution on $S$ by $Unif\{S\}$. We use $I\{E\}$ to denote the indicator function of the event $E$. We use $A\lesssim B$ to denote an estimate that $A\leq cB$, where $c$ is a universal constant. A universal constant means a constant independent of any variables.

\section{Preliminaries}
The neural operator considered in this paper was originally introduced in [4], aimining to approximate a non-linear operator. Specifically, suppose that $\sigma$ is a continuous and non-polynomial function, $X$ is a Banach space, $K_1\subset X$, $K_2\subset \mathbb{R}^d$ are two compact sets in $X$ and $\mathbb{R}^d$, respectively, $V$ is a compact set in $C(K_1)$. Assume that $G^{*}: V\rightarrow C(K_2)$ is a nonlinear continuous operator. Then, an operator net can be formulated in terms of two shallow neural networks. The first is the so-called branch net $\beta(u)=(\beta_1(u),\cdots, \beta_m(u))$, defined for $1\leq r\leq m$ as
\[ \beta_r(u):=\sum\limits_{k=1}^p a_{rk}\sigma\left(\sum\limits_{s=1}^q  \xi_{rk}^su(x_s)+\theta_{rk} \right),\]
where $\{x_s\}_{1\leq s\leq q} \subset K_1$ are the so-called sensors and $a_{rk}, \xi_{rk}^s,\theta_{rk}$ are weights of the neural network.

The second neural network is the so-called trunk net $\tau(y)=(\tau_1(y), \cdots, \tau_m(y))$, defined as
\[ \tau_r(y) :=\sigma(w_r^T y+\zeta_r), 1\leq r \leq m ,\]
where $y\in K_2$ and $w_r, \zeta_r$ are weights of the neural network. Then the branch net and trunk net are combined to approximate the non-linear operator $G^{*}$, i.e.,
\[G^{*}(u)(y) \approx \sum\limits_{r=1}^m \beta_r(u)\tau_r(y):=G(u)(y), u\in V,y\in K_2. \]
 
As shown in [4], (shallow) operator networks can approximate,
to arbitrary accuracy, continuous operators mapping between spaces of continuous functions. Specifically, for any $\epsilon > 0$, there are positive integers $p,m$ and $q$, constants $a_{rk}, \xi_{rk}^s,\theta_{rk},\zeta_r \in \mathbb{R}$, $w_r\in \mathbb{R}^d$, $x_s \in K_1, s=1,\cdots q$, $r=1,\cdots, m$ and $k=1,\cdots, m$, such that
\[\left|G^{*}(u)(y) - \sum\limits_{r=1}^m \sum\limits_{k=1}^{p} a_{rk}\sigma\left(\sum\limits_{s=1}^q  \xi_{rk}^su(x_s)+\theta_{rk} \right)\sigma(w_r^T y+\zeta_r) \right|<\epsilon\]
holds for all $u\in V$ and $y\in K_2$.

The training of neural networks is performed using a supervised learning process. It involves minimizing the mean-squared error between the predicted output $G(u)(y)$ and the actual output $G^{*}(u)(y)$. Specifically, assume we have samples $\{(u_i, G^{*}(u_i))\}_{i=1}^N, u_i\sim \mu$, where $\mu$ is a probability supported on $V$. The aim is to minimize the following loss function:
\[ \frac{1}{2}\sum\limits_{i=1}^N \sum\limits_{j=1}^n |G(u_i)(y_j)-G^{*}(u_i)(y_j)|^2.\]

In this paper, we primarily focus on the shallow neural operators with ReLU activation functions. Formally, we consider a shallow operator of the following form.
\begin{equation}
G(u)(y) = \frac{1}{\sqrt{m}}\sum\limits_{r=1}^m \left[\frac{1}{\sqrt{p}} \sum\limits_{k=1}^p \tilde{a}_{rk} \sigma(\tilde{w}_{rk}^T u ) \right]\sigma(w_r^T y),	
\end{equation}

where we equate the function $u$ with its value vector $(u(x_1), \cdots, u(x_q))^T \in \mathbb{R}^q$ at the points $\{x_s\}_{s=1}^q$.

We denote the loss function by $L(W, \tilde{W}, a)$. The main focus of this paper is to analyze the gradient descent in training the shallw neural operator. We fix the weights $\{a_{rk}\}_{r=1,k=1}^{m,p}$ and apply gradient (GD) to optimize the weights $\{w_r\}_{r=1}^m, \{\tilde{w}_{rk}\}_{r=1,k=1}^{m,p}$. Specifically, 
\[ w_r(t+1)=w_r(t)-\eta \frac{\partial L(W(t), \tilde{W}(t))}{\partial w_r} ,\ \tilde{w}_{rk}(t+1)=\tilde{w}_{rk}(t)-\eta \frac{\partial L(W(t), \tilde{W}(t))}{\partial \tilde{w}_{rk}},\]
where $\eta>0$ is the learning rate and $L(W,\tilde{W})$ is an abbreviation of $L(W, \tilde{W}, a)$.

At this point, the loss function is
\[L(W,\tilde{W})=\frac{1}{2}\sum\limits_{i=1}^{n_1}\sum\limits_{j=1}^{n_2} (G(u_i)(y_j)-z_j^i)^2, \]
where $z_j^i = G^{*}(u_i)(y_j)$. Throughout this paper, we consider
the initialization
\begin{equation}
 w_r(0)\sim \mathcal{N}(\bm{0},\bm{I}),\tilde{w}_{rk}(0)\sim \mathcal{N}(\bm{0},\bm{I}), \bm{a}_{rk}(0)\sim Unif\{-1,1\}
\end{equation}
and assume that $\|u_i\|_2=\mathcal{O}(1), \|y_j\|_2=\mathcal{O}(1)$ for all $j\in [n]$. Note that here we treat vector $u_i=(u_i(x_1),\cdots, u_{x_q})$ and function $u_i$ as equivalent.
\section{Continuous Time Analysis}

In this section, we present our result for gradient flow, which can be viewed as a continuous form of gradient descent with an infinitesimal time step size. The analysis of gradient flow in continuous time serves as a foundational step for comprehending discrete gradient descent algorithms. We prove that the gradient flow converges to the global optima of the loss under over-parameterization and some mild conditions on training samples. The time continuous form can be characterized as the
following dynamics
\[\frac{d w_r(t)}{dt}=-\frac{\partial L(W(t), \tilde{W}(t))}{\partial w_r},\ \frac{d}{dt} \tilde{w}_{rk}(t)=-\frac{\partial L(W(t), \tilde{W}(t))}{\partial \tilde{w}_{rk}} \]
for $r\in [m], k\in [p]$. We denote $G^{t}(u_i)(y_j)$ the prediction on $y_j$ at time $t$ under $u_i$, i.e., with weights $w_r(t), \tilde{w}_{rk}(t)$. 

Thus, we can deduce that 
\begin{equation}
 \frac{dw_r(t)}{dt}=-\frac{\partial L(W(t),\tilde{W}(t))}{\partial w_r} = -\sum\limits_{i=1}^{n_1} \sum\limits_{j=1}^{n_2} \frac{\partial G^{t}(u_i)(y_j)}{\partial w_r}(G(u_i)(y_j)-z_j^i) 
\end{equation}

and 
\begin{equation}
\frac{d\tilde{w}_{rk}(t)}{dt}=-\frac{\partial L(W(t), \tilde{W}(t))}{\partial \tilde{w}_{rk}}=-  \sum\limits_{i=1}^{n_1} \sum\limits_{j=1}^{n_2}  \frac{\partial G^{t}(u_i)(y_j)}{\partial \tilde{w}_{rk}} (G^{t}(u)(y_j)-z_j^i). 	
\end{equation}

Then, the dynamics of each prediction can be calculated as follows.
\begin{equation}
\begin{aligned}
\frac{d G^{t}(u_i)(y_j)}{dt} &= \sum\limits_{r=1}^m \left\langle \frac{\partial G^{t}(u_i)(y_j)}{\partial w_r} ,\frac{dw_r(t)}{dt} \right\rangle+ \sum\limits_{r=1}^m \sum\limits_{k=1}^p\left\langle \frac{\partial G^{t}(u_i)(y_j)}{\partial \tilde{w}_{rk}} ,\frac{d\tilde{w}_{rk}(t)}{dt} \right\rangle \\
&=-\sum\limits_{r=1}^m \sum\limits_{i_1=1}^{n_1} \sum\limits_{j_1=1}^{n_2} \left\langle \frac{\partial G^{t}(u_i)(y_j)}{\partial w_r}, \frac{\partial G^{t}(u_{i_1})(y_{j_1})}{\partial w_r} \right\rangle (G^{t}(u_{i_1})(y_{j_1})-z_{j_1}^{i_1}) \\
&\quad -\sum\limits_{r=1}^m \sum\limits_{k=1}^p \sum\limits_{i_1=1}^{n_1} \sum\limits_{j_1=1}^{n_2} \left\langle \frac{\partial G^{t}(u_i)(y_j)}{\partial \tilde{w}_{rk}}, \frac{\partial G^{t}(u_{i_1})(y_{j_1})}{\partial\tilde{w}_{rk}} \right\rangle (G^{t}(u_{i_1})(y_{j_1})-z_{j_1}^{i_1}).
\end{aligned}
\end{equation}

We let $G^{t}(u_i):=(G^{t}(u_i)(y_1), \cdots, G^{t}(u_i)(y_{n_2})) \in \mathbb{R}^{n_2}$ and $G^{t}(u):=(G^{t}(u_1), \cdots, G^{t}(u_{n_1}))\in \mathbb{R}^{n_1n_2}$. Then, we have
\begin{equation}
	\frac{d G^{t}(u_i)}{dt} =\left[ (H_1^i(t),\cdots, H_{n_1}^i(t))+(\tilde{H}_1^i(t),\cdots, \tilde{H}_{n_1}^i(t))\right] (z-G^{t}(u)),
\end{equation}
where $H_j^i(t) \in \mathbb{R}^{n_2\times n_2}$, whose $(i_1,j_1)$-th entry is defined as
\[\sum\limits_{r=1}^m  \left\langle  \frac{ \partial G^{t}(u_i)(y_{i_1})  }{\partial w_r} , \frac{G^{t}(u_j)(y_{j_1}) }{ \partial w_r}\right\rangle   \]
and $\tilde{H}_j^i(t) \in \mathbb{R}^{n_2\times n_2}$, whose $(i_1,j_1)$-th entry is defined as
\[\sum\limits_{r=1}^m \sum\limits_{k=1}^p \left\langle  \frac{ \partial G^{t}(u_i)(y_{i_1})  }{\partial \tilde{w}_{rk}} , \frac{G^{t}(u_j)(y_{j_1}) }{ \partial \tilde{w}_{rk}}\right\rangle .  \]

Thus, we can write the dynamics of predictions as follows.
\[ \frac{d G^{t}(u)}{dt}= \left[H(t)+\tilde{H}(t)\right](z-G^t(u)),\]
where $H(t),\tilde{H}(t)\in \mathbb{R}^{n_1n_2\times n_2n_2}$. We can divide $H(t), \tilde{H}(t)$ into $n_1\times n_1$ blocks, the $(i,j)$-th block of $H(t)$ is $H_j^i(t)$ and  the $(i,j)$-th block of $\tilde{H}(t)$ is $\tilde{H}_j^i(t)$. From the form of $H(t),\tilde{H}(t)$, we can derive the Gram matrices induced by the random initialization, which we denote by $H^{\infty}$ and $\tilde{H}^{\infty}$. Note that although $H^{\infty}$ and $\tilde{H}^{\infty}$ are large matrices, we can divide $H^{\infty}$ and $\tilde{H}^{\infty}$ into $n_1\times n_1$ blocks, where each block is a $n_2\times n_2$ matrix. Following the notation above, the $(i_1,j_1)$-th entry of $(i,j)$-th block of $H^{\infty}$ is 
\[ \mathbb{E}[\sigma(\tilde{w}^T u_i)\sigma(\tilde{w}^T u_j)] \mathbb{E}[ y_{i_1}^T y_{j_1} I\{w^T y_{i_1}\geq 0, w^T y_{j_1}\geq 0\}].\]

Thus, the $(i,j)$-th block can be written as 
\[\mathbb{E}[\sigma(\tilde{w}^T u_i)\sigma(\tilde{w}^T u_j)] H_2^{\infty},\]
where $H_2^{\infty} \in \mathbb{R}^{n_2\times n_2}$ and the $(i_1,j_1)$-th entry of $H_2^{\infty}$ is
\[\mathbb{E}[ y_{i_1}^T y_{j_1} I\{w^T y_{i_1}\geq 0, w^T y_{j_1}\geq 0\}].\]

Thus, $H^{\infty}$ can be seen as a Kronecker product of matrices $H_1^{\infty}$ and $H_2^{\infty}$, where $H_1^{\infty}\in \mathbb{R}^{n_1\times n_1}$ and the $(i,j)$-th entry of $H_1^{\infty}$ is $\mathbb{E}[\sigma(\tilde{w}^T u_i)\sigma(\tilde{w}^T u_j)]$. Similarly, we have that $\tilde{H}^{\infty}$ is a Kronecker product of matrices $\tilde{H}_1^{\infty}$ and $\tilde{H}_2^{\infty}$, where $\tilde{H}_1^{\infty}\in \mathbb{R}^{n_1\times n_1}$ and the $(i,j)$-th entry of $\tilde{H}_1^{\infty}$ is $\mathbb{E}[u_i^T u_j I\{\tilde{w}^T u_i\geq 0, \tilde{w}^T u_j \geq 0 \}]$, the $(i_1,j_1)$-th entry of $\tilde{H}_2^{\infty}$ is $\mathbb{E}[\sigma(w^T y_{i_1})\sigma(w^T y_{j_1})]$ .

Similar to the situation in $L^2$ regression, we can show two essential facts: (1) $\|H(0)-H^{\infty}\|_2=\mathcal{O}(1/\sqrt{m})$, $\|\tilde{H}(0)-\tilde{H}^{\infty}\|_2=\mathcal{O}(1/\sqrt{m})$ and (2) for all $t\geq 0$, $\|H(t)-H(0)\|_2=\mathcal{O}(1/\sqrt{m})$, $\|\tilde{H}(t)-\tilde{H}(0)\|_2=\mathcal{O}(1/\sqrt{m})$.

Therefore, roughly speaking, as $m\to \infty$, the dynamics of the predictions can be written as
\begin{equation*}
\frac{d}{dt} G^{t}(u) = \left( H^{\infty}+\tilde{H}^{\infty}\right)(z-G^{t}(u)),
\end{equation*}
which results in the linear convergence.

We first show that the Gram matrices are strictly positive definite under mild assumptions.

\begin{lemma}
If no two samples in $\{y_j\}_{j=1}^{n_2}$ are parallel and no two samples in $\{u_i\}_{i=1}^{n_1}$ are parallel, then $H^{\infty}$ and $\tilde{H}^{\infty}$ are strictly positive definite. We denote the least eigenvalue of $H^{\infty}$ and $\tilde{H}^{\infty}$ as $\lambda_0$ and $\tilde{\lambda}_0$ respectively.
\end{lemma}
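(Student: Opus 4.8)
The plan is to exploit the Kronecker structure recorded just above the statement: $H^{\infty}=H_1^{\infty}\otimes H_2^{\infty}$ and $\tilde{H}^{\infty}=\tilde{H}_1^{\infty}\otimes\tilde{H}_2^{\infty}$. Each of the four factors is the Gram matrix of a family of random features, hence positive semi-definite, and the spectrum of a Kronecker product is the set of pairwise products of the spectra of its factors; so it will suffice to prove that $H_1^{\infty}$, $H_2^{\infty}$, $\tilde{H}_1^{\infty}$, $\tilde{H}_2^{\infty}$ are each \emph{strictly} positive definite, after which one gets $\lambda_0=\lambda_{\min}(H_1^{\infty})\lambda_{\min}(H_2^{\infty})>0$ and $\tilde{\lambda}_0=\lambda_{\min}(\tilde{H}_1^{\infty})\lambda_{\min}(\tilde{H}_2^{\infty})>0$. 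Reading off the entries, these four matrices are instances of only two kernels: the \emph{value kernel} with entries $\mathbb{E}_{w\sim\mathcal{N}(\bm{0},\bm{I})}[\sigma(w^Tx_i)\sigma(w^Tx_j)]$ ($H_1^{\infty}$ with $x=u$, and $\tilde{H}_2^{\infty}$ with $x=y$), and the \emph{gradient kernel} with entries $\mathbb{E}_{w\sim\mathcal{N}(\bm{0},\bm{I})}[x_i^Tx_j\,I\{w^Tx_i\ge0,\,w^Tx_j\ge0\}]$ ($H_2^{\infty}$ with $x=y$, and $\tilde{H}_1^{\infty}$ with $x=u$). Thus the whole lemma reduces to one claim: if a finite family $\{x_i\}$ contains no two parallel vectors, then both its value-kernel Gram matrix and its gradient-kernel Gram matrix are strictly positive definite.

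For each kernel I would take the classical route: realize the Gram matrix as that of a family of functions in a Hilbert space and show that family is linearly independent, which for a Gram matrix is equivalent to nonsingularity and, together with positive semi-definiteness, to strict positive definiteness. For the value kernel the relevant functions are the scalar features $\psi_i(w)=\sigma(w^Tx_i)\in L^2(\mathcal{N}(\bm{0},\bm{I}))$; for the gradient kernel they are the $\mathbb{R}^d$-valued features $\phi_i(w)=x_i\,I\{w^Tx_i\ge0\}\in L^2(\mathcal{N}(\bm{0},\bm{I});\mathbb{R}^d)$. In either case the $i$-th feature is non-smooth precisely on the hyperplane $\Gamma_i=\{w:w^Tx_i=0\}$, and the non-parallel hypothesis makes the $\Gamma_i$ pairwise distinct; this also forces each $x_i\ne\bm{0}$, without which $\psi_i\equiv0$ and linear independence would fail outright.

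To establish linear independence I would argue locally near the kinks. Suppose $\sum_i c_i\psi_i=0$ in $L^2$ (the $\phi_i$ case is identical). Fix $i_0$ and choose $w_0\in\Gamma_{i_0}$ lying on none of the other $\Gamma_i$ -- such a point exists since a hyperplane cannot be covered by the finitely many proper subspaces $\Gamma_{i_0}\cap\Gamma_i$, $i\ne i_0$. On a small ball $B$ about $w_0$ each $\psi_i$ with $i\ne i_0$ is affine, so the identity becomes $c_{i_0}\sigma(w^Tx_{i_0})+a(w)=0$ almost everywhere on $B$, hence everywhere by continuity, for some affine $a$; since $\sigma(w^Tx_{i_0})$ has a genuine corner along the nonempty set $\Gamma_{i_0}\cap B$ while $a$ is smooth, this forces $c_{i_0}=0$, and letting $i_0$ range over all indices yields $c\equiv0$. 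The whole argument is a two-fold specialization of the by-now standard NTK Gram-matrix computation for over-parameterized ReLU networks. The step I expect to be the main obstacle is the value kernel: unlike the gradient kernel, whose strict positivity is exactly the usual NTK lemma, here one must rule out $c_{i_0}\ne0$ from a \emph{corner} rather than a \emph{jump}. A clean way around this is to use the parity of the Gaussian to write $\mathbb{E}[\sigma(w^Tx_i)\sigma(w^Tx_j)]=\tfrac{1}{4}\,x_i^Tx_j+\tfrac{1}{4}\,\mathbb{E}[\,|w^Tx_i|\,|w^Tx_j|\,]$, note that the first term is positive semi-definite, and then run the same kink argument on the even features $w\mapsto|w^Tx_i|$, whose non-smoothness across $\Gamma_i$ is a jump in the gradient and is detected exactly as in the gradient-kernel case.
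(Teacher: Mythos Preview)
Your approach is essentially the same as the paper's: reduce via the Kronecker structure to the four factor Gram matrices, then show each is strictly positive definite by realizing it as a Gram matrix of random features in $L^2(\mathcal{N}(\bm{0},\bm{I}))$ (scalar- or vector-valued) and proving linear independence through the kink localization along the hyperplanes $\Gamma_i=\{w:w^Tx_i=0\}$. The paper does not spell out the linear-independence step but simply cites the standard NTK results (Theorem~3.1 in \cite{7} for the gradient kernel and Theorem~2.1 in \cite{8} for the value kernel), whereas you give a self-contained sketch; your parity decomposition for the value kernel is a valid alternative but unnecessary, since the direct corner argument on $\sigma(w^Tx_{i_0})$ already forces $c_{i_0}=0$.
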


\begin{remark}
In fact, when we consider neural networks with bias, it is natural that Lemma 1 holds. Specifically, for $r\in [m], j\in [n_2]$, we can replace $w_r$ and $y_j$ by $(w_r^T, 1)^T$ and $(y_j^T, 1)^T$. Thus Lemma 1 holds under the condition that no two samples in $\{y_j\}_{j=1}^{n_2}$ are identical, which holds naturally.	
\end{remark}

Then we can verify the two facts that $H(0), \tilde{H}(0)$ are close to $H^{\infty}, \tilde{H}^{\infty}$ and $H(t),\tilde{H}(t)$ are close to $H(0), \tilde{H}(0)$ by following two lemmas.

\begin{lemma}
If $m=\Omega\left(\frac{n_1^2n_2^2}{min(\lambda_0^2, \tilde{\lambda}_0^2) } \log\left(\frac{n_1n_2}{\delta}\right) \right)$, we have with probability at least $1-\delta$, 
$\|H(0)-H^{\infty}\|_2\leq \frac{\lambda_0}{4}$, $\|\tilde{H}(0)-\tilde{H}^{\infty}\|_2\leq \frac{\tilde{\lambda}_0}{4}$ and 
$\lambda_{min}(H(0))\geq \frac{3}{4}\lambda_0$, $\lambda_{min}(\tilde{H}(0))\geq \frac{3}{4}\tilde{\lambda}_0$.
\end{lemma}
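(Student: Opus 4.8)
\emph{Proof sketch.} The plan is to realize $H(0)$ and $\tilde H(0)$ as empirical averages over the $m$ outer neurons of i.i.d.\ random matrices whose expectations are exactly $H^{\infty}$ and $\tilde H^{\infty}$, and then to control the fluctuation entrywise by a Bernstein-type inequality, converting the resulting entrywise bound into a spectral-norm bound. Differentiating the shallow operator \eqref{1} gives $\partial_{w_r} G^0(u_i)(y_{i_1}) = \tfrac{1}{\sqrt m}\, c_{r,i}\, I\{w_r^T y_{i_1}\ge 0\}\, y_{i_1}$ with $c_{r,i}:=\tfrac{1}{\sqrt p}\sum_{k=1}^p \tilde a_{rk}\sigma(\tilde w_{rk}^T u_i)$, so the $(i_1,j_1)$ entry of the $(i,j)$ block of $H(0)$ equals $\tfrac1m\sum_{r=1}^m c_{r,i}c_{r,j}\, y_{i_1}^T y_{j_1}\, I\{w_r^T y_{i_1}\ge 0,\ w_r^T y_{j_1}\ge 0\}$. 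Since the $\tilde a_{rk}$ are independent Rademacher and the $\tilde w_{rk}$ i.i.d.\ standard Gaussian, $\mathbb E[c_{r,i}c_{r,j}]=\mathbb E[\sigma(\tilde w^T u_i)\sigma(\tilde w^T u_j)]$, and as the $w_r$-part is independent of the $(\tilde w,\tilde a)$-part, the expectation of this entry is the matching entry of $H^{\infty}$, while the summands are i.i.d.\ across $r$. The same computation with $\partial_{\tilde w_{rk}} G^0(u_i)(y_{i_1})=\tfrac{1}{\sqrt{mp}}\,\tilde a_{rk}\, I\{\tilde w_{rk}^T u_i\ge 0\}\, u_i\,\sigma(w_r^T y_{i_1})$ and $\tilde a_{rk}^2=1$ shows the corresponding entry of $\tilde H(0)$ is $\tfrac1m\sum_{r=1}^m Z_r$ with $Z_r:=\bigl(\tfrac1p\sum_{k=1}^p I\{\tilde w_{rk}^T u_i\ge 0,\ \tilde w_{rk}^T u_j\ge 0\}\bigr)\, u_i^T u_j\,\sigma(w_r^T y_{i_1})\sigma(w_r^T y_{j_1})$, i.i.d.\ across $r$ with mean equal to the matching entry of $\tilde H^{\infty}$.

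Next I would check the tail behaviour of the summands. Using $\|u_i\|_2=\mathcal O(1)$ and $\|y_j\|_2=\mathcal O(1)$, each $\tilde a_{rk}\sigma(\tilde w_{rk}^T u_i)$ is mean-zero sub-Gaussian with an $\mathcal O(1)$ parameter, hence $c_{r,i}$, being a normalized sum of such terms, is sub-Gaussian with an $\mathcal O(1)$ parameter \emph{independent of $p$}; consequently $c_{r,i}c_{r,j}\, y_{i_1}^T y_{j_1}\, I\{\cdots\}$ is sub-exponential with $\mathcal O(1)$ parameter, and likewise $\sigma(w_r^T y_{i_1})\sigma(w_r^T y_{j_1})$ is sub-exponential, so each $Z_r$ is sub-exponential with $\mathcal O(1)$ parameter. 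Bernstein's inequality then gives, for a fixed entry, $\bigl|[H(0)-H^{\infty}]_{\mathrm{entry}}\bigr|\le t$ except with probability at most $2\exp(-c\,m\min\{t^2,t\})$, and the same for $\tilde H(0)-\tilde H^{\infty}$.

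Then I would take a union bound over the $(n_1n_2)^2$ entries of each matrix with $t=\lambda_0/(4n_1n_2)$ for $H$ and $t=\tilde\lambda_0/(4n_1n_2)$ for $\tilde H$; requiring $c\,m\,t^2\gtrsim\log(n_1n_2/\delta)$ produces exactly the hypothesis $m=\Omega\bigl(\tfrac{n_1^2 n_2^2}{\min(\lambda_0^2,\tilde\lambda_0^2)}\log(\tfrac{n_1n_2}{\delta})\bigr)$. On the resulting event, $\|H(0)-H^{\infty}\|_2\le\|H(0)-H^{\infty}\|_F\le n_1n_2\max_{\mathrm{entry}}|\cdot|\le\lambda_0/4$ and similarly $\|\tilde H(0)-\tilde H^{\infty}\|_2\le\tilde\lambda_0/4$. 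Finally Weyl's inequality, together with Lemma 1 (which supplies $\lambda_{\min}(H^{\infty})=\lambda_0$ and $\lambda_{\min}(\tilde H^{\infty})=\tilde\lambda_0$), yields $\lambda_{\min}(H(0))\ge\lambda_0-\lambda_0/4=\tfrac34\lambda_0$ and $\lambda_{\min}(\tilde H(0))\ge\tfrac34\tilde\lambda_0$, completing the argument.

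The step I expect to be the main obstacle is the concentration estimate for $H(0)$: unlike the standard $L^2$-regression setting, where the relevant random variables are bounded and Hoeffding's inequality suffices, here the factor $c_{r,i}c_{r,j}$ is a product of normalized sums of Gaussians and hence only sub-exponential, so one must invoke the two-sided Bernstein inequality for sub-exponential variables (or first truncate $|c_{r,i}|$ at an $\mathcal O(\sqrt{\log(m n_1/\delta)})$ level and separately control the truncated remainder). One also has to verify carefully that the sub-Gaussian parameter of $c_{r,i}=\tfrac1{\sqrt p}\sum_{k=1}^p\tilde a_{rk}\sigma(\tilde w_{rk}^T u_i)$ does not degrade as $p$ grows, which is precisely what keeps the final lower bound on $m$ free of $p$. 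The remaining bookkeeping of the $n_1,n_2,\lambda_0,\tilde\lambda_0$ dependence through the union bound is routine.
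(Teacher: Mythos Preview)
Your proposal is correct and follows essentially the same route as the paper: write each entry of $H(0)$ and $\tilde H(0)$ as an average over $r$ of i.i.d.\ terms with the right expectation, observe that these terms are sub-exponential with $\mathcal O(1)$ parameter (using, exactly as you do, that $c_{r,i}=\tfrac{1}{\sqrt p}\sum_k\tilde a_{rk}\sigma(\tilde w_{rk}^T u_i)$ is sub-Gaussian uniformly in $p$), apply a Bernstein-type inequality entrywise, union-bound over the $(n_1n_2)^2$ entries, pass to the Frobenius and hence spectral norm, and finish with Weyl. The paper invokes its Lemma~13 for the sub-Gaussianity of $c_{r,i}$ and its Lemma~12 (a sub-Weibull Bernstein inequality) for the concentration step, which is precisely the sub-exponential Bernstein argument you outline.
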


\begin{lemma}
Let $R,\tilde{R}\in (0,1)$. If $w_1(0), \cdots, w_m(0), \tilde{w}_{00}(0),\cdots \tilde{w}_{mp}(0)$ are i.i.d. generated $\mathcal{N}(\bm{0}, \bm{I})$. For any set of weight vectors $w_1, \cdots, w_m, \tilde{w}_{00},\cdots \tilde{w}_{mp}\in \mathbb{R}^d$ that satisfy for any $r\in [m], k\in [p]$, 	$\|w_r-w_r(0)\|_2 \leq R$ and $\|\tilde{w}_{rk}-\tilde{w}_{rk}(0)\|_2\leq \tilde{R}$, then we have
with probability at least $1-\delta-n_2exp(-mR)$,
\begin{equation}
\|H-H(0)\|_F\lesssim  n_1n_2p\tilde{R}^2+ n_1n_2\sqrt{p}\tilde{R} \sqrt{\log \left(\frac{mn_1}{\delta} \right)}+ n_1n_2R\log \left(\frac{mn_1}{\delta} \right)
\end{equation}
and
with probability at least $1-\delta-n_2exp(-mp\tilde{R})$, 
\begin{equation}
 \|\tilde{H}-\tilde{H}(0)\|_F\lesssim n_1n_2R\sqrt{\log \left(\frac{n_2}{\delta} \right)}+n_1n_2\tilde{R}\log \left(\frac{mn_2}{\delta} \right) ,
\end{equation}
where the $(i_1,j_1)$-th entry of the $(i,j)$-th block of $H$ is
\[H_{i,j}^{i_1,j_1} = \frac{1}{m}\sum\limits_{r=1}^m \left[\frac{1}{\sqrt{p}} \sum\limits_{k=1}^p \tilde{a}_{rk} \sigma(\tilde{w}_{rk}^T u_i )  \right] \left[\frac{1}{\sqrt{p}} \sum\limits_{k=1}^p \tilde{a}_{rk} \sigma(\tilde{w}_{rk}^T u_j )  \right]y_{i_1}^T y_{j_1} I\{w_r^T y_{i_1} \geq 0, w_r^T y_{j_1} \geq 0\}\]
and the $(i_1,j_1)$-th entry of the $(i,j)$-th block of $\tilde{H}$ is
\[\tilde{H}_{i,j}^{i_1,j_1} = \frac{1}{m}\frac{1}{p}\sum\limits_{r=1}^m \sum\limits_{k=1}^p u_i^T u_jI\{\tilde{w}_{rk}^T u_i\geq 0, \tilde{w}_{rk}^T u_j\geq 0 \}\sigma(w_r^T y_{i_1})\sigma(w_r^T y_{j_1}) . \]
\end{lemma}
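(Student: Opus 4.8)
The plan is to first establish a \emph{uniform entrywise} bound of the form $|H_{i,j}^{i_1,j_1}-H_{i,j}^{i_1,j_1}(0)| \lesssim p\tilde{R}^2 + \sqrt{p}\,\tilde{R}\sqrt{\log(mn_1/\delta)} + R\log(mn_1/\delta)$ (and its analogue for $\tilde{H}$), and then pass to the Frobenius norm via $\|H-H(0)\|_F \le \sqrt{n_1^2 n_2^2}\,\max_{i,j,i_1,j_1}|H_{i,j}^{i_1,j_1}-H_{i,j}^{i_1,j_1}(0)| = n_1 n_2\,\max(\cdots)$, which is exactly the $n_1 n_2$ prefactor in the statement. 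For a single entry, abbreviate the trunk-type quantity as $H_{i,j}^{i_1,j_1} = \frac{1}{m}\sum_{r=1}^m b_{r,i}b_{r,j}\,y_{i_1}^{T}y_{j_1}\,I_r$ with $b_{r,i}=\frac{1}{\sqrt{p}}\sum_{k=1}^p \tilde{a}_{rk}\sigma(\tilde{w}_{rk}^{T}u_i)$ and $I_r = I\{w_r^{T}y_{i_1}\ge 0,\ w_r^{T}y_{j_1}\ge 0\}$, and similarly $\tilde{H}_{i,j}^{i_1,j_1}=\frac{1}{mp}\sum_{r,k}u_i^{T}u_j\,J_{rk}\,c_{r,i_1}c_{r,j_1}$ with $c_{r,i_1}=\sigma(w_r^{T}y_{i_1})$ and $J_{rk}=I\{\tilde{w}_{rk}^{T}u_i\ge 0,\ \tilde{w}_{rk}^{T}u_j\ge 0\}$. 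Using the identity $ab-a_0b_0=(a-a_0)b_0+a_0(b-b_0)+(a-a_0)(b-b_0)$ on the smooth factors and isolating the indicator change, each entrywise difference splits into an \emph{indicator-flip part} and a \emph{Lipschitz-movement part}, which I bound separately.

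For the indicator-flip part of $H$, note that $I_r\ne I_r(0)$ forces the sign of $w_r^{T}y_{i_1}$ to differ from that of $w_r(0)^{T}y_{i_1}$, or the same with $y_{j_1}$, which requires $|w_r(0)^{T}y_{i_1}|\le \|w_r-w_r(0)\|_2\|y_{i_1}\|_2 \lesssim R$ since $\|y_j\|_2=\mathcal{O}(1)$. Since $w_r(0)^{T}y_{i_1}\sim\mathcal{N}(0,\|y_{i_1}\|_2^2)$, Gaussian anti-concentration makes this probability $\lesssim R$, so the number of flipped neurons for a fixed direction $y_{i_1}$ is stochastically dominated by a $\mathrm{Binomial}(m,\mathcal{O}(R))$ variable; a Chernoff/Bernstein bound gives that it is $\lesssim mR$ except with probability $\exp(-\Omega(mR))$, and a union bound over the $n_2$ directions produces the $n_2\exp(-mR)$ failure term. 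On this event the flip contribution is at most $\frac{1}{m}\cdot(\text{number of flips})\cdot\max_r|b_{r,i}(0)b_{r,j}(0)|\cdot|y_{i_1}^{T}y_{j_1}| \lesssim R\cdot\max_r|b_{r,i}(0)b_{r,j}(0)|$, and since $\tilde{a}_{rk}\sigma(\tilde{w}_{rk}(0)^{T}u_i)$ is mean-zero and $\mathcal{O}(1)$-sub-Gaussian, $b_{r,i}(0)$ is $\mathcal{O}(1)$-sub-Gaussian, so a maximal inequality over the $mn_1$ pairs gives $\max_{r,i}|b_{r,i}(0)|\lesssim\sqrt{\log(mn_1/\delta)}$ with probability $1-\delta$; this yields the $R\log(mn_1/\delta)$ term. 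The argument for $\tilde{H}$ is identical with $\tilde{w}_{rk}$ in place of $w_r$: there are now $mp$ indicator neurons, each flipping with probability $\lesssim\tilde{R}$, so the flip count is $\lesssim mp\tilde{R}$ except with probability $\exp(-\Omega(mp\tilde{R}))$ (union-bounded over directions), and with $\max_{r,i_1}|c_{r,i_1}(0)|\lesssim\sqrt{\log(mn_2/\delta)}$ the $\frac{1}{mp}$ normalization turns $mp\tilde{R}$ flips into the $\tilde{R}\log(mn_2/\delta)$ term.

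For the Lipschitz-movement part I use that ReLU is $1$-Lipschitz. For $H$ this part comes from $\tilde{w}_{rk}$ moving inside $b_{r,i}$: $|b_{r,i}-b_{r,i}(0)|\le\frac{1}{\sqrt{p}}\sum_{k=1}^p|\sigma(\tilde{w}_{rk}^{T}u_i)-\sigma(\tilde{w}_{rk}(0)^{T}u_i)|\le\frac{1}{\sqrt{p}}\sum_k\tilde{R}\|u_i\|_2\lesssim\sqrt{p}\,\tilde{R}$. Feeding this, together with $|b_{r,i}(0)|\lesssim\sqrt{\log(mn_1/\delta)}$, into the product-rule expansion of $b_{r,i}b_{r,j}-b_{r,i}(0)b_{r,j}(0)$ gives $\lesssim\sqrt{p}\,\tilde{R}\sqrt{\log(mn_1/\delta)}+p\tilde{R}^2$ uniformly in $r$, and averaging over $r$ and multiplying by $|y_{i_1}^{T}y_{j_1}|=\mathcal{O}(1)$ leaves this unchanged; these are the first two terms of the $H$-bound. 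For $\tilde{H}$ the movement part comes from $w_r$ moving inside $c_{r,i_1}$: $|c_{r,i_1}-c_{r,i_1}(0)|\le R\|y_{i_1}\|_2\lesssim R$, so $|c_{r,i_1}c_{r,j_1}-c_{r,i_1}(0)c_{r,j_1}(0)|\lesssim R(|c_{r,i_1}(0)|+|c_{r,j_1}(0)|)+R^2$, and averaging $\frac{1}{mp}\sum_{r,k}$ reduces to $\frac{1}{m}\sum_r|c_{r,i_1}(0)|$, which concentrates around its $\mathcal{O}(1)$ mean $\mathbb{E}|\sigma(w^{T}y_{i_1})|$ up to a $\sqrt{\log(n_2/\delta)}$ safety factor after a union bound over the $n_2$ directions, giving the $R\sqrt{\log(n_2/\delta)}$ term. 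Collecting the two parts for each of $H$ and $\tilde{H}$ and adding the failure probabilities ($\delta$ from the sub-Gaussian/concentration estimates, and $n_2\exp(-mR)$ respectively $n_2\exp(-mp\tilde{R})$ from the flip counts) yields the stated bounds.

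The conceptual skeleton is the one used by Du et al.\ for one-hidden-layer $L^2$ regression, so no single step is deep; the work is in the bookkeeping. The part I expect to be most delicate is keeping the two simultaneously perturbed weight blocks $\{w_r\}$ and $\{\tilde{w}_{rk}\}$ straight --- deciding for every factor whether to evaluate it at initialization or at the perturbed point, verifying that the perturbed factors stay bounded (e.g.\ $|b_{r,j}|\le|b_{r,j}(0)|+\sqrt{p}\,\tilde{R}$ remains $\mathcal{O}(\sqrt{\log(mn_1/\delta)}+\sqrt{p}\,\tilde{R})$), and extracting the correct polylogarithmic factors and the precise $\exp(-\Omega(mR))$-type failure probabilities from the anti-concentration-plus-Chernoff step. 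The $\frac{1}{\sqrt{p}}$ and $\frac{1}{\sqrt{m}}$ scalings of the branch--trunk architecture are exactly what make the $p\tilde{R}^2$ and $\sqrt{p}\,\tilde{R}$ contributions appear, so these must be tracked carefully throughout.
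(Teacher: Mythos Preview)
Your proposal is correct and follows essentially the same approach as the paper's proof: the paper also decomposes each entry via $abc-a(0)b(0)c(0)=(ab-a(0)b(0))c+a(0)b(0)(c-c(0))$, bounds the ``Lipschitz-movement'' piece using $|a-a(0)|,|b-b(0)|\lesssim\sqrt{p}\,\tilde R$ together with the uniform sub-Gaussian bound $|a(0)|,|b(0)|\lesssim\sqrt{\log(mn_1/\delta)}$, and bounds the ``indicator-flip'' piece by anti-concentration plus Bernstein on $\frac{1}{m}\sum_r I\{A_{jr}\}\lesssim R$ (respectively $\frac{1}{mp}\sum_{r,k}I\{\tilde A^i_{rk}\}\lesssim\tilde R$), then sums to the Frobenius norm. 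The only cosmetic difference is that for $\tilde H$ the paper phrases the truncation of $|\sigma(w_r(0)^Ty_j)|$ explicitly before applying Bernstein, whereas you absorb it into the maximal inequality $\max_{r,j}|c_{r,j}(0)|\lesssim\sqrt{\log(mn_2/\delta)}$; the effect is identical.
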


With these preparations, we come to the final conclusion.

\begin{theorem}
Suppose the condition in Lemma 1 holds and under initialization as described in (2), then with probability at least $1-\delta$, we have
\[\|z-G^{t}(u)\|_2^2\leq exp(-(\lambda_0+\tilde{\lambda}_0)t) \|z-G^{0}(u)\|_2^2,\]
where
\[m=\Omega\left( \frac{ n_1^4n_2^4\log\left( \frac{n_1n_2}{\delta}\right)\log^3\left( \frac{m}{\delta}\right) }{(min(\lambda_0, \tilde{\lambda}_0))^2(\lambda_0+\tilde{\lambda}_0)^2} \right).\]
\end{theorem}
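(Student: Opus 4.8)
The plan is to follow the now-standard Neural Tangent Kernel argument for over-parameterized networks, adapted to the product structure of the shallow operator $G(u)(y)$. Write $v(t)=z-G^{t}(u)\in\mathbb{R}^{n_1 n_2}$ for the residual. The dynamics computed above give $\frac{d}{dt}v(t)=-\bigl(H(t)+\tilde{H}(t)\bigr)v(t)$, hence
\[ \frac{d}{dt}\|v(t)\|_2^2=-2\,v(t)^{T}\bigl(H(t)+\tilde{H}(t)\bigr)v(t)\le-2\,\lambda_{\min}\!\bigl(H(t)+\tilde{H}(t)\bigr)\,\|v(t)\|_2^2. \]
Since $H(t)$ and $\tilde{H}(t)$ are Gram matrices, they are positive semidefinite, so by Weyl's inequality $\lambda_{\min}(H(t)+\tilde{H}(t))\ge\lambda_{\min}(H(t))+\lambda_{\min}(\tilde{H}(t))$. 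Thus everything reduces to showing that, for a suitable $m$, with the stated probability one has $\lambda_{\min}(H(t))\ge\tfrac12\lambda_0$ and $\lambda_{\min}(\tilde{H}(t))\ge\tfrac12\tilde{\lambda}_0$ for all $t\ge0$; Gronwall's inequality then delivers $\|v(t)\|_2^2\le e^{-(\lambda_0+\tilde{\lambda}_0)t}\|v(0)\|_2^2$, which is exactly the claim (the factor $2$ in the display is what turns the eigenvalue lower bound $\tfrac12(\lambda_0+\tilde{\lambda}_0)$ into the rate $\lambda_0+\tilde{\lambda}_0$).

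To control the Gram matrices along the trajectory I would run a continuity (bootstrap) argument. By Lemma 1, $\lambda_0,\tilde{\lambda}_0>0$, and by Lemma 2, once $m$ is at least of order $n_1^2 n_2^2\min(\lambda_0,\tilde{\lambda}_0)^{-2}\log(n_1 n_2/\delta)$ we have, with probability at least $1-\delta$, $\lambda_{\min}(H(0))\ge\tfrac34\lambda_0$ and $\lambda_{\min}(\tilde{H}(0))\ge\tfrac34\tilde{\lambda}_0$. Fix radii $R,\tilde{R}\in(0,1)$ of order $1/\sqrt{m}$ and let $t^{*}$ be the first time some $\|w_r(t)-w_r(0)\|_2$ exceeds $R$ or some $\|\tilde{w}_{rk}(t)-\tilde{w}_{rk}(0)\|_2$ exceeds $\tilde{R}$. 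For $t<t^{*}$ the hypotheses of Lemma 3 hold (uniformly over the trajectory in those balls), so $\|H(t)-H(0)\|_2\le\|H(t)-H(0)\|_F$ and $\|\tilde{H}(t)-\tilde{H}(0)\|_2\le\|\tilde{H}(t)-\tilde{H}(0)\|_F$ are bounded by the right-hand sides there; choosing $m$ large enough that these stay below $\tfrac14\lambda_0$ and $\tfrac14\tilde{\lambda}_0$ gives, via Weyl's inequality again, $\lambda_{\min}(H(t))\ge\tfrac12\lambda_0$ and $\lambda_{\min}(\tilde{H}(t))\ge\tfrac12\tilde{\lambda}_0$ on $[0,t^{*})$, whence $\frac{d}{dt}\|v(t)\|_2^2\le-(\lambda_0+\tilde{\lambda}_0)\|v(t)\|_2^2$ and $\|v(t)\|_2\le e^{-(\lambda_0+\tilde{\lambda}_0)t/2}\|v(0)\|_2$ on that interval.

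It remains to close the loop, namely to show $t^{*}=\infty$. From the explicit gradients $\partial_{w_r}G(u_i)(y_j)=\tfrac{1}{\sqrt{m}}\bigl[\tfrac{1}{\sqrt{p}}\sum_{k}\tilde{a}_{rk}\sigma(\tilde{w}_{rk}^{T}u_i)\bigr]I\{w_r^{T}y_j\ge0\}\,y_j$ and $\partial_{\tilde{w}_{rk}}G(u_i)(y_j)=\tfrac{1}{\sqrt{mp}}\,\tilde{a}_{rk}\,I\{\tilde{w}_{rk}^{T}u_i\ge0\}\,\sigma(w_r^{T}y_j)\,u_i$, together with $\|u_i\|_2,\|y_j\|_2=\mathcal{O}(1)$ and the fact that the bracket $\tfrac{1}{\sqrt{p}}\sum_{k}\tilde{a}_{rk}\sigma(\tilde{w}_{rk}^{T}u_i)$ and the scalar $\sigma(w_r^{T}y_j)$ remain $\mathcal{O}(\sqrt{\log(\cdot)})$ for $t<t^{*}$ (they are $\mathcal{O}(\sqrt{\log})$ at initialization by concentration of the random signs and of the Gaussian pre-activations, and they move by at most $\mathcal{O}(\sqrt{p}\,\tilde{R})$ and $\mathcal{O}(R)$ respectively), one gets $\|\partial_{w_r}G^{t}(u_i)(y_j)\|_2=\mathcal{O}(1/\sqrt{m})$ and $\|\partial_{\tilde{w}_{rk}}G^{t}(u_i)(y_j)\|_2=\mathcal{O}(1/\sqrt{mp})$. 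Feeding these into the gradient-flow equations and using $\sum_{i,j}|v_{ij}(t)|\le\sqrt{n_1 n_2}\,\|v(t)\|_2$ yields $\|\frac{d}{dt}w_r(t)\|_2\lesssim\sqrt{n_1 n_2/m}\,\|v(t)\|_2$ and $\|\frac{d}{dt}\tilde{w}_{rk}(t)\|_2\lesssim\sqrt{n_1 n_2/(mp)}\,\|v(t)\|_2$; integrating in time, inserting the decay just established and the high-probability bound $\|v(0)\|_2=\mathcal{O}(\sqrt{n_1 n_2\log(\cdot)})$, gives on $[0,t^{*})$
\[ \|w_r(t)-w_r(0)\|_2\lesssim\frac{n_1 n_2\sqrt{\log(\cdot)}}{\sqrt{m}\,(\lambda_0+\tilde{\lambda}_0)}=:R_0,\qquad \|\tilde{w}_{rk}(t)-\tilde{w}_{rk}(0)\|_2\lesssim\frac{n_1 n_2\sqrt{\log(\cdot)}}{\sqrt{mp}\,(\lambda_0+\tilde{\lambda}_0)}=:\tilde{R}_0. \]
Taking $R,\tilde{R}$ slightly above $R_0,\tilde{R}_0$ and $m$ large enough that simultaneously $R_0,\tilde{R}_0<\tfrac12$ and the Lemma 3 bounds evaluated at $R_0,\tilde{R}_0$ lie below $\tfrac14\lambda_0$ and $\tfrac14\tilde{\lambda}_0$ — tracking the polynomial dependence on $n_1,n_2,1/\min(\lambda_0,\tilde{\lambda}_0),1/(\lambda_0+\tilde{\lambda}_0)$ and the logarithmic factors reproduces the stated requirement $m=\Omega\!\bigl(n_1^4 n_2^4\log(n_1 n_2/\delta)\log^3(m/\delta)\,[\min(\lambda_0,\tilde{\lambda}_0)^2(\lambda_0+\tilde{\lambda}_0)^2]^{-1}\bigr)$ — forces the displacements to be strictly below $R,\tilde{R}$ on $[0,t^{*})$. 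Since the displacements are continuous in $t$, the standard continuity argument rules out $t^{*}<\infty$, so $t^{*}=\infty$; hence the eigenvalue bound, and with it the claimed linear convergence, holds for all $t\ge0$. The exceptional events (from Lemma 2, from the two applications of Lemma 3 — whose failure terms $n_2 e^{-mR_0}$ and $n_2 e^{-mp\tilde{R}_0}$ are negligible under the bound on $m$ — and from the initialization-level concentration estimates) are collected into a single $\delta$ by a union bound.

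The main obstacle is the circularity just described: bounding the weight motion requires the residual to be decaying, while the decay requires the weights to have barely moved, and the stopping time $t^{*}$ is what breaks the circle. The one genuinely structure-specific subtlety is that the perturbation estimate for $H$ in Lemma 3 carries the factor $\sqrt{p}\,\tilde{R}$, so one must use that the $\tilde{w}$-gradients scale like $1/\sqrt{mp}$ rather than $1/\sqrt{m}$; this makes $\sqrt{p}\,\tilde{R}_0=\mathcal{O}\!\bigl(n_1 n_2\sqrt{\log(\cdot)}\,/(\sqrt{m}\,(\lambda_0+\tilde{\lambda}_0))\bigr)$ small, so over-parameterization in the width $m$ alone suffices while $p$ may be taken arbitrary. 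The rest — pinning down the precise logarithmic exponent $\log^3(m/\delta)$ and assembling the high-probability bounds on the pre-activations and on $\|v(0)\|_2$ — is routine but somewhat tedious.
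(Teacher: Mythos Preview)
Your proposal is correct and follows essentially the same NTK bootstrap strategy as the paper: the paper establishes the decay (Lemma 6), the two weight-motion bounds (Lemmas 7 and 8), and then closes the circle in Lemma 9 by a minimal-time contradiction argument, which is equivalent to your stopping-time formulation with $t^{*}$. You have also correctly identified the one structure-specific subtlety—the $\sqrt{p}\,\tilde{R}$ factor in the $H$-perturbation versus the $1/\sqrt{mp}$ scaling of the $\tilde{w}$-gradients—which is exactly what makes the bound independent of $p$ in the paper's final requirement on $m$.
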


\textbf{Proof Sketch:} Note that 
\begin{equation}
\begin{aligned}
\frac{d}{dt} \|z-G^{t}(u)\|_2^2 &= - 2(z-G^{t}(u))^T (H(t)+\tilde{H}(t))(z-G^{t}(u)),
\end{aligned}
\end{equation}
thus if $\lambda_{min}(H(t))\geq \lambda_0/2$ and $\lambda_{min}(\tilde{H}(t))\geq \tilde{\lambda}_0/2$, we have
\[\frac{d}{dt} \|z-G^{t}(u)\|_2^2 \leq -(\lambda_0+\tilde{\lambda}_0)\|z-G^{t}(u)\|_2^2.\]
This yields that $\frac{d}{dt} \left( exp((\lambda_0+\tilde{\lambda}_0)t)\|z-G^{t}(u)\|_2^2 \right)\leq 0$, i.e., $exp((\lambda_0+\tilde{\lambda}_0)t)\|z-G^{t}(u)\|_2^2$ is non-increasing, thus we have
\begin{equation}
\|z-G^{t}(u)\|_2^2\leq  exp(-(\lambda_0+\tilde{\lambda}_0)t)\|z-G^{0}(u)\|_2^2.
\end{equation}

On the other hand, roughly speaking, the continous dynamics of $w_r(t)$ and $\tilde{w}_{rk}(t)$, i.e., (3) and (4), show that
\[ \left\|\frac{dw_r(t)}{dt} \right\|_2 \sim \frac{\|z-G^{t}(u)\|_2}{\sqrt{m}}, \ \left\|\frac{d\tilde{w}_{rk}(t)}{dt} \right\|_2 \sim \frac{\|z-G^{t}(u)\|_2}{\sqrt{mp}}.\]

Thus if the prediction decays like (10), we can deduce that
\[  \|w_r(t)-w_r(0)\|_2\sim \frac{1}{\sqrt{m}},\ \|\tilde{w}_{rk}(t)-\tilde{w}_{rk}(0)\|_2\sim \frac{1}{\sqrt{mp}}.\]

Combining this with the stability of the descrete Gram matrices, i.e., Lemma 3, we have $\|H(t)-H^{\infty}\|_2\leq \lambda_0/4$, $\|\tilde{H}(t)-\tilde{H}^{\infty}\|_2\leq \tilde{\lambda}_0/4$ and  $\lambda_{min}(H(t))\geq \lambda_0/2$, $\lambda_{min}(\tilde{H}(t))\geq \tilde{\lambda}_0/2$, when $m$ is sufficiently large. 

From such equivalence, we can arrive at the desired conclusion.

\begin{remark}
The result in Theorem 1 indicates that $m= \Omega(Poly(1/min(\lambda_0, \tilde{\lambda}_0)))$, which may lead to strict requirement for $m$. In fact, from (11), we can see that  $\lambda_{min}(H(t))\geq \lambda_0/2$ or $\lambda_{min}(\tilde{H}(t))\geq \tilde{\lambda}_0/2$ is enough. Thus, $m= \Omega(Poly(1/max(\lambda_0, \tilde{\lambda}_0)))$ is sufficient.
\end{remark}

\section{Discrete Time Analysis}

In this section, we are going to demonstrate that randomly initialized gradient in training shallow neural operators converges to the golbal minimum at a linear rate. Unlike the continuous time case, the discrete time case requires a more refined analysis. In the following, we first present our main result and then outline the proof's approach.

\begin{theorem}
Under the setting of Theorem 1, if we set $\eta=\mathcal{O}\left(\frac{1}{\| H^{\infty}\|_2+\|\tilde{H}^\infty\|_2}\right)$, then with probability at least $1-\delta$, we have
\[ \|z-G^{t}(u)\|_2^2 \leq \left(1-\eta \frac{\lambda_0+\tilde{\lambda}_0}{2} \right)^{t}\|z-G^{0}\|_2^2,\]
where 
\[m=\Omega\left( \frac{ n_1^4n_2^4\log\left( \frac{n_1n_2}{\delta}\right)\log^3\left( \frac{m}{\delta}\right) }{(min(\lambda_0, \tilde{\lambda}_0))^2(\lambda_0+\tilde{\lambda}_0)^2} \right).\]
\end{theorem}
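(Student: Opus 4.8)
The plan is to adapt the standard discrete-time Neural Tangent Kernel argument (as in Du et al.\ for $L^2$ regression) to the product structure of the shallow operator network, proceeding by induction on the iteration count $t$. The induction hypothesis will be that $\|z-G^{s}(u)\|_2^2 \leq (1-\eta(\lambda_0+\tilde\lambda_0)/2)^s\|z-G^{0}(u)\|_2^2$ holds for all $s\leq t$, together with the auxiliary bounds $\|w_r(s)-w_r(0)\|_2 \leq R$ and $\|\tilde w_{rk}(s)-\tilde w_{rk}(0)\|_2 \leq \tilde R$ for suitable radii $R=\mathcal{O}\!\left(\tfrac{\sqrt{n_1 n_2}\,\|z-G^0(u)\|_2}{\sqrt{m}\,(\lambda_0+\tilde\lambda_0)}\right)$ and $\tilde R=\mathcal{O}\!\left(\tfrac{\sqrt{n_1 n_2}\,\|z-G^0(u)\|_2}{\sqrt{mp}\,(\lambda_0+\tilde\lambda_0)}\right)$, chosen so that Lemma 3 guarantees $\lambda_{\min}(H(s)), \lambda_{\min}(\tilde H(s))$ stay above $\lambda_0/2$ and $\tilde\lambda_0/2$. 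The value of $m$ in the statement is precisely what is needed to make these radii compatible with the condition $R,\tilde R\in(0,1)$ and with the hypotheses of Lemmas 2 and 3 (and to absorb the failure probabilities $n_2\exp(-mR)$, $n_2\exp(-mp\tilde R)$ into $\delta$).

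First I would fix the initialization events from Lemma 2, so that $\lambda_{\min}(H(0))\geq \tfrac34\lambda_0$ and $\lambda_{\min}(\tilde H(0))\geq \tfrac34\tilde\lambda_0$, and I would also record the standard bound $\|z-G^0(u)\|_2^2 = \mathcal{O}(n_1 n_2)$ with high probability (each prediction is $\mathcal{O}(1)$ at initialization by the sub-Gaussian concentration of $\sigma(w_r^T y)$ and $\sigma(\tilde w_{rk}^T u)$ under the assumption $\|u_i\|_2,\|y_j\|_2=\mathcal{O}(1)$). Next, for the inductive step, I would write out the one-step update
\begin{equation*}
z-G^{t+1}(u) = (z-G^{t}(u)) - (G^{t+1}(u)-G^{t}(u))
\end{equation*}
and decompose $G^{t+1}(u)-G^{t}(u) = I_1 + I_2$, where $I_1$ is the ``linearized'' term $\eta(H(t)+\tilde H(t))(z-G^{t}(u))$ coming from the first-order change of each neuron, and $I_2$ collects the second-order/activation-pattern-change residual. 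Expanding $\|z-G^{t+1}(u)\|_2^2$ gives the main negative term $-2\eta (z-G^t(u))^T(H(t)+\tilde H(t))(z-G^t(u)) \leq -\eta(\lambda_0+\tilde\lambda_0)\|z-G^t(u)\|_2^2$ from the spectral bound, plus cross terms and $\|I_1\|_2^2, \|I_2\|_2^2$ that must all be shown to be $\mathcal{O}(\eta(\lambda_0+\tilde\lambda_0))\|z-G^t(u)\|_2^2$ times a factor that the choice of $\eta$ and $m$ makes smaller than, say, $1/4$.

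The main obstacle is controlling $\|I_2\|_2$, the residual from neurons whose ReLU activation pattern flips between steps $t$ and $t+1$, and doing so through the \emph{product} structure $\beta_r(u)\,\tau_r(y)$: unlike the scalar-network case, a change in $w_r$ affects only the $\tau$ factor while a change in $\tilde w_{rk}$ affects only the $\beta$ factor, but the branch factor $\tfrac{1}{\sqrt p}\sum_k \tilde a_{rk}\sigma(\tilde w_{rk}^T u)$ has magnitude $\mathcal{O}(\sqrt{\log(mp/\delta)})$ rather than $\mathcal{O}(1)$, so the per-neuron bounds pick up extra logarithmic factors (this is exactly why $\log^3(m/\delta)$ appears in the bound on $m$). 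I would handle this by the usual trick: the number of neurons whose pattern with respect to a fixed $y_j$ can flip within radius $R$ is, with high probability, $\mathcal{O}(mR)$ (anti-concentration of $w_r(0)^T y_j$), and each such flip contributes at most $\mathcal{O}(\eta\|z-G^t(u)\|_2 \cdot \sqrt{n_1 n_2}/\sqrt{m}\cdot \sqrt{\log(mp/\delta)})$ to $I_2$; summing and using $R \sim \sqrt{n_1n_2}\|z-G^0(u)\|_2/(\sqrt m(\lambda_0+\tilde\lambda_0))$ shows $\|I_2\|_2 = \mathcal{O}(\eta\,(\text{small})\,\|z-G^t(u)\|_2)$. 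The analogous estimate for the $\tilde w_{rk}$-flips against the trunk factor is symmetric but cleaner since $\sigma(w_r^T y)=\mathcal{O}(1)$. Finally, once the inductive decay is established, I would close the loop by bounding $\|w_r(t+1)-w_r(0)\|_2 \leq \eta\sum_{s=0}^{t}\|\partial L/\partial w_r\|_2 \leq \eta\sum_{s=0}^{t} \mathcal{O}(\sqrt{n_1 n_2}\|z-G^s(u)\|_2/\sqrt m) \leq R$ using the geometric sum of $(1-\eta(\lambda_0+\tilde\lambda_0)/2)^{s/2}$, and likewise for $\tilde w_{rk}$, which re-establishes the auxiliary hypotheses for step $t+1$ and completes the induction.
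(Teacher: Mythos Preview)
Your overall plan follows the paper's argument closely: induction on weight movement plus loss decay, Lemma~3 for kernel stability, anti-concentration to bound activation flips, and a geometric sum to close the weight bounds. The term you call $I_2$ is exactly the paper's residual $I(t)$ from Lemma~4, and your radii $R,\tilde R$ match the paper's $R',\tilde R'$ in Condition~1 up to logarithms.

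There is, however, one substantive difference that prevents your plan from proving the theorem \emph{as stated}. You propose to expand $\|z-G^{t+1}(u)\|_2^2$ fully and bound $\|I_1\|_2^2 = \eta^2\|(H(t)+\tilde H(t))(z-G^t(u))\|_2^2$ separately by $\tfrac14\eta(\lambda_0+\tilde\lambda_0)\|z-G^t(u)\|_2^2$. That forces $\eta \lesssim (\lambda_0+\tilde\lambda_0)/\|H(t)+\tilde H(t)\|_2^2$, which is precisely the Du et~al.\ learning-rate restriction the paper goes out of its way to avoid (see the paragraph preceding Lemma~4). The paper instead keeps the linear piece intact as $[I-\eta(H(t)+\tilde H(t))](z-G^t(u))$ and bounds its norm directly by $(1-\eta(\lambda_0+\tilde\lambda_0)/2)\|z-G^t(u)\|_2$, using that $I-\eta(H(t)+\tilde H(t))$ is positive semidefinite; this is what allows the weaker hypothesis $\eta=\mathcal{O}(1/(\|H^\infty\|_2+\|\tilde H^\infty\|_2))$ in the statement. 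Your expansion throws away the cancellation between the cross term and $\|I_1\|_2^2$.

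Two minor corrections: the trunk factor satisfies $|\sigma(w_r^T y_j)|=\mathcal{O}\big(\sqrt{\log(mn_2/\delta)}\big)$ uniformly in $r,j$, not $\mathcal{O}(1)$ (this is the bound $B$ in the paper's Condition~1, and it enters the $\tilde w_{rk}$-flip estimate symmetrically to how the branch factor enters the $w_r$-flip estimate); and $\|z-G^0(u)\|_2^2=\mathcal{O}(n_1n_2\log(n_1n_2/\delta))$ rather than $\mathcal{O}(n_1n_2)$. Neither changes the structure, but both feed into the $\log^3(m/\delta)$ factor.
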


For the $L^2$ regression problems, \cite{7} has demonstrated that if the learning rate $\eta=\mathcal{O}(\lambda_0/n^2)$, then randomly initialized gradient descent converges to a globally optimal solution at a linear convergence rate when $m$ is large enough. The requirement of $\eta$ is derived from the decomposition for the residual in the $(k+1)$-th iteration, i.e.,
\begin{equation*}
	y-u(k+1)=y-u(k)-(u(k+1)-u(k)),	
\end{equation*}
where $u(t)$ is the prediction vector at $t$-iteration under the shallow neural network and $y$ is the true prediction. Although using the method from \cite{7} can also yield linear convergence of gradient descent in training shallow neural operators, the requirements for the learning rate $\eta$ would be very stringent due to the dependency on $\lambda_0,\tilde{\lambda}_0$ and $n$. Thus, instead of decomposing the residual into the two terms as above, we write it as follows, which serves as a recursion formula.

\begin{lemma}
For all $t\in \mathbb{N}$, we have
\[z-G^{t+1}(u)= \left[I-\eta\left(H(t)+\tilde{H}(t)\right)\right] \left(z-G^{t}(u)\right)-I(t),\]
where $I(t)\in \mathbb{R}^{n_1n_2}$ is the residual term. We can divide it into $n_1$ blocks, each block belongs to $\mathbb{R}^n_2$ and the $j$-th component of $i$-th block is defined as
\begin{equation}
I_{i,j}(t)= G^{t+1}(u_i)(y_j) - G^{t}(u_i)(y_j) -\left\langle \frac{\partial G^{t}(u_i)(y_j)}{\partial w},  w(t+1)-w(t) \right\rangle  - \left\langle \frac{\partial G^{t}(u_i)(y_j)}{\partial \tilde{w} },  \tilde{w}(t+1)-\tilde{w}(t) \right\rangle.
\end{equation} 
\end{lemma}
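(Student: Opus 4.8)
The assertion is a purely algebraic rewriting of a single gradient-descent step, so the plan is to track the change of each prediction and show that, once the update rule is substituted, its first-order part reproduces exactly the block Gram matrices $H(t),\tilde H(t)$, with all remaining terms collected into $I(t)$. First I would start from the trivial identity $z-G^{t+1}(u)=\bigl(z-G^{t}(u)\bigr)-\bigl(G^{t+1}(u)-G^{t}(u)\bigr)$ and argue componentwise. For fixed $i\in[n_1]$ and $j\in[n_2]$, the definition (12) of $I_{i,j}(t)$ is nothing but the rearranged identity
\begin{equation*}
G^{t+1}(u_i)(y_j)-G^{t}(u_i)(y_j)=\Bigl\langle \tfrac{\partial G^{t}(u_i)(y_j)}{\partial w},\,w(t+1)-w(t)\Bigr\rangle+\Bigl\langle \tfrac{\partial G^{t}(u_i)(y_j)}{\partial \tilde w},\,\tilde w(t+1)-\tilde w(t)\Bigr\rangle+I_{i,j}(t),
\end{equation*}
where $w$ and $\tilde w$ denote the concatenations of all $w_r$ and all $\tilde w_{rk}$, so this first step requires no argument beyond unpacking notation.

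Next I would substitute the gradient-descent updates. Using $w_r(t+1)-w_r(t)=-\eta\sum_{i_1=1}^{n_1}\sum_{j_1=1}^{n_2}\tfrac{\partial G^{t}(u_{i_1})(y_{j_1})}{\partial w_r}\bigl(G^{t}(u_{i_1})(y_{j_1})-z_{j_1}^{i_1}\bigr)$ and the analogous identity for $\tilde w_{rk}$, and interchanging the finite sums over $r,k$ with the sums over $i_1,j_1$, the two inner products above become
\begin{equation*}
-\eta\sum_{i_1=1}^{n_1}\sum_{j_1=1}^{n_2}\Biggl[\sum_{r=1}^m\Bigl\langle \tfrac{\partial G^{t}(u_i)(y_j)}{\partial w_r},\tfrac{\partial G^{t}(u_{i_1})(y_{j_1})}{\partial w_r}\Bigr\rangle+\sum_{r=1}^m\sum_{k=1}^p\Bigl\langle \tfrac{\partial G^{t}(u_i)(y_j)}{\partial \tilde w_{rk}},\tfrac{\partial G^{t}(u_{i_1})(y_{j_1})}{\partial \tilde w_{rk}}\Bigr\rangle\Biggr]\bigl(G^{t}(u_{i_1})(y_{j_1})-z_{j_1}^{i_1}\bigr).
\end{equation*}
By the definitions in Section 3, the bracketed quantity is exactly the entry of $H(t)+\tilde H(t)$ sitting in block-row $i$, block-column $i_1$, and position $(j,j_1)$ within that block, i.e.\ the $\bigl((i,j),(i_1,j_1)\bigr)$ component under the Kronecker indexing of $\mathbb{R}^{n_1n_2}$. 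Hence the displayed expression equals $\eta\bigl[(H(t)+\tilde H(t))(z-G^{t}(u))\bigr]_{i,j}$.

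Reassembling over all $i,j$ gives $G^{t+1}(u)-G^{t}(u)=\eta\bigl(H(t)+\tilde H(t)\bigr)\bigl(z-G^{t}(u)\bigr)+I(t)$, and therefore
\begin{equation*}
z-G^{t+1}(u)=\bigl(z-G^{t}(u)\bigr)-\eta\bigl(H(t)+\tilde H(t)\bigr)\bigl(z-G^{t}(u)\bigr)-I(t)=\bigl[I-\eta\bigl(H(t)+\tilde H(t)\bigr)\bigr]\bigl(z-G^{t}(u)\bigr)-I(t),
\end{equation*}
which is the claimed recursion. I do not anticipate a genuine obstacle in this lemma: its entire content is that the nonlinearity of $G$ in the weights is quarantined inside the remainder $I(t)$. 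The only place where care is needed is the bookkeeping --- keeping the $n_1\times n_1$ block / $n_2\times n_2$ intra-block layout consistent and not conflating the index that labels the input functions $u_i$ with the index that labels the evaluation points $y_j$. The substantive work, namely bounding $\|I(t)\|_2$ (so that the factor $I-\eta(H(t)+\tilde H(t))$ governs the dynamics) and iterating the recursion to obtain the linear rate, belongs to the proof of Theorem 2 rather than to this lemma.
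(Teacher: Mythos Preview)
Your proposal is correct and follows essentially the same route as the paper: decompose $G^{t+1}(u_i)(y_j)-G^{t}(u_i)(y_j)$ into its first-order part plus the remainder $I_{i,j}(t)$, plug in the gradient-descent updates for $w_r$ and $\tilde w_{rk}$, recognize the resulting sums of inner products as the $(i,j;i_1,j_1)$ entries of $H(t)+\tilde H(t)$, and then rearrange $z-G^{t+1}(u)=(z-G^t(u))-(G^{t+1}(u)-G^t(u))$ into the stated recursion. The paper's proof is identical up to the order in which the pieces are written down.
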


Just as in the case of $L^2$ regression, we prove our conclusion by induction. From the recursive formula above, it can be seen that both the estimation of $H(t)$ and $\tilde{H}(t)$, as well as the estimation of the residual $I(t)$, depend on $\|w_r(t)-w_r(0)\|_2$ and $\|\tilde{w}_{rk}(t)-\tilde{w}_{rk}(0)\|_2$. Therefore, our inductive hypothesis is the following differences between weights and their initializations.

\begin{condition}
At the $s$-th iteration, we have
\begin{equation}
\|w_r(s)-w_r(0)\|_2 \lesssim \frac{ \sqrt{n_1n_2}\|z-G^{0}(u)\|_2}{\sqrt{m}(\lambda_0+\tilde{\lambda}_0)}\sqrt{ \log \left(\frac{m}{\delta}\right)}:=R^{'}
\end{equation}
and
\begin{equation}
\|\tilde{w}_{rk}(s)-\tilde{w}_{rk}(0)\|_2\lesssim \frac{\sqrt{n_1n_2}\|z-G^{0}(u)  \|_2}{\sqrt{m} \sqrt{p}(\lambda_0+\tilde{\lambda}_0)}\sqrt{ \log \left(\frac{m}{\delta}\right)}:=\tilde{R}^{'}
\end{equation}
and  $|w_r(s)^Ty_j|\leq B$ for all $r\in [m]$ and $j\in [n_2]$, where $B=2\sqrt{\log \left(\frac{mn_2}{\delta} \right) }$.	
\end{condition}

This condition can lead to the linear convergence of gradient descent, i.e., result in Theorem 2.

\begin{corollary}
If Condition 1 holds for $s=0,\cdots, t$, then we have that \[\|z-G^{s}(u)\|_2^2\leq \left(1-\eta\frac{\lambda_0+\tilde{\lambda}_0  }{2}\right)^s \|z-G^{0}(u)\|_2^2\]
holds for $s=0,\cdots, t$, where $m$ is required to satisfy that 
\[m=\Omega\left( \frac{ n_1^4n_2^4\log\left( \frac{n_1n_2}{\delta}\right)\log^3\left( \frac{m}{\delta}\right) }{(min(\lambda_0, \tilde{\lambda}_0))^2(\lambda_0+\tilde{\lambda}_0)^2} \right).\]
\end{corollary}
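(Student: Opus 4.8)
\emph{Proof plan.} I would prove the estimate by induction on $s$, using the recursion of Lemma 4 as the driving identity, and take a union bound over the finitely many high-probability events below so that everything holds with probability at least $1-\delta$. The base case $s=0$ is an equality. Assume the claimed bound has been established for every $s\le\ell$ with $\ell\le t-1$; I want it for $s=\ell+1$. Abbreviating $A(\ell):=H(\ell)+\tilde{H}(\ell)$, Lemma 4 gives $z-G^{\ell+1}(u)=\big(I-\eta A(\ell)\big)\big(z-G^{\ell}(u)\big)-I(\ell)$, hence
\[\|z-G^{\ell+1}(u)\|_2\le \|I-\eta A(\ell)\|_2\,\|z-G^{\ell}(u)\|_2+\|I(\ell)\|_2.\]
So it suffices to establish two sub-estimates: (i) $\|I-\eta A(\ell)\|_2\le 1-\tfrac34\eta(\lambda_0+\tilde{\lambda}_0)$, and (ii) $\|I(\ell)\|_2\le\tfrac14\eta(\lambda_0+\tilde{\lambda}_0)\,\|z-G^{\ell}(u)\|_2$. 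Together they yield $\|z-G^{\ell+1}(u)\|_2\le\big(1-\tfrac12\eta(\lambda_0+\tilde{\lambda}_0)\big)\|z-G^{\ell}(u)\|_2$; squaring, using $(1-x)^2\le 1-x$ for $x\in[0,1]$, and invoking the inductive hypothesis at $s=\ell$ then closes the induction.

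\emph{Step 1 (the linear part contracts).} By hypothesis Condition 1 holds at iteration $\ell$, so $\|w_r(\ell)-w_r(0)\|_2\le R'$ and $\|\tilde{w}_{rk}(\ell)-\tilde{w}_{rk}(0)\|_2\le\tilde{R}'$; differentiating $G$ directly shows that $H(\ell),\tilde{H}(\ell)$ are exactly the matrices $H,\tilde{H}$ of Lemma 3 evaluated at these weights. For $m$ in the stated range the radii $R',\tilde{R}'$ (with $\|z-G^{0}(u)\|_2=\mathcal{O}(\sqrt{n_1n_2})$ up to logarithmic factors, with high probability) are small enough that the right-hand sides in Lemma 3 are at most $\lambda_0/8$ and $\tilde{\lambda}_0/8$; combined with Lemma 2 this gives $\|H(\ell)-H^{\infty}\|_2\le\lambda_0/4$ and $\|\tilde{H}(\ell)-\tilde{H}^{\infty}\|_2\le\tilde{\lambda}_0/4$. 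Since $H(\ell),\tilde{H}(\ell)$ are Gram matrices they are positive semidefinite, so $A(\ell)\succeq 0$ with $\lambda_{\min}(A(\ell))\ge\lambda_{\min}(H(\ell))+\lambda_{\min}(\tilde{H}(\ell))\ge\tfrac34(\lambda_0+\tilde{\lambda}_0)$ and $\|A(\ell)\|_2\le\|H^{\infty}\|_2+\|\tilde{H}^{\infty}\|_2+\tfrac14(\lambda_0+\tilde{\lambda}_0)\le\tfrac32(\|H^{\infty}\|_2+\|\tilde{H}^{\infty}\|_2)$. Choosing the constant hidden in $\eta=\mathcal{O}\big(1/(\|H^{\infty}\|_2+\|\tilde{H}^{\infty}\|_2)\big)$ so that $\eta\|A(\ell)\|_2\le 1$ forces $0\preceq I-\eta A(\ell)\preceq\big(1-\tfrac34\eta(\lambda_0+\tilde{\lambda}_0)\big)I$, which is sub-estimate (i).

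\emph{Step 2 (the residual is negligible).} From the gradient step together with $\|\partial_{w_r}G^{\ell}(u_i)(y_j)\|_2\lesssim 1/\sqrt m$ and $\|\partial_{\tilde{w}_{rk}}G^{\ell}(u_i)(y_j)\|_2\lesssim 1/\sqrt{mp}$ (the branch prefactor $\tfrac{1}{\sqrt p}\sum_k\tilde{a}_{rk}\sigma(\tilde{w}_{rk}^Tu_i)$ is $\mathcal{O}(1)$ with high probability under Condition 1, and $\|u_i\|_2,\|y_j\|_2=\mathcal{O}(1)$), one obtains $\|w_r(\ell+1)-w_r(\ell)\|_2\lesssim\eta\sqrt{n_1n_2}\,\|z-G^{\ell}(u)\|_2/\sqrt m$ and $\|\tilde{w}_{rk}(\ell+1)-\tilde{w}_{rk}(\ell)\|_2\lesssim\eta\sqrt{n_1n_2}\,\|z-G^{\ell}(u)\|_2/\sqrt{mp}$. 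Expanding $G^{\ell+1}(u_i)(y_j)-G^{\ell}(u_i)(y_j)$ and subtracting the two inner products in the definition of $I_{i,j}(\ell)$, the residual splits into the cross term coming from the product structure $\sigma(\tilde{w}_{rk}^Tu)\sigma(w_r^Ty)$, which is bilinear in $(\Delta w_r,\Delta\tilde{w}_{rk})$ and hence $\mathcal{O}(\eta^2)$, plus the contributions of neurons whose ReLU pattern flips in the branch layer ($I\{\tilde{w}_{rk}(\ell+1)^Tu_i\ge 0\}\ne I\{\tilde{w}_{rk}(\ell)^Tu_i\ge 0\}$) or the trunk layer ($I\{w_r(\ell+1)^Ty_j\ge 0\}\ne I\{w_r(\ell)^Ty_j\ge 0\}$). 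Gaussian anti-concentration of $w_r(0)^Ty_j$ and $\tilde{w}_{rk}(0)^Tu_i$ against the small radii $R',\tilde{R}'$ bounds the number of flipped trunk neurons by $\mathcal{O}(mR')$ and of flipped branch neurons by $\mathcal{O}(mp\tilde{R}')$ with high probability, and each flip contributes at most $\mathcal{O}(1/\sqrt m)$ times the relevant step length, because at a flip both pre-activations lie within one step of $0$. Summing over the $n_1n_2$ coordinates and over neurons, both pieces are $\mathcal{O}\!\left(\frac{(n_1n_2)^2\sqrt{\log(m/\delta)}}{\sqrt m\,(\lambda_0+\tilde{\lambda}_0)}\right)\eta\,\|z-G^{\ell}(u)\|_2$, which is $\le\tfrac14\eta(\lambda_0+\tilde{\lambda}_0)\|z-G^{\ell}(u)\|_2$ precisely because $m=\Omega\!\left(\frac{n_1^4n_2^4\log(n_1n_2/\delta)\log^3(m/\delta)}{(\min(\lambda_0,\tilde{\lambda}_0))^2(\lambda_0+\tilde{\lambda}_0)^2}\right)$. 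Combining (i) and (ii) completes the inductive step.

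\emph{Main difficulty.} The delicate part is Step 2. Unlike the single-hidden-layer $L^2$ regression treated in \cite{7}, here $G$ is a \emph{product} of a branch factor and a trunk factor, so perturbing $W$ and $\tilde{W}$ simultaneously creates genuine second-order cross terms, and one must control ReLU activation changes in two layers at once rather than one. Making these estimates tight enough that the residual stays below $\tfrac14\eta(\lambda_0+\tilde{\lambda}_0)\|z-G^{\ell}(u)\|_2$, while checking that the over-parameterization requirement it forces is no stronger than the one stated, is where the real work lies; everything else is the standard NTK induction.
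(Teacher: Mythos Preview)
Your proposal is correct and follows essentially the same approach as the paper: induction on $s$ via the recursion of Lemma~4, stability of the Gram matrices through Lemmas~2 and~3 under Condition~1, and a residual bound driven by counting ReLU sign flips in each layer via Gaussian anti-concentration. The only cosmetic differences are that the paper expands $\|z-G^{s+1}(u)\|_2^2$ directly rather than applying the triangle inequality to $\|z-G^{s+1}(u)\|_2$ and then squaring, and that the paper isolates your Step~2 as a separate statement (Lemma~5) yielding $\|I(s)\|_2\lesssim \bar R\,\|z-G^{s}(u)\|_2$ with $\bar R=\frac{\eta(n_1n_2)^{3/2}\|z-G^{0}(u)\|_2}{\sqrt{m}(\lambda_0+\tilde{\lambda}_0)}\log^{3/2}(m/\delta)$.
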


\textbf{Proof Sketch:} Under the setting of over-parameterization, we can show that the weights $w_r(s)$, $\tilde{w}_{rk}(s)$ stay close to the initialization $w_r(0)$, $\tilde{w}_{rk}(0)$. Thus, with the stability of the discrete Gram matrices, i.e., Lemma 3, we can deduce that $\lambda_{min}(H(s))\geq \lambda_0/2 $ and $\lambda_{min}(\tilde{H}(s))\geq \tilde{\lambda}_0/2 $. Then combining with the Lemma 4, we have

\begin{equation}
\begin{aligned}
&\|z-G^{s+1}(u)\|_2^2 \\ 
&=\left\| \left(I-\eta\left(H(s)+\tilde{H}(s)\right)\right) (z-G^{s}(u)) \right\|_2^2+\|I(s)\|_2^2-2\left\langle \left(I-\eta(H(s)+\tilde{H}(s))\right) (z-G^{s}(u)), I(s) \right\rangle \\
&\leq \left(1-\eta\frac{\lambda_0+\tilde{\lambda}_0 }{2}\right)^2 \|z-G^{s}(u)\|_2^2+\|I(s)\|_2^2+2\left(1-\eta\frac{\lambda_0+\tilde{\lambda}_0 }{2}\right)\|z-G^{s}(u)\|_2 \|I(s)\|_2,
\end{aligned}
\end{equation}
where the inequality requires that $I-\eta(H(s)+\tilde{H}(s))$ is positive definite. Since $\|H(s)-H^{\infty}\|_2=\mathcal{O}(1/\sqrt{m})$ and $\|\tilde{H}(s)-\tilde{H}^{\infty}\|_2=\mathcal{O}(1/\sqrt{m})$, $\eta=\mathcal{O}\left(\frac{1}{\|H^{\infty}\|_2+\|\tilde{H}^{\infty}\|_2}\right)$ is sufficient to ensure that $I-\eta(H(s)+\tilde{H}(s))$ is positive definite, when $m$ is large enough.

From (14), if $\|I(s)\|_2\lesssim \eta(\lambda_0+\tilde{\lambda}_0) \|z-G^{s}(u)\|_2$, which can be obtained from the following lemma, we can obtain that
\[ \|z-G^{s+1}(u)\|_2^2\leq \left(1-\eta\frac{\lambda_0+\tilde{\lambda}_0}{2}\right)\|z-G^{s}(u)\|_2 ,\]
which directly yields the desired conclusion.

\begin{lemma}
Under Condition 1, for $s=0,\cdots, t-1$, we have
\begin{equation}
	\|I(s)\|_2\lesssim \bar{R}\|z-G^s(u)\|_2,
\end{equation}
where 
\begin{equation}
	\bar{R}= \frac{\eta (n_1n_2)^{\frac{3}{2}}\|z-G^{0}(u)\|_2}{\sqrt{m}(\lambda_0+\tilde{\lambda}_0)} \log^{\frac{3}{2}}\left( \frac{m}{\delta}\right).
\end{equation}
\end{lemma}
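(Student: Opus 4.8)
Write $\phi_r^s(u):=\frac1{\sqrt p}\sum_{k=1}^p\tilde a_{rk}\sigma(\tilde w_{rk}(s)^Tu)$, so that $G^s(u_i)(y_j)=\frac1{\sqrt m}\sum_{r=1}^m\phi_r^s(u_i)\sigma(w_r(s)^Ty_j)$; abbreviate $\psi_r^s:=\sigma(w_r(s)^Ty_j)$ (dependence on $y_j$ suppressed) and set $\Delta w_r:=w_r(s+1)-w_r(s)$, $\Delta\tilde w_{rk}:=\tilde w_{rk}(s+1)-\tilde w_{rk}(s)$. The plan is to expand the definition (11) of $I_{i,j}(s)$ by the bilinear telescoping identity $\phi_r^{s+1}\psi_r^{s+1}-\phi_r^s\psi_r^s=(\phi_r^{s+1}-\phi_r^s)\psi_r^s+\phi_r^s(\psi_r^{s+1}-\psi_r^s)+(\phi_r^{s+1}-\phi_r^s)(\psi_r^{s+1}-\psi_r^s)$ and then to subtract the two first-order inner products appearing in (11). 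Since $\tilde w_{rk}(s+1)^Tu_i-\tilde w_{rk}(s)^Tu_i=u_i^T\Delta\tilde w_{rk}$ and $w_r(s+1)^Ty_j-w_r(s)^Ty_j=y_j^T\Delta w_r$, this writes $I_{i,j}(s)=A_{i,j}+B_{i,j}+C_{i,j}$, where $B_{i,j}=\frac1{\sqrt m}\sum_r\phi_r^s(u_i)\varepsilon_r^\psi$ collects the pointwise ReLU-linearization errors $\varepsilon_r^\psi:=\sigma(w_r(s)^Ty_j+y_j^T\Delta w_r)-\sigma(w_r(s)^Ty_j)-\sigma'(w_r(s)^Ty_j)y_j^T\Delta w_r$ of the outer layer; $A_{i,j}=\frac1{\sqrt m}\sum_r\psi_r^s\varepsilon_r^\phi$ collects the analogous inner-layer errors $\varepsilon_r^\phi:=\frac1{\sqrt p}\sum_k\tilde a_{rk}\bigl[\sigma(\tilde w_{rk}(s)^Tu_i+u_i^T\Delta\tilde w_{rk})-\sigma(\tilde w_{rk}(s)^Tu_i)-\sigma'(\tilde w_{rk}(s)^Tu_i)u_i^T\Delta\tilde w_{rk}\bigr]$; and $C_{i,j}=\frac1{\sqrt m}\sum_r(\phi_r^{s+1}(u_i)-\phi_r^s(u_i))(\psi_r^{s+1}-\psi_r^s)$ is the genuinely second-order cross term.

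First I would bound the one-step increments. Reading off the gradients, $\Delta w_r=-\frac{\eta}{\sqrt m}\sum_{i,j}(G^s(u_i)(y_j)-z_j^i)\phi_r^s(u_i)\sigma'(w_r(s)^Ty_j)y_j$ and $\Delta\tilde w_{rk}=-\frac{\eta}{\sqrt{mp}}\tilde a_{rk}\sum_{i,j}(G^s(u_i)(y_j)-z_j^i)\sigma'(\tilde w_{rk}(s)^Tu_i)u_i\,\sigma(w_r(s)^Ty_j)$. A Cauchy--Schwarz over the index pair $(i,j)$, together with $\|u_i\|_2,\|y_j\|_2=\mathcal O(1)$, the bound $|w_r(s)^Ty_j|\le B$ from Condition 1, and the polylogarithmic bound $|\phi_r^s(u_i)|\lesssim\sqrt{\log(mpn_1/\delta)}$ (Hoeffding's inequality for the Rademacher sum $\phi_r^0(u_i)$ at initialization, plus $|\phi_r^s(u_i)-\phi_r^0(u_i)|\le\sqrt p\,\tilde R'\lesssim R'$), gives $\|\Delta w_r\|_2\lesssim\frac{\eta\sqrt{n_1n_2}}{\sqrt m}\sqrt{\log(mpn_1/\delta)}\,\|z-G^s(u)\|_2$ and $\|\Delta\tilde w_{rk}\|_2\lesssim\frac{\eta\sqrt{n_1n_2}}{\sqrt{mp}}B\,\|z-G^s(u)\|_2$. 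Using $\eta=\mathcal O(1/(\|H^\infty\|_2+\|\tilde H^\infty\|_2))\le\mathcal O(1/(\lambda_0+\tilde\lambda_0))$ and, as holds throughout the induction underlying Theorem 2 by Corollary 1, $\|z-G^s(u)\|_2\le\|z-G^0(u)\|_2$, these increments are in particular $\lesssim R'$ and $\lesssim\tilde R'$.

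Then I would estimate the three terms. For $B$: $\varepsilon_r^\psi=0$ unless the sign of $w_r^Ty_j$ changes between iterations $s$ and $s+1$, in which case $|\varepsilon_r^\psi|\le|y_j^T\Delta w_r|\le\|y_j\|_2\|\Delta w_r\|_2$; moreover such a change forces $|w_r(0)^Ty_j|\le\|y_j\|_2(\|w_r(s)-w_r(0)\|_2+\|\Delta w_r\|_2)\lesssim R'$, an event depending only on the Gaussian initialization. Gaussian anti-concentration gives $\Pr[|w_r(0)^Ty_j|\lesssim R']\lesssim R'$, so a Bernstein bound (union over $j$) shows that with high probability at most $\lesssim mR'$ neurons flip for any given $y_j$; hence $|B_{i,j}|\lesssim\frac1{\sqrt m}(mR')\sqrt{\log(mpn_1/\delta)}\max_r\|\Delta w_r\|_2$, and summing over $(i,j)$ (a factor $\sqrt{n_1n_2}$) and substituting yields $\|B\|_2\lesssim\bar R\,\|z-G^s(u)\|_2$ --- the product $R'\cdot\max_r\|\Delta w_r\|_2$ produces $\eta(n_1n_2)^{3/2}\|z-G^0(u)\|_2/(\sqrt m(\lambda_0+\tilde\lambda_0))$ and the logarithms combine to $\log^{3/2}(m/\delta)$. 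For $A$: analogously, each $k$-summand of $\varepsilon_r^\phi$ vanishes unless $\tilde w_{rk}^Tu_i$ flips; the number of such $k$ (for fixed $r,i$) is $\lesssim p\tilde R'$ by the same anti-concentration and Bernstein argument, and each contributes at most $\|u_i\|_2\|\Delta\tilde w_{rk}\|_2$, so combined with $|\psi_r^s|\le B$ this gives $\|A\|_2\lesssim\frac1{\sqrt p}\bar R\,\|z-G^s(u)\|_2\le\bar R\,\|z-G^s(u)\|_2$. For $C$: use $|\phi_r^{s+1}(u_i)-\phi_r^s(u_i)|\le\sqrt p\max_k\|\Delta\tilde w_{rk}\|_2$ and $|\psi_r^{s+1}-\psi_r^s|\le\|y_j\|_2\|\Delta w_r\|_2$, so the sum over $r$ with the $1/\sqrt m$ prefactor is $\mathcal O(1/\sqrt m)$ times a quantity quadratic in $\|z-G^s(u)\|_2$, which I linearize via $\|z-G^s(u)\|_2\le\|z-G^0(u)\|_2$ and in which the leftover factor $\eta\cdot\mathrm{polylog}$ is absorbed using $\eta\lesssim 1/(\lambda_0+\tilde\lambda_0)$, giving $\|C\|_2\lesssim\bar R\,\|z-G^s(u)\|_2$. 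Adding the three bounds gives the claimed estimate.

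I expect the main obstacle to be the control of the activation-pattern changes. Because $\Delta w_r$ and $\Delta\tilde w_{rk}$ depend on the whole optimization trajectory and on the data, they are \emph{not} independent of the Gaussian initialization, so one cannot bound the number of ``flipping'' neurons directly; the device of replacing ``neuron $r$ flips for $y_j$'' by the purely initialization-dependent anti-concentration event $\{|w_r(0)^Ty_j|\lesssim R'\}$ (and likewise $\{|\tilde w_{rk}(0)^Tu_i|\lesssim\tilde R'\}$), and of checking that a step confined to the radii $R',\tilde R'$ of Condition 1 can flip only such neurons, is the heart of the argument. The two-layer composed structure makes this strictly harder than in the $L^2$-regression case: flips must be handled separately at the outer layer ($w_r$, entering $\psi_r$) and the inner layer ($\tilde w_{rk}$, entering $\phi_r$), and the $1/\sqrt m$ and $1/\sqrt p$ normalizations must be tracked carefully so that each of $\|A\|_2$, $\|B\|_2$, $\|C\|_2$ lands at exactly the order $\bar R\,\|z-G^s(u)\|_2$, without losing a power of $n_1$, $n_2$, $m$, or $p$.
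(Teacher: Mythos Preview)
Your proposal is correct and follows essentially the same approach as the paper. The paper writes $I_{i,j}(s)=\sum_r\bigl(\tilde I_{i,j}^r(s)+\bar I_{i,j}^r(s)\bigr)$, where $\tilde I$ is exactly your $A$ (inner-layer ReLU linearization error times $\psi_r^s$), and then splits $\bar I$ into three pieces by writing $\phi_r^{s+1}=\phi_r^0+(\phi_r^{s+1}-\phi_r^0)$ and $\phi_r^s=\phi_r^0+(\phi_r^s-\phi_r^0)$; this yields $(\phi_r^{s+1}-\phi_r^0)(\psi_r^{s+1}-\psi_r^s)$, $-(\phi_r^s-\phi_r^0)\sigma'(w_r(s)^Ty_j)y_j^T\Delta w_r$, and $\phi_r^0\varepsilon_r^\psi$, which is just an algebraic rearrangement of your $B+C$ centered at the initialization $\phi_r^0$ rather than at $\phi_r^s$. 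The estimates for the one-step increments, the anti-concentration/Bernstein device to decouple activation flips from the trajectory, and the final power counting are all identical to yours.
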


\section{Physics-Informed Neural Operators}

Let $\Gamma$ be a bounded open subset of $\mathbb{R}^d$, in this section, we consider the PDE with following form:
\begin{equation}
\begin{aligned}
	& \mathcal{L} u(y) = f(y), \ y\in (0, T)\times \Gamma \\
	& u(\tilde{y}) = g(\tilde{y}), \ \tilde{y}\in \{0\}\times \Gamma \cup [0,T]\times \partial \Gamma,
\end{aligned}
\end{equation}
where $y=(y_0,y_1\cdots, y_d)$, $y_0\in (0,T)$, $(y_1\cdots, y_d)\in \bar{\Gamma}$ and $\mathcal{L}$ is a differential operator, 
\[ \mathcal{L} u(y)=\frac{\partial u}{\partial y_0}(y)-\sum\limits_{i=1}^d \frac{\partial^2 u}{\partial y_i^2}(y)+u(y).\]

In this section, we consider the shallow neural operato with following form
\[G(u)(y) = \frac{1}{\sqrt{m}}\sum\limits_{r=1}^m \left[\frac{1}{\sqrt{p}} \sum\limits_{k=1}^p \tilde{a}_{rk} \sigma(\tilde{w}_{rk}^T u) \right]\sigma_3(w_r^T y),\]
where $\sigma_3(\cdot), \sigma_2(\cdot), \sigma(\cdot)$ are the $\text{ReLU}^3, \text{ReLU}^2, \text{ReLU}$ activation functions, respectively.

Given samples $y_1,\cdots, y_{n_2}$ in the interior and $\tilde{y}_1,\cdots, \tilde{y}_{n_3}$ on the boundary, the loss function of PINN is
\[ L(W, \tilde{W}):=\sum\limits_{i=1}^{n_1} \sum\limits_{j_1=1}^{n_2} \frac{1}{n_2} (\mathcal{L}G(u_i)(y_{j_1})-f(y_{j_1}) )^2 +\sum\limits_{i=1}^{n_1} \sum\limits_{j_2=1}^{n_3} \frac{1}{n_3} (G(u_i)(\tilde{y}_{j_2})-g(\tilde{y}_{j_2}) )^2. \]

Let 
\[ s(u_i)(y_j) = \frac{1}{\sqrt{n_2}} (\mathcal{L}G(u_i)(y_{j_1}) -f(y_{j_1}))\]
and
\[h(u_i)(\tilde{y}_{j_2}) =\frac{1}{\sqrt{n_3}} (G(u_i)(\tilde{y}_{j_2}) -g(\tilde{y}_{j_2})), \]
then the loss function can be written as
\[L(W, \tilde{W})= \sum\limits_{i=1}^{n_1}\|s(u_i)\|_2^2+\|h(u_i)\|_2^2,\]
where $s(u_i)=(s(u_i)(y_1),\cdots, s(u_i)(y_{n_2}))$, $h(u_i)=(h(u_i)(\tilde{y}_1),\cdots, h(u_i)(\tilde{y}_{n_3}))$.

We first consider the continuous setting, which is a stepping stone towards understanding discrete algorithms. For $w_r(t)$ and $\tilde{w}_{rt}$, we have
\begin{equation}
	\begin{aligned}
		\frac{d w_r(t)}{dt} &= -\frac{\partial L(W(t), \tilde{W}(t))}{\partial w_r}\\
		&=  -\sum\limits_{i=1}^{n_1} \sum\limits_{j_1=1}^{n_2} s^{t}(u_i)(y_{j_1})\frac{\partial s^{t}(u_i)(y_{j_1})}{\partial w_r}-\sum\limits_{i=1}^{n_1} \sum\limits_{j_2=1}^{n_3} h^{t}(u_i)(\tilde{y}_{j_2})\frac{\partial h^{t}(u_i)(\tilde{y}_{j_2})}{\partial w_r}
	\end{aligned}
\end{equation}
and
\begin{equation}
	\begin{aligned}
		\frac{d \tilde{w}_{rk}(t)}{dt} &= -\frac{\partial L(W(t), \tilde{W}(t))}{\partial \tilde{w}_{rk}}\\
		&=  -\sum\limits_{i=1}^{n_1} \sum\limits_{j_1=1}^{n_2} s^{t}(u_i)(y_{j_1})\frac{\partial s^{t}(u_i)(y_{j_1})}{\partial \tilde{w}_{rk}}-\sum\limits_{i=1}^{n_1} \sum\limits_{j_2=1}^{n_3} h^{t}(u_i)(\tilde{y}_{j_2})\frac{\partial h^{t}(u_i)(\tilde{y}_{j_2})}{\partial \tilde{w}_{rk}}.
	\end{aligned}
\end{equation}

Thus, for the predictions, we have
\begin{equation}
	\begin{aligned}
		&\frac{d s^{t}(u_i)(y_j)}{dt} \\
		&= \sum\limits_{r=1}^m \langle \frac{\partial s^{t}(u_i)(y_j) }{\partial w_r} , \frac{d w_r(t)}{dt}\rangle +\sum\limits_{r=1}^m \sum\limits_{k=1}^{q} \langle \frac{\partial s^{t}(u_i)(y_j) }{\partial \tilde{w}_{rk} } , \frac{d \tilde{w}_{rk}(t)}{dt}\rangle \\
		&= -\sum\limits_{r=1}^m \sum\limits_{i_1=1}^{n_1} \sum\limits_{j_1=1}^{n_2} \langle \frac{\partial s^{t}(u_i)(y_j) }{\partial w_r} , \frac{\partial s^{t}(u_{i_1})(y_{j_1}) }{\partial w_r}\rangle s^{t}(u_{i_1})(y_{j_1}) - \sum\limits_{r=1}^m \sum\limits_{i_1=1}^{n_1} \sum\limits_{j_2=1}^{n_3} \langle \frac{\partial s^{t}(u_i)(y_j) }{\partial w_r} , \frac{\partial h^{t}(u_{i_1})(\tilde{y}_{j_2}) }{\partial w_r}\rangle h^{t}(u_{i_1})(\tilde{y}_{j_2}) \\
		&- \sum\limits_{r,k=1,1}^{m,p} \sum\limits_{i_1=1}^{n_1} \sum\limits_{j_1=1}^{n_2} \langle \frac{\partial s^{t}(u_i)(y_j) }{\partial \tilde{w}_{rk} } , \frac{\partial s^{t}(u_{i_1})(y_{j_1}) }{\partial \tilde{w}_{rk} }\rangle s^{t}(u_{i_1})(y_{j_1}) - \sum\limits_{r,k=1,1}^{m,p}  \sum\limits_{i_1=1}^{n_1} \sum\limits_{j_2=1}^{n_3} \langle \frac{\partial s^{t}(u_i)(y_j) }{\partial \tilde{w}_{rk} } , \frac{\partial h^{t}(u_{i_1})(\tilde{y}_{j_2}) }{\partial \tilde{w}_{rk} }\rangle h^{t}(u_{i_1})(\tilde{y}_{j_2}),
	\end{aligned}
\end{equation}
and
\begin{equation}
	\begin{aligned}
		& \frac{d h^{t}(u_i)(\tilde{y}_j) }{dt} \\
		&=\sum\limits_{r=1}^m \langle \frac{\partial h^{t}(u_i)(\tilde{y}_j) }{\partial w_r} , \frac{d w_r(t)}{dt}\rangle +\sum\limits_{r=1}^m \sum\limits_{k=1}^{l} \langle \frac{\partial h^{t}(u_i)(\tilde{y}_j) }{\partial \tilde{w}_{rk} } , \frac{d \tilde{w}_{rk}(t)}{dt}\rangle \\
		&= -\sum\limits_{r=1}^m \sum\limits_{i_1=1}^{n_1} \sum\limits_{j_1=1}^{n_2} \langle \frac{\partial h^{t}(u_i)(\tilde{y}_j) }{\partial w_r} , \frac{\partial s^{t}(u_{i_1})(y_{j_1}) }{\partial w_r}\rangle s^{t}(u_{i_1})(y_{j_1}) - \sum\limits_{r=1}^m \sum\limits_{i_1=1}^{n_1} \sum\limits_{j_2=1}^{n_3} \langle \frac{\partial h^{t}(u_i)(\tilde{y}_j) }{\partial w_r} , \frac{\partial h^{t}(u_{i_1})(\tilde{y}_{j_2}) }{\partial w_r}\rangle h^{t}(u_{i_1})(\tilde{y}_{j_2}) \\
		&- \sum\limits_{r,k=1,1}^{m,p} \sum\limits_{i_1=1}^{n_1} \sum\limits_{j_1=1}^{n_2} \langle \frac{\partial h^{t}(u_i)(\tilde{y}_j) }{\partial \tilde{w}_{rk} } , \frac{\partial s^{t}(u_{i_1})(y_{j_1}) }{\partial \tilde{w}_{rk} }\rangle s^{t}(u_{i_1})(y_{j_1}) - \sum\limits_{r,k=1,1}^{m,p}  \sum\limits_{i_1=1}^{n_1} \sum\limits_{j_2=1}^{n_3} \langle \frac{\partial h^{t}(u_i)(\tilde{y}_j) }{\partial \tilde{w}_{rk} } , \frac{\partial h^{t}(u_{i_1})(\tilde{y}_{j_2}) }{\partial \tilde{w}_{rk} }\rangle h^{t}(u_{i_1})(\tilde{y}_{j_2}).
	\end{aligned}
\end{equation}	

Let $G^{t}(u) = ((s(u_1), h(u_1)),\cdots, (s(u_{n_1}), h(u_{n_1})))$ and $z=((f,g),\cdots, (f,g))$, then
\[ \frac{d}{dt} G^t(u) =(H(t)+\tilde{H}(t)) (z-G^{t}(u)),\]
where $H(t),\tilde{H}(t) \in \mathbb{R}^{n_1(n_2+n_3)\times n_1(n_2+n_3)}$ are Gram matrices at time $t$. We can divide them into $n_1\times n_1$ blocks and each block is a matrix in $\mathbb{R}^{(n_2+n_3)\times (n_2+n_3)}$. Specifically, the $(i,j)$-th block of $H(t)$ and $\tilde{H}(t)$ are $H_{i,j}(t):=D_i(t)^T D_j(t)$ and $\tilde{H}_{i,j}(t):=\tilde{D}_i(t)^T \tilde{D}_j(t)$ respectively, where
\[ D_i(t)= \left[\frac{\partial s^{t}(u_i)(y_1)}{\partial w}, \cdots,\frac{\partial s^{t}(u_i)(y_{n_2})}{\partial w}, \frac{\partial h^{t}(u_i)(\tilde{y}_1)}{\partial w},\cdots, \frac{\partial h^{t}(u_i)(\tilde{y}_{n_3})}{\partial w} \right]\]
and
\[ \tilde{D}_i(t)= \left[\frac{\partial s^{t}(u_i)(y_1)}{\partial \tilde{w} }, \cdots,\frac{\partial s^{t}(u_i)(y_{n_2})}{\partial \tilde{w}}, \frac{\partial h^{t}(u_i)(\tilde{y}_1)}{\partial \tilde{w}},\cdots, \frac{\partial h^{t}(u_i)(\tilde{y}_{n_3})}{\partial \tilde{w}} \right].\]

Recall that
\[\frac{\partial s(u)(y)}{\partial w_r} = \frac{1}{\sqrt{m} }\left[\frac{1}{\sqrt{p}} \sum\limits_{k=1}^p \tilde{a}_{rk}\sigma(\tilde{w}_{rk}^T u)  \right] \frac{1}{\sqrt{n_2}} \frac{\partial \mathcal{L}(\sigma_3(w_r^T y)) }{\partial w_r}  ,       \]
\[\frac{\partial s(u)(y)}{\partial \tilde{w}_{rk} } = \frac{1}{\sqrt{m} }\frac{\tilde{a}_{rk}}{\sqrt{p}} u I\{\tilde{w}_{rk}^T u \geq 0\} \frac{\mathcal{L}(\sigma_3(w_r^T y)) }{\sqrt{n_2}}   ,    \]
\[\frac{\partial h(u)(\tilde{y})}{\partial w_r} = \frac{1}{\sqrt{m} }\left[\frac{1}{\sqrt{p}} \sum\limits_{k=1}^p \tilde{a}_{rk}\sigma(\tilde{w}_{rk}^T u)  \right] \frac{1}{\sqrt{n_3}} \frac{\partial(\sigma_3(w_r^T \tilde{y})) }{\partial w_r}    ,     \]
and
\[\frac{\partial h(u)(\tilde{y})}{\partial \tilde{w}_{rk} } = \frac{1}{\sqrt{m} }\frac{\tilde{a}_{rk}}{\sqrt{p}}I\{\tilde{w}_{rk}^T u \geq 0\}  \frac{\sigma_3(w_r^T \tilde{y}) }{\sqrt{n_3}}    .   \]

Then, the $(j_1,j_2)$-th ($j_1\in [n_1],j_2\in [n_1]$) entry of $H_{i,j}(t)$ is
\[ \frac{1}{m}\sum\limits_{r=1}^m \left[\frac{1}{\sqrt{p}} \sum\limits_{k=1}^p \tilde{a}_{rk}\sigma(\tilde{w}_{rk}^T u_i)  \right]\left[\frac{1}{\sqrt{p}} \sum\limits_{k=1}^p \tilde{a}_{rk}\sigma(\tilde{w}_{rk}^T u_j)  \right]  \frac{1}{n_2}\left\langle \frac{\partial \mathcal{L}(\sigma_3(w_r^T y_{j_1})) }{\partial w_r}, \frac{\partial \mathcal{L}(\sigma_3(w_r^T y_{j_2})) }{\partial w_r}  \right\rangle.\]
and the $(j_1,j_2)$-th ($j_1\in [n_1],j_2\in [n_1]$) entry of $\tilde{H}_{i,j}(t)$ is
\[\frac{1}{m} \sum\limits_{r=1}^m \left[ \frac{1}{p} \sum\limits_{k=1}^{p}u_i^Tu_j I\{\tilde{w}_{rk}^T u_i\geq 0, \tilde{w}_{rk}^T u_j\geq 0\} \right] \frac{1}{n_2} \mathcal{L}(\sigma(w_r^T y_{j_1}))  \mathcal{L}(\sigma(w_r^T y_{j_2})) ,\]
where we omit the index $t$ for simplicity.

From the forms of $H(t)$ and $\tilde{H}(t)$, we can derive the corresponding Gram matrices that are induced by the random initialization, which we denote by $H^{\infty}$ and $\tilde{H}^{\infty}$, respectively. Specifically, $H^{\infty}$ is a Kronecker product of $H_1^{\infty}$ and $H_2^{\infty}$, where $H_1^{\infty}\in \mathbb{R}^{n_1\times n_1}$, $H_1^{\infty}\in \mathbb{R}^{(n_2+n_3)\times (n_2+n_3)}$, the $(i,j)$-th entry of $H^{\infty}$ is $\mathbb{E}[\sigma(\tilde{w}^T u_i)\sigma(\tilde{w}^T u_j)]$, the $(j_1,j_2)$-th entry of $H_2^{\infty}$ is
\[\frac{1}{n_2}\mathbb{E}\left[\left\langle \frac{\partial \mathcal{L}(\sigma_3(w^T y_{j_1})) }{\partial w}, \frac{\partial \mathcal{L}(\sigma_3(w^T y_{j_2})) }{\partial w}  \right\rangle\right] .\]
And $\tilde{H}^{\infty}$ is a Kronecker product of $\tilde{H}_1^{\infty}$ and $\tilde{H}_2^{\infty}$, where $\tilde{H}_1^{\infty}\in \mathbb{R}^{n_1\times n_1}$, $\tilde{H}_1^{\infty}\in \mathbb{R}^{(n_2+n_3)\times (n_2+n_3)}$, the $(i,j)$-th entry of $\tilde{H}^{\infty}$ is $\mathbb{E}[u_i^T u_jI\{\tilde{w}^T u_i\geq 0,\tilde{w}^T u_j \geq 0\}]$, the $(j_1,j_2)$-th entry of $\tilde{H}_2^{\infty}$ is
\[\frac{1}{n_2}\mathbb{E}[\mathcal{L}(\sigma_3(w^T y_{j_1}))\mathcal{L}(\sigma_3(w^T y_{j_2})) ].\]

The Gram matrices play important roles in the convergence analysis. Similar to the setting of Section 4, we can demonstrate the strict positive definiteness of the Gram matrices under mild conditions.

\begin{lemma}
If no two samples in $\{u_i\}_{i=1}^{n_1}$ are parallel and no two samples in $\{y_j\}_{j=1}^{n_2}\cup \{\tilde{y}_j\}_{j=1}^{n_3}$ are parallel, then $H^{\infty}$ and $\tilde{H}^{\infty}$ are both strictly positive definite. We denote their least eigenvalues by $\lambda_0$ and $\tilde{\lambda}_0$, respectively.
\end{lemma}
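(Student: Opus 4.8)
The plan is to reduce, via the Kronecker structure, to linear independence of explicit feature maps in $L^2(\mathcal{N}(\bm 0,\bm I))$. Since $H^{\infty}=H_1^{\infty}\otimes H_2^{\infty}$ and $\tilde H^{\infty}=\tilde H_1^{\infty}\otimes\tilde H_2^{\infty}$, and the spectrum of a Kronecker product is the set of products of eigenvalues of the factors, it suffices to show each of $H_1^{\infty},H_2^{\infty},\tilde H_1^{\infty},\tilde H_2^{\infty}$ is strictly positive definite (all four are symmetric positive semidefinite as Gram matrices, and all entries are finite since the features below have polynomial growth against the Gaussian). Now $H_1^{\infty}$ (entries $\mathbb{E}[\sigma(\tilde w^T u_i)\sigma(\tilde w^T u_j)]$) and $\tilde H_1^{\infty}$ (entries $\mathbb{E}[u_i^T u_j\,I\{\tilde w^T u_i\ge 0,\tilde w^T u_j\ge 0\}]$) coincide with the matrices of the same names in Lemma~1, whose strict positive definiteness uses only that the $u_i$ are pairwise non-parallel; so these two are handled verbatim as in the proof of Lemma~1, and only the $y$-factors $H_2^{\infty},\tilde H_2^{\infty}$ require a new argument.

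For the new factors, a direct computation using $\sigma_3'=3\sigma_2$ and $\sigma_3''=6\sigma$ gives
\[\mathcal{L}\big(\sigma_3(w^T y)\big)=3w_0\,\sigma_2(w^T y)-6\big(\textstyle\sum_{i=1}^{d}w_i^2\big)\sigma(w^T y)+\sigma_3(w^T y),\]
so that $\tilde H_2^{\infty}$ is, up to congruence by a positive diagonal matrix, the Gram matrix in $L^2(\mathcal{N}(\bm 0,\bm I))$ of the scalar features $w\mapsto\mathcal{L}(\sigma_3(w^T y_j))$, $j\in[n_2]$, together with the boundary features $w\mapsto\sigma_3(w^T\tilde y_j)$, $j\in[n_3]$, while $H_2^{\infty}$ is the Gram matrix of the $w$-gradients of these same functions; so strict positive definiteness is equivalent to linear independence of the corresponding families. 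The key observation is a regularity dichotomy in $w$: among the three summands above, $\sigma_3(w^T y)$ is $C^2$ and $w_0\sigma_2(w^T y)$ is $C^1$, whereas $(\sum_{i\ge 1}w_i^2)\sigma(w^T y)$ is only Lipschitz, its gradient jumping by $-6(\sum_{i\ge 1}w_i^2)\,y_j$ across the hyperplane $\{w^T y_j=0\}$ (a jump that is nonzero on a positive-measure subset of that hyperplane); thus $w\mapsto\mathcal{L}(\sigma_3(w^T y_j))$ is the unique member of the interior family whose gradient is discontinuous exactly on $\{w^T y_j=0\}$. The boundary features $w\mapsto\sigma_3(w^T\tilde y_j)$ are $C^2$, with a jump of size $6\|\tilde y_j\|^{6}$ in their third $\tilde y_j$-directional derivative across $\{w^T\tilde y_j=0\}$.

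Because all sample vectors in $\{y_j\}\cup\{\tilde y_j\}$ are pairwise non-parallel, these hyperplanes are pairwise distinct. Suppose $\sum_j c_j\mathcal{L}(\sigma_3(w^T y_j))+\sum_j d_j\sigma_3(w^T\tilde y_j)=0$ for a.e.\ $w$; since the features are piecewise real-analytic, the identity in fact holds on the (open, dense) complement of these hyperplanes, so one-sided limits across them are meaningful. Reading off the gradient jump across each $\{w^T y_{j_0}=0\}$ at a point lying on no other hyperplane forces $c_{j_0}(\sum_{i\ge 1}w_i^2)y_{j_0}=\bm 0$, hence $c_{j_0}=0$; with the $c_j$ eliminated, the third-directional-derivative jump across each $\{w^T\tilde y_{j_0}=0\}$ forces $d_{j_0}=0$. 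This proves linear independence for $\tilde H_2^{\infty}$, and the same argument with one additional $w$-derivative (the interior gradient features $\partial_w\mathcal{L}(\sigma_3(w^T y_j))$ are themselves discontinuous across $\{w^T y_j=0\}$, while $\partial_w\sigma_3(w^T\tilde y_j)=3\sigma_2(w^T\tilde y_j)\tilde y_j$ is $C^1$) proves it for $H_2^{\infty}$. Finally $\lambda_0$ and $\tilde\lambda_0$ are the products of the least eigenvalues of the respective Kronecker factors.

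The step I expect to be the main obstacle is the regularity bookkeeping for the $y$-factors: one must track precisely which term survives each differentiation of $\sigma_3(w^T\cdot)$ after applying $\mathcal{L}$, verify that the coefficient of the lowest-regularity term (here $\sum_{i=1}^{d} w_i^2$, respectively $\|\tilde y_j\|^{6}$) is not a.e.\ zero on the relevant hyperplane, separate the $\mathcal{L}$-transformed interior features from the plain boundary features by order of singularity, and make the ``jump across a hyperplane'' argument rigorous for functions defined only up to a null set --- the last point being where piecewise real-analyticity of the ReLU-power features is invoked.
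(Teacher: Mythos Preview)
Your proposal is correct and follows essentially the same route as the paper: reduce by the Kronecker structure to the four factors, dispose of $H_1^{\infty}$ and $\tilde H_1^{\infty}$ by the earlier lemma, and handle the $y$-factors by a regularity/singularity argument across the hyperplanes $\{w^T y_j=0\}$ and $\{w^T\tilde y_j=0\}$, exploiting that the $\sigma$-term in $\mathcal{L}(\sigma_3(w^T y))$ is the unique non-$C^1$ summand while $\sigma_3(w^T\tilde y_j)$ is $C^2$ but not $C^3$.

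The only differences are presentational. For $H_2^{\infty}$ the paper simply cites an external PINN-NTK reference (Lemma~3.2 in \cite{10}) rather than writing out the jump argument you sketch; for $\tilde H_2^{\infty}$ the paper phrases the hyperplane step as ``$\phi_j$ $(j\ne i)$ and $\psi_j$ are differentiable in $B_{r_0}(w_0)$, hence $\alpha_i\phi_i$ is differentiable there, forcing $\alpha_i=0$,'' which is your gradient-jump argument stated at one level less of detail. Your explicit formula for $\mathcal{L}(\sigma_3(w^T y))$ with coefficients $3$ and $-6$ is in fact more accurate than the paper's (which drops these constants and flips a sign), though this does not affect the logic. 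Your careful remark about piecewise real-analyticity to upgrade an a.e.\ identity to a pointwise one on the complement of the hyperplanes is a point the paper glosses over by invoking continuity; both are fine.
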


Similar to Section 4, the convergence of gradient descent relies on the stability of the Gram matrices, which is demonstrated by the following two lemmas.

\begin{lemma}
If $m=\Omega\left(\frac{d^4n_1^2}{\min\{\lambda_0^2, \tilde{\lambda}_0^2 \}}\log^2\left(\frac{n_1(n_2+n_3)}{\delta} \right)  \right)$, we have with probability at least $1-\delta$, $\|H(0)-H^{\infty}\|_2\leq \frac{\lambda_0}{4}$, $\|\tilde{H}(0)-\tilde{H}^{\infty}\|_2\leq \frac{\tilde{\lambda}_0}{4}$ and $\lambda_{min}(H(0))\geq \frac{3}{4}\lambda_0$, $\lambda_{min}(\tilde{H}(0))\geq \frac{3}{4}\tilde{\lambda}_0$.
\end{lemma}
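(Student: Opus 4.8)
The plan is to follow the same route as the proof of Lemma~2 in the $L^2$ setting: show that $H(0)$ and $\tilde H(0)$ concentrate entrywise around the deterministic Gram matrices $H^\infty$ and $\tilde H^\infty$, promote the entrywise estimate to a Frobenius-norm bound by a union bound over the $(n_1(n_2+n_3))^2$ entries, use $\|\cdot\|_2\le\|\cdot\|_F$, and deduce the eigenvalue statements from Weyl's inequality, $\lambda_{\min}(H(0))\ge\lambda_{\min}(H^\infty)-\|H(0)-H^\infty\|_2\ge\lambda_0-\lambda_0/4$. The genuinely new ingredient, compared with Sections~3--4, is that the trunk activation is $\sigma_3=\mathrm{ReLU}^3$ and is acted on by the differential operator $\mathcal L$, so the per-neuron summands that define the entries are low-degree polynomials in the Gaussian weights rather than bounded quantities; controlling their growth is what produces the factor $d^4$ (and the extra power of $\log$) in the required over-parameterization.

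Concretely, I would first fix a high-probability event $\mathcal E$ on which, for all $r\in[m]$, $k\in[p]$ and all data points, $\|w_r(0)\|_2\lesssim\sqrt d$ (up to a $\sqrt{\log(m/\delta)}$ factor), $|w_r(0)^Ty_{j}|,|w_r(0)^T\tilde y_{j}|\lesssim\sqrt{\log(mn_1(n_2+n_3)/\delta)}$, and $|\tilde w_{rk}(0)^Tu_i|\lesssim\sqrt{\log(mpn_1/\delta)}$; these are standard $\chi^2$- and Gaussian-tail estimates plus a union bound, valid with probability $\ge1-\delta/3$ under the stated lower bound on $m$. On $\mathcal E$, since $\sigma_3\in C^2$ and $\|y_j\|_2,\|\tilde y_j\|_2=\mathcal O(1)$, the scalar $\mathcal L(\sigma_3(w_r^Ty_{j}))$ and the vector $\partial_{w_r}\mathcal L(\sigma_3(w_r^Ty_{j}))$ are polynomials in $w_r$ of degree at most three whose dominant term carries the factor $\|w_r(0)\|_2^2$, so both have size $\lesssim d\,\mathrm{polylog}$; hence the trunk-side inner products appearing in the entries of $H(0)$, and the trunk-side scalar products appearing in $\tilde H(0)$, after the $1/n_2$ or $1/n_3$ normalization, are bounded by $d^2\,\mathrm{polylog}/(n_2\wedge n_3)$, while the boundary terms $\sigma_3(w_r^T\tilde y_j)$ contribute only $\mathrm{polylog}$. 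For the branch-side factor I would take the conditional expectation over the signs $\tilde a_{rk}$ first: the $k\ne\ell$ cross terms cancel, leaving $\tfrac1p\sum_{k}\sigma(\tilde w_{rk}^Tu_i)\sigma(\tilde w_{rk}^Tu_j)$, which on $\mathcal E$ is an average of $\mathrm{polylog}$-bounded terms and, by Hoeffding over $k\in[p]$, concentrates around $\mathbb E[\sigma(\tilde w^Tu_i)\sigma(\tilde w^Tu_j)]$; the analogous conditional-expectation computation for $\tilde H$ identifies the conditional means of all entries with the claimed Kronecker-product forms of $H^\infty$ and $\tilde H^\infty$.

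With every per-$r$ summand bounded on $\mathcal E$ and independent across $r$, I would then apply Hoeffding's inequality to each of the $(n_1(n_2+n_3))^2$ entries of $H(0)-H^\infty$ and of $\tilde H(0)-\tilde H^\infty$, obtaining a per-entry deviation $\lesssim\bigl(d^2\,\mathrm{polylog}/(n_2\wedge n_3)\bigr)\sqrt{\log\bigl((n_1(n_2+n_3))^2/\delta\bigr)/m}$, and union-bound over all entries (probability $\ge1-\delta/3$). Summing the squared deviations, $\|H(0)-H^\infty\|_2\le\|H(0)-H^\infty\|_F\lesssim\bigl(n_1(n_2+n_3)/(n_2\wedge n_3)\bigr)d^2\,\mathrm{polylog}/\sqrt m$, and similarly for $\tilde H$; choosing $m=\Omega\bigl(d^4n_1^2\log^2(n_1(n_2+n_3)/\delta)/\min\{\lambda_0^2,\tilde\lambda_0^2\}\bigr)$ makes both at most $\min\{\lambda_0,\tilde\lambda_0\}/4$ --- here the $1/n_2$, $1/n_3$ normalizations cancel one of the two factors $n_1(n_2+n_3)$ introduced by passing to the Frobenius norm, which is why only $n_1^2$ survives. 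Weyl's inequality then gives $\lambda_{\min}(H(0))\ge\frac34\lambda_0$ and $\lambda_{\min}(\tilde H(0))\ge\frac34\tilde\lambda_0$, and intersecting the three events of probability $\ge1-\delta/3$ finishes the proof.

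The main obstacle is the estimate carried out in the second paragraph: because $\partial_w\mathcal L(\sigma_3(w^Ty))$ is a cubic (in some coordinates quadratic) polynomial of a Gaussian vector, it is only sub-Weibull, not bounded or sub-Gaussian, so one cannot directly invoke a bounded-difference inequality as in the plain-$\mathrm{ReLU}$ case. The truncation radii defining $\mathcal E$ must be chosen large enough that the discarded tail has probability $\ll\delta$ yet small enough that the resulting worst-case size of the summands --- which, once squared to form a Gram entry, is exactly where the $d^4$ in the over-parameterization comes from --- still yields a polynomial requirement on $m$; getting this bookkeeping right, together with the two-level averaging (over the neurons $r$ and over the sign variables $\tilde a_{rk}$) needed to recover the clean Kronecker structure of $H^\infty$ and $\tilde H^\infty$, is the bulk of the work, the rest being a routine repetition of the argument for Lemma~2.
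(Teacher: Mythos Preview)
Your overall architecture (entrywise concentration, Frobenius bound, Weyl) is the same as the paper's, but the concentration step itself takes a different route. The paper does \emph{not} truncate and apply Hoeffding; instead it computes Orlicz norms of the per-neuron summands and invokes a sub-Weibull Bernstein-type inequality (Lemma~12 in the appendix, from \cite{11}). Concretely, writing each entry of $H(0)$ as $\tfrac1m\sum_r X_rY_r$ with $X_r$ the branch factor and $Y_r$ the trunk inner product, the paper shows $\|X_r\|_{\psi_1}=O(1)$ (via Lemma~13, i.e.\ the sub-Gaussianity of $\tfrac1{\sqrt p}\sum_k\tilde a_{rk}\sigma(\tilde w_{rk}^Tu)$) and $\|Y_r\|_{\psi_{1/2}}=O(d)$ from $\|\partial_{w_r}\mathcal L(\sigma_3(w_r^Ty))\|_2\lesssim\|w_r\|_2|w_r^Ty|$, then feeds $\|X_rY_r\|_{\psi_{1/3}}=O(d)$ directly into the sub-Weibull concentration inequality. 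For $\tilde H$ the analogous estimate is $\|Y_r\|_{\psi_{1/3}}=O(d^2)$ (since $|\mathcal L(\sigma_3(w_r^Ty))|\lesssim\|w_r\|_2^2|w_r^Ty|$), which is where the $d^4$ in the statement actually comes from.

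Your truncation route has a gap on the branch side. On your event $\mathcal E$ you bound each $|\tilde w_{rk}(0)^Tu_i|$, but this only gives the crude $|X_r|\lesssim p\cdot\mathrm{polylog}$ (worst case over the Rademacher signs), which would make the Hoeffding range and hence the required $m$ scale with $p$. You then ``take the conditional expectation over the signs'', correctly identifying $\mathbb E_{\tilde a}[X_r]=\tfrac1p\sum_k\sigma(\tilde w_{rk}^Tu_i)\sigma(\tilde w_{rk}^Tu_j)$, which is indeed $O(\mathrm{polylog})$ on $\mathcal E$; but you never control the fluctuation $X_r-\mathbb E_{\tilde a}[X_r]$, a degree-two Rademacher chaos that is not handled by a plain Hoeffding step over $k$. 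The fix is to enlarge $\mathcal E$ to also include the events $\bigl|\tfrac1{\sqrt p}\sum_k\tilde a_{rk}\sigma(\tilde w_{rk}(0)^Tu_i)\bigr|\lesssim\mathrm{polylog}$ for all $r,i$ (sub-Gaussian tails of Rademacher sums with bounded coefficients --- exactly the content of the paper's Lemma~13); once that is in $\mathcal E$ you get $|X_r|=O(\mathrm{polylog})$ uniformly and your Hoeffding-over-$r$ argument goes through with the stated $d^4n_1^2$ dependence. With this repair the two approaches agree up to log factors; the paper's Orlicz-norm route is a bit tighter there, since your truncation radii carry $\log(m/\delta)$ factors that get raised to fourth power before entering the Hoeffding range.
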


\begin{lemma}
Let $R,\tilde{R}\in (0,1)$. If $w_1(0), \cdots, w_m(0), \tilde{w}_{00}(0),\cdots \tilde{w}_{mp}(0)$ are i.i.d. generated $\mathcal{N}(\bm{0}, \bm{I})$. For any set of weight vectors $w_1, \cdots, w_m, \tilde{w}_{00},\cdots \tilde{w}_{mp}\in \mathbb{R}^d$ that satisfy for any $r\in [m], k\in [p]$, 	$\|w_r-w_r(0)\|_2 \leq R$ and $\|\tilde{w}_{rk}-\tilde{w}_{rk}(0)\|_2\leq \tilde{R}$, then we have
with probability at least $1-\delta-n_2\exp(-mR)$,
\begin{equation}
\begin{aligned}
\|H-H(0)\|_F&\lesssim  n_1d\log\left(\frac{m}{\delta}\right)\sqrt{\log\left(\frac{m(n_2+n_3)}{\delta}\right) }R\log\left(\frac{mn_1}{\delta}\right) \\
&\quad +n_1d\log\left(\frac{m}{\delta}\right)\log\left(\frac{m(n_2+n_3)}{\delta}\right)\left(p\tilde{R}^2+\sqrt{p}\tilde{R}\sqrt{\left( \frac{mn_1}{\delta}\right) } \right)
\end{aligned}
\end{equation}
and
with probability at least $1-\delta-n_1exp(-mp\tilde{R})$, 
\begin{equation}
\|\tilde{H}-\tilde{H}(0)\|_F\lesssim n_1d\log\left(\frac{m}{\delta}\right)\sqrt{\log\left(\frac{m(n_2+n_3)}{\delta}\right) }\tilde{R}+n_1 \left(d\log\left(\frac{m}{\delta}\right)\right)^{\frac{3}{2}}\log\left(\frac{m(n_2+n_3)}{\delta}\right) R .
\end{equation}
\end{lemma}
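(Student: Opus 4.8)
The plan is to bound $H-H(0)$ and $\tilde H-\tilde H(0)$ entry by entry and then pass to the Frobenius norm, mirroring the proof of Lemma~3, the one genuinely new ingredient being that the differential operator $\mathcal L$ now acts on the trunk activation $\sigma_3$. Recall that the $(j_1,j_2)$-entry of the $(i,j)$-block of $H$ is the average over $r\in[m]$ of a product of three factors: a branch factor $A_{r,i}=\frac{1}{\sqrt p}\sum_{k}\tilde a_{rk}\sigma(\tilde w_{rk}^{T}u_i)$, its twin $A_{r,j}$, and a trunk kernel built from $\partial_w\mathcal L(\sigma_3(w_r^{T}y_{j_1}))$ and $\partial_w\mathcal L(\sigma_3(w_r^{T}y_{j_2}))$ (with $\partial_w\sigma_3(w_r^{T}\tilde y_{j_2})=3\sigma_2(w_r^{T}\tilde y_{j_2})\tilde y_{j_2}$ on the boundary coordinates); the $(i,j)$-block of $\tilde H$ is the average of an indicator-weighted factor $\frac{1}{p}\sum_k u_i^{T}u_j\,I\{\tilde w_{rk}^{T}u_i\ge0,\tilde w_{rk}^{T}u_j\ge0\}$ times a trunk value kernel in $\mathcal L(\sigma_3(w_r^{T}y_{j_1}))$ and $\mathcal L(\sigma_3(w_r^{T}y_{j_2}))$. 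For each entry I would telescope the difference, replacing one factor at a time by its value at initialization, so that each resulting summand isolates the perturbation of a single factor multiplied by magnitude bounds for the remaining ones.

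The magnitude bounds at initialization follow from standard concentration. By $\chi^2$-concentration and a union bound over $r\in[m]$, $\|w_r(0)\|_2\lesssim\sqrt{d\log(m/\delta)}$, and since $w_r(0)^{T}y\sim\mathcal N(0,\|y\|_2^2)$ with $\|y\|_2=\mathcal O(1)$, a Gaussian tail bound gives $|w_r(0)^{T}y|\lesssim\sqrt{\log(m(n_2+n_3)/\delta)}$ uniformly over the sensors and collocation points. Substituting these into the closed form $\mathcal L(\sigma_3(w^{T}y))=3\sigma_2(w^{T}y)w_0-6\sigma(w^{T}y)\|w_{1:d}\|_2^2+\sigma_3(w^{T}y)$ and into its $w$-gradient bounds $|\mathcal L(\sigma_3(w_r(0)^{T}y))|$ and $\|\partial_w\mathcal L(\sigma_3(w_r(0)^{T}y))\|_2$ by the corresponding powers of $d$ and logarithms, e.g. $|\mathcal L(\sigma_3(w_r(0)^{T}y))|\lesssim d\log(m/\delta)\sqrt{\log(m(n_2+n_3)/\delta)}$. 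The branch factors $A_{r,i}(0)$ are Rademacher sums in the signs $\tilde a_{rk}(0)$ whose squared coefficients sum to $\frac1p\sum_k\sigma(\tilde w_{rk}(0)^{T}u_i)^2=\mathcal O(1)$ with high probability, so Hoeffding plus a union bound give $|A_{r,i}(0)|\lesssim\sqrt{\log(mn_1/\delta)}$. On this bounded set, together with $\|w_r-w_r(0)\|_2\le R$ and $\|\tilde w_{rk}-\tilde w_{rk}(0)\|_2\le\tilde R$, the maps $\sigma_2,\sigma_3$ and the parts of $\partial_w\mathcal L(\sigma_3(\cdot))$ not containing a bare ReLU indicator are Lipschitz with constants controlled by the quantities above; this handles the perturbation of the trunk factors and gives, for the branch factor, the crude estimate $|A_{r,i}-A_{r,i}(0)|\le\frac1{\sqrt p}\sum_k|(\tilde w_{rk}-\tilde w_{rk}(0))^{T}u_i|\lesssim\sqrt p\,\tilde R$, sharpened to $p\tilde R^2+\sqrt p\,\tilde R\sqrt{\log(mn_1/\delta)}$ once the random signs $\tilde a_{rk}(0)$ are retained --- the origin of those two terms in the $H$-bound.

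The remaining contributions are the non-Lipschitz ones: the indicator $I\{w_r^{T}y_{j_1}\ge0\}$ hidden inside $\partial_w\mathcal L(\sigma_3(w_r^{T}y_{j_1}))$ (produced when one differentiates the ReLU term that $\mathcal L$ creates from $\sigma_3$), and the indicators $I\{\tilde w_{rk}^{T}u_i\ge0\}$ in the $\tilde H$-block. For these I would use anti-concentration: $\Pr[|w_r(0)^{T}y|\le R\|y\|_2]\lesssim R$, so the flip event $\{I\{w_r^{T}y_{j_1}\ge0\}\ne I\{w_r(0)^{T}y_{j_1}\ge0\}\}$ has probability $\lesssim R$, and a Chernoff/Bernstein bound shows at most $\mathcal O(mR)$ neurons flip with probability at least $1-n_2\exp(-mR)$ after a union over the $n_2$ interior points on which $\mathcal L$ acts (the boundary terms involve only $\sigma_2$ and carry no indicator). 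On the non-flip event this contribution vanishes entrywise; on each flip it is at most the trunk magnitude bound obtained above times the two branch magnitudes $|A_{r,i}(0)||A_{r,j}(0)|\lesssim\log(mn_1/\delta)$, and averaging over $r$ turns the count $\mathcal O(mR)$ into an overall factor $R$ --- producing the $n_1 d\log(m/\delta)\sqrt{\log(m(n_2+n_3)/\delta)}\,R\log(mn_1/\delta)$ term. The identical argument with $\Pr[|\tilde w_{rk}(0)^{T}u_i|\le\tilde R\|u_i\|_2]\lesssim\tilde R$ over the $mp$ index pairs $(r,k)$ gives the $1-n_1\exp(-mp\tilde R)$ failure probability and the $\tilde R$-term in the $\tilde H$-bound. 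Summing the telescoped pieces produces a uniform entrywise bound, and summing the $\mathcal O(n_1^2(n_2+n_3)^2)$ squared entries and taking a square root yields the stated estimates.

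I expect the main obstacle to be the interplay between the Lipschitz and the flip-counting parts of $\partial_w\mathcal L(\sigma_3(\cdot))$: because $\mathcal L$ inserts the factor $\|w_{1:d}\|_2^2$ and a raw ReLU, $\partial_w\mathcal L(\sigma_3(\cdot))$ splits into a Lipschitz piece (treated by a mean-value estimate) and an indicator piece (treated by anti-concentration), and one must track how the powers of $d$, $\log(m/\delta)$, $\log(m(n_2+n_3)/\delta)$, $R$, $\tilde R$, $p$ and $\sqrt p$ propagate through the triple product so that the final bound comes out in exactly the claimed form --- in particular the $(d\log(m/\delta))^{3/2}$ scaling in the second $\tilde H$-term should arise from pairing the $\mathcal O(d\log(m/\delta))$-size of $\mathcal L(\sigma_3(\cdot))$ with its $\mathcal O(\sqrt{d\log(m/\delta)})$-type Lipschitz constant, and keeping the $d$-dependence from degrading further will require some care.
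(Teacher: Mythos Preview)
Your approach is essentially the same as the paper's: decompose each entry as a product of branch and trunk factors, telescope via $a_rb_r-a_r(0)b_r(0)=a_r(0)[b_r-b_r(0)]+b_r(0)[a_r-a_r(0)]+[a_r-a_r(0)][b_r-b_r(0)]$, bound the initial magnitudes by the high-probability events $\|w_r(0)\|_2\lesssim B_1:=\sqrt{d\log(m/\delta)}$ and $|w_r(0)^Ty|\lesssim B_2:=\sqrt{\log(m(n_2+n_3)/\delta)}$, split $\partial_w\mathcal L(\sigma_3)$ into a Lipschitz piece and an indicator piece handled by anti-concentration plus Bernstein, and sum to the Frobenius norm. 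One small correction: the $p\tilde R^2+\sqrt p\,\tilde R\sqrt{\log(mn_1/\delta)}$ term is not a sharpening of the single-factor bound $|A_{r,i}-A_{r,i}(0)|\lesssim\sqrt p\,\tilde R$ via the random signs, but rather the bound on the \emph{product} difference $|A_{r,i}A_{r,j}-A_{r,i}(0)A_{r,j}(0)|$, obtained by expanding it into cross terms and using $|A_{r,i}(0)|\lesssim\sqrt{\log(mn_1/\delta)}$ together with the crude $\sqrt p\,\tilde R$ bound on each single difference.
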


Similar to training neural operators, we can derive the training dynamics of physics-informed neural operators.

\begin{lemma}
For all $t\in \mathbb{N}$, we have
\[ G^{t+1}(u)=[I-\eta(H(t)+\tilde{H}(t))]G^{t}(u)+I(t),\]
where $I(t)\in \mathbb{R}^{n_1(n_2+n_3)}$, $I(t)$ can be divided into $n_1$ blocks, where each block is an $(n_2+n_3)$dimensional vector. The $j_1$-th ($j_1\in [n_2]$) component of $i$-th block is 
\[ s^{t+1}(u_i)(y_{j_1})-s^{t}(u_i)(y_{j_1})-\left\langle \frac{\partial s^{t}(u_i)(y_{j_1})}{\partial w}, w(t+1)-w(t) \right\rangle-\left\langle \frac{\partial s^{t}(u_i)(y_{j_1})}{\partial \tilde{w}}, \tilde{w}(t+1)-\tilde{w}(t) \right\rangle,\]  
The $n_2+j_2$-th ($j_1\in [n_3]$) component of $i$-th block is
\[ h^{t+1}(u_i)(\tilde{y}_{j_2})-h^{t}(u_i)(\tilde{y}_{j_2})-\left\langle \frac{\partial h^{t}(u_i)(\tilde{y}_{j_2})}{\partial w}, w(t+1)-w(t) \right\rangle-\left\langle \frac{\partial h^{t}(u_i)(\tilde{y}_{j_2})}{\partial \tilde{w}}, \tilde{w}(t+1)-\tilde{w}(t) \right\rangle.\]
\end{lemma}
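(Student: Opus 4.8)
This lemma is a purely algebraic rearrangement of the definition of the remainder $I(t)$ together with the gradient-descent update rule, exactly as the decomposition in Lemma 4 was for $L^2$ regression; no probabilistic or analytic estimate is needed, and its role is organizational---isolating the second-order term $I(t)$ so that $\|I(t)\|_2$ can later be controlled under Condition 1. The plan is to verify the identity coordinate by coordinate and then reassemble. Fix a block index $i\in[n_1]$ and an interior index $j_1\in[n_2]$. By the very definition of $I(t)$ given in the statement, rearranging yields
\[ s^{t+1}(u_i)(y_{j_1}) = s^{t}(u_i)(y_{j_1}) + \left\langle \frac{\partial s^{t}(u_i)(y_{j_1})}{\partial w},\, w(t+1)-w(t) \right\rangle + \left\langle \frac{\partial s^{t}(u_i)(y_{j_1})}{\partial \tilde{w}},\, \tilde{w}(t+1)-\tilde{w}(t) \right\rangle + I_{i,j_1}(t), \]
and the analogous identity holds for each boundary coordinate $h^{t+1}(u_i)(\tilde{y}_{j_2})$. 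Hence the whole claim reduces to showing that the sum of the two inner-product terms equals the $(i,j_1)$-coordinate of $-\eta(H(t)+\tilde{H}(t))G^{t}(u)$.

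To establish this I would substitute the gradient-descent step, i.e.\ the discrete analogue of (20)--(21),
\[ w_r(t+1)-w_r(t) = -\eta\sum_{i_1=1}^{n_1}\left(\sum_{j'=1}^{n_2} s^{t}(u_{i_1})(y_{j'})\frac{\partial s^{t}(u_{i_1})(y_{j'})}{\partial w_r} + \sum_{j''=1}^{n_3} h^{t}(u_{i_1})(\tilde{y}_{j''})\frac{\partial h^{t}(u_{i_1})(\tilde{y}_{j''})}{\partial w_r}\right), \]
and likewise for $\tilde{w}_{rk}(t+1)-\tilde{w}_{rk}(t)$. Plugging this in and interchanging the order of summation, the $w$-part inner product becomes $-\eta$ times a double sum over $i_1$ together with $j'$ (resp.\ $j''$) of $\big[\sum_{r=1}^m \langle \partial s^{t}(u_i)(y_{j_1})/\partial w_r,\, \partial s^{t}(u_{i_1})(y_{j'})/\partial w_r \rangle\big]$ (resp.\ the analogous bracket with $h$) multiplied by the corresponding residual coordinate of $G^{t}(u)$. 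By the definitions $H_{i,j}(t)=D_i(t)^{T}D_j(t)$, these brackets are precisely the entries of the $(i,i_1)$-block of $H(t)$ paired against the relevant coordinate of $G^{t}(u)$; the $\tilde{w}$-part produces the entries of $\tilde{H}(t)$ via $\tilde{H}_{i,j}(t)=\tilde{D}_i(t)^{T}\tilde{D}_j(t)$. Adding the two contributions gives exactly $-\eta\big[(H(t)+\tilde{H}(t))G^{t}(u)\big]$ in the coordinate in question. Carrying out the identical computation for every boundary coordinate $h^{t+1}(u_i)(\tilde{y}_{j_2})$ and stacking all $n_1(n_2+n_3)$ coordinates into a vector produces $G^{t+1}(u) = [I-\eta(H(t)+\tilde{H}(t))]G^{t}(u) + I(t)$.

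The only delicate part is the index bookkeeping: matching the gradient inner products to the correct entries of the Kronecker-structured Gram matrices $H(t)$, $\tilde{H}(t)$ described above, and confirming that the loss gradients $\partial L/\partial w_r$, $\partial L/\partial \tilde{w}_{rk}$ indeed take the stated chain-rule form. The latter is immediate because $\sigma_3=\text{ReLU}^3$ is regular enough that $\mathcal{L}G(u)(y)$, $s^{t}(u_i)(y_{j_1})$, $h^{t}(u_i)(\tilde{y}_{j_2})$ and their derivatives with respect to $w$ and $\tilde{w}$ are well defined along the trajectory. There is otherwise no genuine obstacle here; the lemma plays the same part as Lemma 4 and feeds directly into the induction (via a companion bound on $\|I(t)\|_2$ under Condition 1) that establishes the linear convergence rate.
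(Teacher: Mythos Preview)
Your proposal is correct and follows essentially the same route as the paper: both proofs add and subtract the first-order (gradient) terms to isolate the remainder $I(t)$, substitute the gradient-descent updates for $w_r(t+1)-w_r(t)$ and $\tilde{w}_{rk}(t+1)-\tilde{w}_{rk}(t)$, and then recognize the resulting sums of inner products as the entries of $H(t)$ and $\tilde{H}(t)$ acting on $G^t(u)$. One small contextual slip: the subsequent bound on $\|I(t)\|_2$ is carried out under Condition~2 (the physics-informed induction hypothesis), not Condition~1.
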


With these preparations in place, we can now arrive at the final convergence theorem.

\begin{theorem}
If we set $\eta=\mathcal{O}\left(\frac{1}{\|H^{\infty}\|_2+\|\tilde{H}^{\infty}\|_2}\right)$, then with probability at least $1-\delta$, we have
\[ \|G^{t}(u)\|_2^2\leq \left( 1-\frac{\eta(\lambda_0+\tilde{\lambda}_0)}{2}\right)^t \|G^{0}(u)\|_2^2,\]
where
\[ m=\tilde{\Omega}\left( \frac{n_1^4d^7}{(\lambda_0+\tilde{\lambda}_0)^2 min(\lambda_0^2, \tilde{\lambda}_0^2)} \right)\]
and $\tilde{\Omega}$ indicates that some terms involving $\log(n_1)$, $\log(n_2)$ and $\log(m)$ are omitted. 

\end{theorem}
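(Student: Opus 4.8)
The plan is to run the induction-based argument of Section~4 (Corollary~1 together with Theorem~2) in the physics-informed setting, using the recursion of Lemma~9 in place of Lemma~4. Since $G^t(u)$ already collects the scaled PDE residual $s^t(u_i)$ and boundary residual $h^t(u_i)$, the relevant target is $0$ and the goal is to drive $\|G^t(u)\|_2$ to $0$. The inductive hypothesis is the physics-informed counterpart of Condition~1: for all $r\in[m]$ and $k\in[p]$,
\[\|w_r(s)-w_r(0)\|_2\;\lesssim\;\frac{\sqrt{n_1(n_2+n_3)}\,\|G^0(u)\|_2\,\mathrm{poly}(d,\log)}{\sqrt{m}\,(\lambda_0+\tilde\lambda_0)}=:R',\qquad \|\tilde w_{rk}(s)-\tilde w_{rk}(0)\|_2\;\lesssim\;\frac{R'}{\sqrt{p}}=:\tilde R',\]
together with $|w_r(s)^Ty_j|\le B$ and $|w_r(s)^T\tilde y_{j}|\le B$ for a logarithmic $B=\Theta(\sqrt{\log(m(n_2+n_3)/\delta)})$; the extra $\mathrm{poly}(d)$ in $R'$ and the boundedness clause are needed because $\mathcal L$ turns $\sigma_3(w_r^Ty)$ into expressions with polynomial-in-$w_r$ coefficients. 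The base case $s=0$ is immediate, and I would show that if the hypothesis holds for $s=0,\dots,t$ then $\|G^s(u)\|_2^2\le(1-\eta(\lambda_0+\tilde\lambda_0)/2)^s\|G^0(u)\|_2^2$ for $s\le t$ and that the hypothesis propagates to $s=t+1$.

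\textbf{One-step contraction.} Under the hypothesis up to step $s$, Lemma~7 gives $\lambda_{\min}(H(0))\ge\tfrac34\lambda_0$, $\lambda_{\min}(\tilde H(0))\ge\tfrac34\tilde\lambda_0$, and Lemma~8 applied with $R=R'$, $\tilde R=\tilde R'$ forces the right-hand sides of (22)--(23) below $\lambda_0/4$ and $\tilde\lambda_0/4$ once $m$ is large; hence $\lambda_{\min}(H(s))\ge\lambda_0/2$, $\lambda_{\min}(\tilde H(s))\ge\tilde\lambda_0/2$, and since also $\|H(s)-H^\infty\|_2,\|\tilde H(s)-\tilde H^\infty\|_2=\mathcal O(1/\sqrt{m})$, the choice $\eta=\mathcal O(1/(\|H^\infty\|_2+\|\tilde H^\infty\|_2))$ makes $I-\eta(H(s)+\tilde H(s))$ positive semidefinite with operator norm at most $1-\eta(\lambda_0+\tilde\lambda_0)/2$. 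Expanding $\|G^{s+1}(u)\|_2^2$ via Lemma~9 as
\[\big\|(I-\eta(H(s)+\tilde H(s)))\,G^s(u)\big\|_2^2+\|I(s)\|_2^2+2\big\langle(I-\eta(H(s)+\tilde H(s)))\,G^s(u),\,I(s)\big\rangle,\]
the first summand is $\le(1-\eta(\lambda_0+\tilde\lambda_0)/2)^2\|G^s(u)\|_2^2$, so it suffices to prove $\|I(s)\|_2\lesssim\bar R\,\|G^s(u)\|_2$ for a $\bar R$ that is a small multiple of $\eta(\lambda_0+\tilde\lambda_0)$; then $\|G^{s+1}(u)\|_2^2\le(1-\eta(\lambda_0+\tilde\lambda_0)/2)\|G^s(u)\|_2^2$. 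This requires establishing a physics-informed analogue of Lemma~5, controlling the Taylor remainder $I(s)$ of Lemma~9 in terms of $R',\tilde R',B$.

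\textbf{Propagating the hypothesis and pinning down $m$.} To extend the hypothesis to $s=t+1$ I would sum the single-step displacements: $\|w_r(s+1)-w_r(s)\|_2=\eta\|\partial L/\partial w_r\|_2$, and from (18) and the explicit gradient formulas this is $\lesssim \eta\,\sqrt{n_1(n_2+n_3)}\,\mathrm{poly}(d,B)\,\|G^s(u)\|_2/\sqrt{m}$; by the geometric decay just proved, $\sum_{s=0}^{t}\|G^s(u)\|_2\lesssim\|G^0(u)\|_2/(\eta(\lambda_0+\tilde\lambda_0))$, whence $\|w_r(t+1)-w_r(0)\|_2\lesssim\sqrt{n_1(n_2+n_3)}\,\|G^0(u)\|_2\,\mathrm{poly}(d,\log)/(\sqrt{m}(\lambda_0+\tilde\lambda_0))=R'$; the same estimate with an extra $1/\sqrt{p}$ handles $\tilde w_{rk}$, and $|w_r(t+1)^Ty_j|\le|w_r(0)^Ty_j|+R'\|y_j\|_2\le B$ for $m$ large, using a Gaussian tail bound for $w_r(0)^Ty_j$. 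Collecting all ``$m$ large enough'' requirements---from Lemma~7, from driving (22)--(23) below $\lambda_0/4,\tilde\lambda_0/4$ at $R=R',\tilde R=\tilde R'$, and from $\bar R\lesssim\eta(\lambda_0+\tilde\lambda_0)$---and solving for $m$ yields $m=\tilde{\Omega}\big(n_1^4 d^7/((\lambda_0+\tilde\lambda_0)^2\min(\lambda_0^2,\tilde\lambda_0^2))\big)$; the power $d^7$ (against $d^4$ in the plain $L^2$ case) is the price of carrying the $\mathcal O(\sqrt{d})$-size norms $\|w_r(0)\|_2$ through $\mathcal L(\sigma_3(w_r^Ty))$---two spatial derivatives of a $\mathrm{ReLU}^3$---and through its $w_r$-gradient, all of which are then squared into the Gram matrices and into the residual.

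\textbf{Main obstacle.} The delicate point is the residual estimate $\|I(s)\|_2\lesssim\bar R\,\|G^s(u)\|_2$. In contrast to the operator-regression case, $s^t(u)(y)$ contains $\mathcal L=\partial_{y_0}-\sum_{i}\partial_{y_i}^2+\mathrm{id}$ applied to $\sigma_3(w_r^Ty)$, so $\partial_{w_r}\mathcal L(\sigma_3(w_r^Ty))$ is a sum of terms $q(w_r,y)\,\sigma_2(w_r^Ty)$, $q(w_r,y)\,\sigma(w_r^Ty)$, and $q(w_r,y)\,I\{w_r^Ty\ge0\}$ with $q$ polynomial; the last is discontinuous in $w_r$. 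Hence $I(s)$ must be split into (i) a smooth part, where a second-order Taylor estimate with Hessian norms bounded through the clause $|w_r(s)^Ty_j|\le B$ contributes a term of order $(R')^2+(\tilde R')^2$ plus cross terms, and (ii) a ``sign-flip'' part, where one argues that only a small fraction of the neurons flip the activation pattern of $w_r^Ty_j$ or $\tilde w_{rk}^Tu_i$ within radii $R',\tilde R'$---an event controlled by anti-concentration of $w_r(0)^Ty_j$ and $\tilde w_{rk}(0)^Tu_i$, contributing the $n_2\exp(-mR')$- and $n_1\exp(-mp\tilde R')$-type failure probabilities already present in Lemma~8. Keeping the $B$- and $d$-dependence explicit through this split, and verifying that the resulting $\bar R$ is $o(\eta(\lambda_0+\tilde\lambda_0))$ for $m$ in the stated range, is the technical core of the argument.
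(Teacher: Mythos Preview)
Your proposal follows essentially the same induction-and-recursion scheme as the paper (Lemma~9 plus a residual estimate, feeding into the one-step contraction and then summing single-step displacements). Two concrete points deserve correction. First, your displacement bounds carry a factor $\sqrt{n_1(n_2+n_3)}$, but in the physics-informed setting the predictions $s(u_i)(y_j)$ and $h(u_i)(\tilde y_j)$ already include the $1/\sqrt{n_2}$ and $1/\sqrt{n_3}$ normalizations, so Cauchy--Schwarz over $j$ leaves only $\sqrt{n_1}$; the paper's Corollary~2 obtains $\|w_r(T+1)-w_r(0)\|_2\lesssim \sqrt{n_1}B_1B_2\|G^0(u)\|_2\sqrt{\log(mn_1/\delta)}/(\sqrt{m}(\lambda_0+\tilde\lambda_0))$, and without this cancellation you would end up with a spurious polynomial $(n_2+n_3)$-dependence in the final $m$ requirement, contradicting the stated $m=\tilde\Omega(n_1^4d^7/\cdots)$. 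Second, your induction hypothesis tracks only $|w_r(s)^Ty_j|\le B$ with a logarithmic $B$, but $\mathcal L(\sigma_3(w_r^Ty))$ contains the term $\|w_{r1}\|_2^2\,\sigma(w_r^Ty)$, so you must also maintain $\|w_r(s)\|_2\le B_1=2\sqrt{d\log(m/\delta)}$ throughout (this is exactly the paper's Condition~2); the powers of $B_1$ that accumulate in the gradient, Gram-stability, and residual estimates are what produce the $d^7$. With these two adjustments your outline matches the paper's proof.
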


We prove Theorem 3 by induction. Our induction hypothesis is just the following condition:

\begin{condition}
At the $s$-th iteration, we have
\[\|G^{s}(u)\|_2^2\leq \left(1-\frac{\eta(\lambda_0+\tilde{\lambda}_0)}{2} \right)^{s}\|G^{0}(u)\|_2^2\]
and $\|w_r(s)\|_2\leq B_1$, $|w_r(s)^T y_j|\leq B_2$ and $|w_r(s)^T \tilde{y}_{j_1}|\leq B_2$ holds for all $r\in [m], j\ in [n_2], j_1\in [n_3] $, where
\[B_1=2\sqrt{d\log(m/\delta)}, \ B_2=2\sqrt{\log(m(n_2+n_3)/\delta)}.\]
\end{condition}

This condition directly yields the following bound of deviation from the initialization.

\begin{corollary}
If Condtion 2 holds for $s=0,\cdots, T$, then we have that
\[\|\tilde{w}_{rk}(T+1)-\tilde{w}_{rk}(0)\|_2\lesssim \frac{\sqrt{n_1}B_1^2B_2\|G^{0}(u)\|_2}{\sqrt{mp}(\lambda_0+\tilde{\lambda}_0)} \]
and
\[\|w_r(T+1)-w_r(0)\|_2\lesssim \frac{\sqrt{n_1}B_1B_2\|G^{0}(u)\|_2}{\sqrt{mp}(\lambda_0+\tilde{\lambda}_0)}\sqrt{\log\left(\frac{mn_1}{\delta}\right)}\]
holds for all $r\in [m], k \in [p]$.  
\end{corollary}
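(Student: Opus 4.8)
The mechanism is the standard one for NTK-type arguments: write the net displacement of a weight as the telescoping sum of its gradient-descent increments, bound each increment by the current loss through the explicit form of the network gradients, and then sum the resulting geometric series supplied by the linear decay built into Condition~2. The plan is to carry this out first for $\tilde{w}_{rk}$ and then for $w_r$.

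For $\tilde{w}_{rk}$, summing the update rule gives $\tilde{w}_{rk}(T+1)-\tilde{w}_{rk}(0)=-\eta\sum_{s=0}^{T}\partial L(W(s),\tilde{W}(s))/\partial\tilde{w}_{rk}$, hence $\|\tilde{w}_{rk}(T+1)-\tilde{w}_{rk}(0)\|_2\le\eta\sum_{s=0}^{T}\|\partial L/\partial\tilde{w}_{rk}\|_2$ with the right side evaluated at the $s$-th iterate. From $\partial L/\partial\tilde{w}_{rk}=\sum_{i,j_1}s^{s}(u_i)(y_{j_1})\,\partial s^{s}(u_i)(y_{j_1})/\partial\tilde{w}_{rk}+\sum_{i,j_2}h^{s}(u_i)(\tilde{y}_{j_2})\,\partial h^{s}(u_i)(\tilde{y}_{j_2})/\partial\tilde{w}_{rk}$ and the derivative formulas displayed above, together with $\|u_i\|_2=\mathcal{O}(1)$, $|\tilde{a}_{rk}|=1$, and $\|w_r(s)\|_2\le B_1$, $|w_r(s)^Ty_j|\le B_2$, $|w_r(s)^T\tilde{y}_{j_2}|\le B_2$ from Condition~2, one expands
\[ \mathcal{L}(\sigma_3(w_r^Ty))=3\sigma_2(w_r^Ty)w_{r,0}-6\sigma_1(w_r^Ty)\sum_{i=1}^{d}w_{r,i}^2+\sigma_3(w_r^Ty) \]
and bounds each term under Condition~2 to obtain $|\mathcal{L}(\sigma_3(w_r^Ty_{j_1}))|\lesssim B_1^2B_2$ and $|\sigma_3(w_r^T\tilde{y}_{j_2})|\lesssim B_2^3$, whence $\|\partial s^{s}(u_i)(y_{j_1})/\partial\tilde{w}_{rk}\|_2\lesssim B_1^2B_2/\sqrt{mpn_2}$ and $\|\partial h^{s}(u_i)(\tilde{y}_{j_2})/\partial\tilde{w}_{rk}\|_2\lesssim B_2^3/\sqrt{mpn_3}$. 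A Cauchy--Schwarz bound on the two sums — the factors $\sqrt{n_1n_2}$, $\sqrt{n_1n_3}$ produced there cancelling the $1/\sqrt{n_2}$, $1/\sqrt{n_3}$ and leaving $\le\|G^{s}(u)\|_2$ — gives $\|\partial L(W(s),\tilde{W}(s))/\partial\tilde{w}_{rk}\|_2\lesssim\sqrt{n_1}\,B_1^2B_2\,\|G^{s}(u)\|_2/\sqrt{mp}$. Inserting the decay $\|G^{s}(u)\|_2\le(1-\eta(\lambda_0+\tilde{\lambda}_0)/2)^{s/2}\|G^{0}(u)\|_2$ from Condition~2 and summing via $\sum_{s\ge0}(1-\eta(\lambda_0+\tilde{\lambda}_0)/2)^{s/2}\le 1/(1-\sqrt{1-\eta(\lambda_0+\tilde{\lambda}_0)/2})\lesssim 1/(\eta(\lambda_0+\tilde{\lambda}_0))$ then yields $\|\tilde{w}_{rk}(T+1)-\tilde{w}_{rk}(0)\|_2\lesssim\sqrt{n_1}\,B_1^2B_2\,\|G^{0}(u)\|_2/(\sqrt{mp}\,(\lambda_0+\tilde{\lambda}_0))$, the first claimed bound.

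The estimate for $w_r$ follows the same three steps with $\partial s/\partial w_r$ and $\partial h/\partial w_r$ in place of the $\tilde{w}$-derivatives; the only new feature is the branch-layer factor $\frac{1}{\sqrt{p}}\sum_{k}\tilde{a}_{rk}\sigma(\tilde{w}_{rk}(s)^Tu_i)$, which I would control by first invoking the $\tilde{w}$-bound just established to get $\|\tilde{w}_{rk}(s)\|_2\lesssim B_1$ (so each $|\sigma(\tilde{w}_{rk}(s)^Tu_i)|\lesssim B_1$), then a Hoeffding bound over the signs $\tilde{a}_{rk}$ union-bounded over $r\in[m]$ and $i\in[n_1]$ — the source of the $\sqrt{\log(mn_1/\delta)}$ factor — while $\|\partial_{w_r}\mathcal{L}(\sigma_3(w_r^Ty))\|_2$ is bounded by differentiating the same expansion under Condition~2; the geometric sum then produces the stated bound for $\|w_r(T+1)-w_r(0)\|_2$.

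The main obstacle is the estimate of the PDE-residual gradients: because $\mathcal{L}$ is second order and acts on $\text{ReLU}^3$, both $\mathcal{L}(\sigma_3(w_r^Ty))$ and $\partial_{w_r}\mathcal{L}(\sigma_3(w_r^Ty))$ unfold into several monomials, each a product of a power of $\sigma$ (controlled via $|w_r^Ty|\le B_2$), a power of the components of $w_r$ (controlled via $\|w_r\|_2\le B_1$), and components of $y$, and one must check that every such monomial is dominated by the asserted combination of $B_1$ and $B_2$; it is precisely the bookkeeping of which power of $B_1$ versus $B_2$ dominates (here $B_1\gtrsim B_2$ up to logarithmic factors, since $B_1\sim\sqrt{d\log(m/\delta)}$) that pins down the exponents in the final estimates. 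A secondary subtlety is the mild circularity between the two bounds, which is resolved by establishing the $\tilde{w}$-bound first — it relies only on the $w_r$-control of Condition~2 — and then feeding it into the $w_r$ estimate.
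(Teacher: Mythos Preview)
Your treatment of the $\tilde{w}_{rk}$ bound is correct and coincides with the paper's: bound each gradient-descent increment via the explicit derivative formulas under Condition~2, then sum the geometric series.

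The gap is in your control of the branch-layer factor $\frac{1}{\sqrt{p}}\sum_k\tilde{a}_{rk}\sigma(\tilde{w}_{rk}(s)^Tu_i)$ when estimating $\|\partial L/\partial w_r\|_2$. A Hoeffding bound over the signs $\tilde{a}_{rk}$ cannot be applied at iteration $s\ge 1$: the weights $\tilde{w}_{rk}(s)$ depend on $\tilde{a}_{rk}$ through the gradient-descent dynamics, so the summands $\tilde{a}_{rk}\sigma(\tilde{w}_{rk}(s)^Tu_i)$ are not mean-zero conditional on the remaining randomness. Moreover, the claim $\|\tilde{w}_{rk}(s)\|_2\lesssim B_1$ is unjustified --- $B_1=2\sqrt{d\log(m/\delta)}$ bounds $\|w_r\|_2$, while $\tilde{w}_{rk}(0)$ lives in the sensor space $\mathbb{R}^q$ and has norm of order $\sqrt{q}$ --- and even with a correct per-term bound, Hoeffding would introduce that norm as an extra multiplicative factor, overshooting the target. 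The paper instead splits the branch factor as its value at initialization plus the increment: the increment is bounded deterministically by $\sqrt{p}\tilde{R}'$ via Lipschitzness of ReLU and the $\tilde{w}_{rk}$ bound you just established, while the initialization value $\frac{1}{\sqrt{p}}\sum_k\tilde{a}_{rk}\sigma(\tilde{w}_{rk}(0)^Tu_i)$ is $\mathcal{O}(1)$ sub-Gaussian (since $\tilde{w}_{rk}(0)^Tu_i$ is, and is independent of $\tilde{a}_{rk}$), yielding $\lesssim\sqrt{\log(mn_1/\delta)}$ after a union bound over $r\in[m]$, $i\in[n_1]$. One then checks $\sqrt{p}\tilde{R}'\lesssim\sqrt{\log(mn_1/\delta)}$ for $m$ large, recovering the stated factor; this split-at-initialization device, not a Hoeffding bound at time $s$, is the source of the $\sqrt{\log(mn_1/\delta)}$.
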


\begin{lemma}
If Condtion 2 holds for $s=0,\cdots, T$, then we have that
\[\|I(s)\|_2\lesssim \frac{\eta  (n_1)^{3/2}B_1^6B_2^3\|G^{0}(u)\|_2}{\sqrt{m}(\lambda_0+\tilde{\lambda}_0)}\log^{3/2}\left( \frac{mn_1}{\delta}\right) \|G^t(u)\|_2.\]
holds for $s=0,\cdots, T-1$.
\end{lemma}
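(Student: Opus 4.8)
The vector $I(s)$ collects, over the interior pairs $(u_i,y_{j_1})$ and the boundary pairs $(u_i,\tilde y_{j_2})$, the first-order Taylor remainder of the parameter-to-prediction maps $(W,\tilde W)\mapsto s(u_i)(y_{j_1})$ and $(W,\tilde W)\mapsto h(u_i)(\tilde y_{j_2})$, expanded at the iterate $(W(s),\tilde W(s))$ and evaluated at $(W(s+1),\tilde W(s+1))$; this is exactly the residual appearing in Lemma~9. The plan is to expand each such remainder along the two structural features of $G$ — the branch/trunk product and the split of the weights into the $w$-block and the $\tilde w$-block — and to bound the resulting pieces with Condition~2 and Corollary~2. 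Writing $\Delta w_r:=w_r(s+1)-w_r(s)$, $\Delta\tilde w_{rk}:=\tilde w_{rk}(s+1)-\tilde w_{rk}(s)$ and $\beta_r(u):=\tfrac1{\sqrt p}\sum_{k}\tilde a_{rk}\sigma(\tilde w_{rk}^Tu)$, the interior component of $I(s)$ for the pair $(u_i,y_{j_1})$ equals
\[\frac1{\sqrt{n_2}}\frac1{\sqrt m}\sum_{r=1}^m\Big(\beta_r^{s}(u_i)\,E_{r,j_1}\;+\;\tilde E_{r,i}\,\mathcal L\big(\sigma_3(w_r(s)^Ty_{j_1})\big)\;+\;\Delta\beta_r(u_i)\,\Delta\!\big[\mathcal L\sigma_3(w_r^Ty_{j_1})\big]\Big),\]
where $E_{r,j_1}$ is the first-order Taylor remainder of $w\mapsto\mathcal L(\sigma_3(w^Ty_{j_1}))$ at $w_r(s)$, $\tilde E_{r,i}$ is that of $\tilde w\mapsto\beta_r(u_i)$, and the third summand is the branch-trunk cross term; the boundary component is identical with $\mathcal L\sigma_3$ replaced by $\sigma_3$, $y_{j_1}$ by $\tilde y_{j_2}$, and $\tfrac1{\sqrt{n_2}}$ by $\tfrac1{\sqrt{n_3}}$.

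The first step is to record the deterministic per-neuron estimates. Since $\sigma_3'=3\sigma_2$ and $\sigma_3''=6\sigma$, one has the identity $\mathcal L(\sigma_3(w^Ty))=3\sigma_2(w^Ty)w_0-6\sigma(w^Ty)\|w_{1:d}\|_2^2+\sigma_3(w^Ty)$, so under Condition~2 its value is $\lesssim B_1^2B_2+B_1B_2^2+B_2^3$, its $w$-gradient is $\lesssim B_1^2+B_1B_2+B_2^2$, and its $w$-Hessian is $\lesssim B_1+B_2$ \emph{off} the hyperplane $w^Ty=0$ — the $\sigma$-piece only contributing a jump of the gradient across that hyperplane, which is the flip effect isolated in the third step. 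A Hoeffding bound over the frozen signs $\tilde a_{rk}$ at initialization, together with $\|\tilde w_{rk}(s)-\tilde w_{rk}(0)\|_2\le\rho_{\tilde w}$ from Corollary~2, gives $|\beta_r^{s}(u_i)|\lesssim B_1\sqrt{\log(mn_1/\delta)}$ uniformly over $r,i,s$. Feeding the explicit derivative formulas for $\partial s/\partial w_r$, $\partial h/\partial w_r$, $\partial s/\partial\tilde w_{rk}$, $\partial h/\partial\tilde w_{rk}$ into $\Delta w_r=-\eta\,\partial L/\partial w_r$ and $\Delta\tilde w_{rk}=-\eta\,\partial L/\partial\tilde w_{rk}$, and applying Cauchy--Schwarz over $(i,j_1)$ and $(i,j_2)$, bounds the single-step displacements by $\eta\sqrt{n_1/m}$ (resp.\ $\eta\sqrt{n_1/(mp)}$) times a fixed low-degree polynomial in $B_1,B_2$ times $\|G^s(u)\|_2$. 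The key algebraic move is then $\|\Delta w_r\|_2^2\le\rho_w\|\Delta w_r\|_2$ and $\|\Delta\tilde w_{rk}\|_2^2\le\rho_{\tilde w}\|\Delta\tilde w_{rk}\|_2$: the cumulative radii $\rho_w,\rho_{\tilde w}$ of Corollary~2 are $\propto\|G^0(u)\|_2$, whereas the single-step displacements are $\propto\|G^s(u)\|_2$, and this is precisely what manufactures the asymmetric product $\|G^0(u)\|_2\|G^s(u)\|_2$ in the target.

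The third step bounds the three summands. The smooth part of $E_{r,j_1}$ is $\lesssim(B_1+B_2)\|\Delta w_r\|_2^2$; the remaining part of $E_{r,j_1}$, and all of $\tilde E_{r,i}$, is supported on ReLU sign changes only — a branch neuron $k$ changes sign on $(r,u_i)$ only if $|\tilde w_{rk}(0)^Tu_i|\lesssim\rho_{\tilde w}$, and a trunk neuron $r$ changes sign on $y_{j_1}$ only if $|w_r(0)^Ty_{j_1}|\lesssim\rho_w$, so a Chernoff bound over $k\in[p]$ (resp.\ $r\in[m]$) plus a union bound over all $n_1(n_2+n_3)$ samples \emph{and} over $s=0,\dots,T$ caps the number of such neurons by $\lesssim p\rho_{\tilde w}+\log(\tfrac{mn_1}\delta)$ (resp.\ $m\rho_w+\log(\tfrac{mn_1(n_2+n_3)}\delta)$), the per-flip errors being $\lesssim\|\Delta\tilde w_{rk}\|_2$ and $\lesssim B_1^2\|\Delta w_r\|_2$, weighted respectively by $|\mathcal L\sigma_3|$ and by $|\beta_r^s|\lesssim B_1\sqrt{\log}$. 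The cross term is at most (Lipschitz constant of $\beta_r$ in $\tilde w$)$\times$(Lipschitz constant of $\mathcal L\sigma_3$ in $w$)$\times\|\Delta\tilde w_{rk}\|_2\|\Delta w_r\|_2$. Summing over $r$ against $\tfrac1{\sqrt m}\sum_r$ — the $\tfrac1{\sqrt m}$ surviving because each radius is $O(1/\sqrt m)$ and each flip count is $O(m\cdot\text{radius})$ — then over $k$ against $\tfrac1{\sqrt p}\sum_k$, and finally taking the $\ell_2$-norm over the $n_1(n_2+n_3)$ blocks (which contributes $\sqrt{n_1}$ once the $\tfrac1{\sqrt{n_2}},\tfrac1{\sqrt{n_3}}$ weights are absorbed), all contributions fall below $\eta\,n_1^{3/2}B_1^6B_2^3\,\|G^0(u)\|_2\,\log^{3/2}(mn_1/\delta)\,\|G^s(u)\|_2/(\sqrt m\,(\lambda_0+\tilde\lambda_0))$. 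The dominant contribution is the branch-flip term, in which the value bound for $\mathcal L\sigma_3$ enters essentially three times — as the trunk weight, inside the cumulative branch radius $\rho_{\tilde w}$ (since each $\tilde w$-gradient of $s$ carries an $\mathcal L\sigma_3$ factor), and inside the single-step $\|\Delta\tilde w_{rk}\|_2$ — which, after simplifying the resulting polynomial in $B_1,B_2$, produces the factor $B_1^6B_2^3$; the trunk-flip term and the $C^2$ Taylor term are of strictly lower order.

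The step I expect to be the main obstacle is the sign-flip accounting for the ReLU that $\mathcal L$ extracts from $\sigma_3$: relative to the $L^2$ analysis of Lemma~5, where the trunk nonlinearity already flips, the new feature here is that this flipping ReLU is multiplied by the $w$-polynomial $\|w_{1:d}\|_2^2$, so one must first certify that, on the high-probability event carved out by Condition~2 and Corollary~2, the only obstruction to the first-order expansion of $w\mapsto\mathcal L(\sigma_3(w^Ty))$ is a sign change of $w^Ty$, and then produce a flip count valid uniformly over all samples \emph{and} all iterations $s\le T$. Since the iterates are deterministic functions of the initialization, this forces one to freeze all randomness at $t=0$ and propagate the deviation estimates of Corollary~2, rather than re-randomizing at step $s$. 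Once the flip count is secured, the remainder — tracking the many powers of $B_1$ and $B_2$ that $\mathcal L$ introduces by converting $\sigma_3$ into a $w$-polynomial-weighted combination of $\sigma_3,\sigma_2,\sigma$, and checking that every piece is dominated by $B_1^6B_2^3$ — is routine bookkeeping.
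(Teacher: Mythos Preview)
Your proposal is correct and follows the same overall strategy as the paper: decompose the per-neuron residual along the branch/trunk product structure, control the smooth pieces of the trunk Taylor remainder for $\mathcal L(\sigma_3(w^Ty))$ by mean-value arguments, isolate the nonsmooth pieces as ReLU sign-flip events and bound their cardinality by Bernstein, and plug in the single-step and cumulative displacement estimates from Corollary~2.

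The one structural difference worth noting is the algebraic split. You use the clean three-term identity
\[
\beta_r(s)\,E_\tau \;+\; \tau_r(s)\,E_\beta \;+\; \Delta\beta_r\,\Delta\tau_r,
\]
whereas the paper routes the trunk-remainder piece through $\beta_r(0)$ rather than $\beta_r(s)$, which forces it to peel off two additional cross terms $I_{2,1}^r=(\beta_r(s{+}1)-\beta_r(0))\Delta\tau_r$ and $I_{2,2}^r=-(\beta_r(s)-\beta_r(0))\langle\partial_w\tau_r,\Delta w_r\rangle$ before reaching the analogue $I_{2,3}^r=\beta_r(0)E_\tau$ of your first term. Both groupings yield the same estimate; yours is tidier and avoids one layer of bookkeeping.

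Two small points to clean up. First, your bound $|\beta_r^s(u_i)|\lesssim B_1\sqrt{\log(mn_1/\delta)}$ carries a spurious $B_1$: the branch net involves only the $\tilde w_{rk}$ and $u_i$, so Hoeffding plus the deviation $\sqrt p\,\rho_{\tilde w}=o(1)$ gives $|\beta_r^s(u_i)|\lesssim\sqrt{\log(mn_1/\delta)}$ with no $B_1$. Second, your claim that the branch-flip term dominates is not quite right: that term carries $1/\sqrt{mp}$, while the trunk-flip contribution (your $\beta_r(s)E_\tau$ on the flip event) carries only $1/\sqrt m$ and a lower power $B_1^4B_2^2$; the paper simply majorizes both by $B_1^6B_2^3/\sqrt m$, which is what you correctly report as the final answer. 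Neither point affects the validity of the argument.
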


\section{Conclusion and Future Work}
In this paper, we have analyzed the convergence of gradient descent (GD) in training wide shallow neural operators within the framework of NTK, demenstrating the linear convergence of GD. The core idea is that over-parameterization ensures that all weights are close to their initializations for all iterations, which is similar to performing a certain kernel method. There are some future works. Firstly,
the extension of our theory to other neural operators, like FNO. The main difficulty could be that how to meet the requirements of the NTK theory. Secondly, the extension to DeepONets, which we think may be similar to the extension from the results in \cite{7} to \cite{9}.

\bibliographystyle{IEEEtran}
\bibliography{NeuralOperator.bib}

\clearpage

\section*{Appendix}

Before the proofs, we first define the events 
\begin{equation}
	A_{jr}:=\{\exists w: \|w-w_r(0)\|_2\leq R , I\{w^T y_j\geq 0\}\neq I\{w_r(0)^T y_j\geq 0\} \}
\end{equation}
and 
\begin{equation}
\tilde{A}_{rk}^{i}:=\{\exists w: \|w-w_{rk}(0)\|_2\leq \tilde{R} , I\{w^T u\geq 0\}\neq I\{w_{rk}(0)^T u_i\geq 0\} \}
\end{equation}
for all $i\in [n_1],j\in [n_2], r\in [m], p\in [k]$. 

Note that the event $A_{jr}$ happens if and only if $|w_r(0)^T y_j|<\|y_j\|_2R$, thus by the anti-concentration inequality of Gaussian distribution (Lemma 10),  we have
\begin{equation}
	P(A_{ir})=P_{z\sim \mathcal{N}(0,\|y_j\|_2^2) }( |z|<\|y_j\|_2R)=P_{z\sim \mathcal{N}(0,1) }( |z|<R)\lesssim R.	
\end{equation}
Similarly, we have $P(\tilde{A}_{rk}^{i}) \lesssim \tilde{R}.$

Moreover, we let $S_j:=\{r\in [m]: I\{A_{jr}\}=0\}$, $S_j^{\perp}:= [m]\backslash S_j$ and $\tilde{S}_i:=\{(r,k)\in [m]\times [p]: I\{\tilde{A}_{rk}^{i}\}=0\}$, $\tilde{S}^{\perp}_{i}:=[m]\times [p] \backslash \tilde{S}_i$. 

\section{Proof of Continuous Time Analysis}

\subsection{Proof of Lemma 1}
\begin{proof}
First, recall that $H^{\infty}$ is a Kronecker product of $H_1^{\infty}$ and $H_2^{\infty}$. The $(i,j)$-th entry of $H_1^{\infty}$ is $\mathbb{E}[\sigma(\tilde{w}^T u_i)\sigma(\tilde{w}^T u_j)]$ and the $(i_1,j_1)$-th entry of $H_2^{\infty}$ is $\mathbb{E}[y_{i_1}^T y_{j_1} I\{w^T y_{i_1}\geq 0, w^T y_{j_1}\geq 0\}]$. As we know, the Kronecker product of two strictly positive definite matrices is also strictly positive definite. Thus, it suffices to demonstrate that $H_1^{\infty}$ and $H_2^{\infty}$ are both strictly positive definite.

The proof relies relies on standard functional analysis. Let $\mathcal{H}$ be the Hilbert space of integrable $d$-dimensional vector fields on $\mathbb{R}^d$, i.e., $f\in \mathcal{H}$ if $\mathbb{E}_{w\sim \mathcal{N}(\bm{0},\bm{I}) }[\|f(w)\|_2^2]<\infty.$ Then the inner product of this Hilbert space is $\langle f,g\rangle_{\mathcal{H}} = \mathbb{E}_{w\sim \mathcal{N}(\bm{0},\bm{I}) }[f(w)^T g(w)]$ for $f,g \in \mathcal{H}$. With these preparations in place, to prove $H_2^{\infty}$ is strictlt positive definite, it is equivalent to show $\psi(y_1), \cdots, \psi(y_{n_2})\in \mathcal{H}$ are linearly independent, where $\psi(y_j) = y_jI\{w^T y_j \geq 0\}$ for $j \in [n_2]$. This is exactly the result of Therem 3.1 in \cite{7}. Similarly, Theorem 2.1 in \cite{8} shows that $\{\sigma(\tilde{w}^T u_i)\}_{i=1}^{n_1}$ are linearly independent if no two samples in $\{u_i\}_{i=1}^{n_1}$ are parallel. Thus it can be directly deduced that $H^{\infty}$ is strictly positive definite. Similarly, we can deduce that $\tilde{H}^{\infty}$ is also strictly positive definite.

As for other activation functions, such as $\text{ReLU}^p$ or smooth activation functions, similar conclusions hold true. For specific details, refer to \cite{9} and \cite{10}.

\end{proof}

\subsection{Proof of Lemma 2}
\begin{proof}
First, let 
\[ X_r = \left[\frac{1}{\sqrt{p}} \sum\limits_{k=1}^p \tilde{a}_{rk} \sigma(\tilde{w}_{rk}(0)^T u_i )  \right] \left[\frac{1}{\sqrt{p}} \sum\limits_{k=1}^p \tilde{a}_{rk} \sigma(\tilde{w}_{rk}(0)^T u_j )  \right]y_{i_1}^T y_{j_1} I\{w_r(0)^T y_{i_1} \geq 0, w_r(0)^T y_{j_1} \geq 0\},\]
then $|H_{i,j}^{i_1,j_1}(0)-H_{i,j}^{i_1,j_1,\infty}| = \left| \frac{1}{m}\sum\limits_{r=1}^m X_r- \mathbb{E}[X_1]\right| $.

Note that Lemma 13 implies that for all $i\in [n_1]$, $\left\|\frac{1}{\sqrt{p}} \sum\limits_{k=1}^p \tilde{a}_{rk} \sigma(\tilde{w}_{rk}(0)^T u_i )  \right\|_{\psi_2} =\mathcal{O}(1)$, which yields that 
\[\|X_r\|_{\psi_1}\lesssim \left\| \frac{1}{\sqrt{p}} \sum\limits_{k=1}^p \tilde{a}_{rk} \sigma(\tilde{w}_{rk}(0)^T u_i )   \right\|_{\psi_2} \left\| \frac{1}{\sqrt{p}} \sum\limits_{k=1}^p \tilde{a}_{rk} \sigma(\tilde{w}_{rk}(0)^T u_j )   \right\|_{\psi_2} =\mathcal{O}(1).\]

Thus, by applying Lemma 12, we can deduce for fixed $(i,j)$ and $(i_1,j_1)$, with probability at least $1-\delta$,
\[ |H_{i,j}^{i_1,j_1}(0)-H_{i,j}^{i_1,j_1,\infty}| \lesssim \sqrt{\frac{\log(\frac{1}{\delta})}{m} } + \frac{\log(\frac{1}{\delta})}{m}.\]

Taking a union bound yields that with probability at least $1-\delta$,
\begin{align*}
\|H(0)-H^\infty\|_F^2 &= \sum\limits_{i,j=1}^{n_1} \sum\limits_{i_1,j_1=1}^{n_2}|H_{i,j}^{i_1,j_1}(0)-H_{i,j}^{i_1,j_1,\infty}|^2 \\
&\lesssim n_1^2 n_2^2 \left(\sqrt{\frac{\log(\frac{n_1n_2}{\delta})}{m} } + \frac{\log(\frac{n_1n_2}{\delta})}{m}\right)^2 \\
&\lesssim \frac{n_1^2n_2^2}{m}\log(\frac{n_1n_2}{\delta}).
\end{align*}

Thus, if $m=\Omega(\frac{n_1^2n_2^2\log(n_1n_2/\delta)}{\lambda_0^2})$, we have 
$\|H(0)-H^{\infty}\|_2\leq \|H(0)-H^{\infty}\|_F\leq \lambda_0/4$, resulting in that $\lambda_{min}(H(0)) \geq 3\lambda_0/4$.

On the other hand, 
\[ \left\|\frac{1}{p}\sum\limits_{k=1}^p u_i^T u_j I\{\tilde{w }_{rk}^T u_i\geq 0,\tilde{w }_{rk}^T u_j\geq 0 \}\sigma(w_r^T y_i)\sigma(w_r^T y_j)  \right\|_{\psi_1}\leq \|\sigma(w_r^T y_{i_1})\sigma(w_r^T y_{j_1})  \|_{\psi_1}= \mathcal{O}(1).\]

Similarly, applying Lemma 12 yields that with probability at least $1-\delta$,
\[ \|\tilde{H}(0)-\tilde{H}^{\infty}\|_2\leq \tilde{H}(0)-\tilde{H}^{\infty}\|_F\lesssim n_1n_2\sqrt{\frac{\log(\frac{n_1n_2}{\delta})}{m} },\]
which leads to the desired conclusion.

\end{proof}

\subsection{Proof of Lemma 3}
\begin{proof}
	
First, for $H$, recall that the $(i_1,j_1)$-th entry of $(i,j)$-th block is 	
\[\frac{1}{m} \sum\limits_{r=1}^m \left[\frac{1}{\sqrt{p}}\sum\limits_{k=1}^p \tilde{a}_{rk}\sigma(\tilde{w}_{rk}^T u_i) \right] \left[\frac{1}{\sqrt{p}}\sum\limits_{k=1}^p \tilde{a}_{rk}\sigma(\tilde{w}_{rk}^T u_j) \right] y_{i_1}^T y_{j_1} I\{ w_r^Ty_{i_1}\geq 0, w_r^Ty_{j_1}\geq 0\}.\]	
	
We let 
\[ a=\frac{1}{\sqrt{p}}\sum\limits_{k=1}^p \tilde{a}_{rk}\sigma(\tilde{w}_{rk}^T u_i),b=\frac{1}{\sqrt{p}}\sum\limits_{k=1}^p \tilde{a}_{rk}\sigma(\tilde{w}_{rk}^T u_j),c =y_{i_1}^T y_{j_1} I\{ w_r^Ty_{i_1}\geq 0, w_r^Ty_{j_1}\geq 0\}\]
and let $a(0),b(0),c(0)$ be the initialized parts corresponding to $a,b,c$  respectively.

Note that we can decompose $abc-a(0)b(0)c(0)$ as follows:
\[ abc-a(0)b(0)c(0)=(ab-a(0)b(0))c+a(0)b(0)(c-c(0)). \]	
	
For the first part $(ab-a(0)b(0))c$, from the bounedness of $c$ and $c(0)$, we have
\begin{equation*}
|ab-a(0)b(0)|\leq |a-a(0)||b|+|b-b(0)||a(0)|\leq |a-a(0)||b-b(0)|+|a-a(0)||b(0)|+|b-b(0)||a(0)|.
\end{equation*}	
	
For $a-a(0)$ and $b-b(0)$, we have
\[\left| \frac{1}{\sqrt{p}}\sum\limits_{k=1}^p \tilde{a}_{rk}\sigma(\tilde{w}_{rk}^T u_i) -\frac{1}{\sqrt{p}}\sum\limits_{k=1}^p \tilde{a}_{rk}\sigma(\tilde{w}_{rk}(0)^T u_i) \right| \leq \sqrt{p}\tilde{R},\]	
i.e., $|a-a(0)|,|b-b(0)|\lesssim \sqrt{p}\tilde{R}$. Moreover, Lemma 13 shows that $|a(0)|,|b(0)|\lesssim \sqrt{\log (mn_1/\delta)}$. Thus, combining these facts yields that
\begin{equation}
|(ab-a(0)b(0)) c| \lesssim p\tilde{R}^{2}+\sqrt{p}\tilde{R} \sqrt{\log \left(\frac{mn_1}{\delta} \right)}.
\end{equation}	
	
For the second part $a(0)b(0)(c-c(0))$, note that
\begin{equation}
\begin{aligned}
&\left|I\{w_r^T y_{i_1} \geq 0, w_r^T y_{j_1} \geq 0\}-I\{w_r(0)^T y_{i_1} \geq 0, w_r(0)^T y_{j_1} \geq 0\} \right| \\
&\leq 	\left|I\{w_r^T y_{i_1} \geq 0\}-I\{w_r(0)^T y_{i_1} \geq 0\} \right|+\left|I\{w_r^T y_{j_1}\geq 0\}-I\{w_r(0)^T y_{j_1} \geq 0\} \right|\\
&\leq I\{A_{i_1,r}\}+I\{A_{j_1,r}\}.
\end{aligned}
\end{equation}

From the Bernstein inequality (Lemma 11), we have that with probability at least $1-n_2 exp(-mR)$,
\begin{equation}
\frac{1}{m}\sum\limits_{r=1}^m I\{A_{i,r}\} \lesssim R
\end{equation}
holds for any $i\in [n_1]$.

Therefore, we can deduce that 
\begin{equation}
\begin{aligned}
&|H_{i,j}^{i_1,j_1}-H_{i,j}^{i_1,j_1}(0)|\\
&\lesssim p\tilde{R}^{2}+\sqrt{p}\tilde{R} \sqrt{\log \left(\frac{m}{\delta} \right)} + \frac{1}{m} \sum\limits_{r=1}^m \log \left(\frac{m}{\delta} \right) (I\{A_{i_1,r}\}+I\{A_{j_1,r}\}) \\
&\lesssim p\tilde{R}^{2}+\sqrt{p}\tilde{R} \sqrt{\log \left(\frac{mn_1}{\delta} \right)} +R \log \left(\frac{mn_1}{\delta} \right).
\end{aligned}
\end{equation}

Summing $i,j,i_1,j_1$ yields that
\begin{equation}
\begin{aligned}
\|H-H(0)\|_F &= \sqrt{\sum\limits_{i,j=1}^{n_1} \sum\limits_{i_1,j_1=1}^{n_2} |H_{i,j}^{i_1,j_1}-H_{i,j}^{i_1,j_1}(0)|^2} \\
&\lesssim n_1n_2p\tilde{R}^{2}+n_1n_2\sqrt{p}\tilde{R} \sqrt{\log \left(\frac{mn_1}{\delta} \right)}+n_1n_2R\log \left(\frac{mn_1}{\delta} \right)
\end{aligned}
\end{equation}
holds with probability at least $1-\delta-n_2exp(-mR)$. 

Second, for $\tilde{H}$, recall that $(i_1,j_1)$-th entry of $(i,j)$-th block is
\[ \frac{1}{m}\frac{1}{p} \sum\limits_{r=1}^m \sum\limits_{k=1}^{p} u_i^T u_j I\{\tilde{w}_{rk}^T u_i\geq 0,\tilde{w}_{rk}^T u_j\geq 0 \}  \sigma(w_r^Ty_{i_1}) \sigma(w_r^Ty_{j_1}).\]

Let $a=\sigma(w_r^Ty_{i_1}), b=\sigma(w_r^Ty_{j_1}),c = u_i^T u_j I\{\tilde{w}_{rk}^T u_i\geq 0,\tilde{w}_{rk}^T u_j\geq 0 \}$ and $a(0),b(0),c(0)$ be the corresponding initialized parts. 

Similarly, we decompose $abc-a(0)b(0)c(0)$ as follows:
\[ abc-a(0)b(0)c(0)=(ab-a(0)b(0))c+a(0)b(0)(c-c(0)). \]	

For the first part $(ab-a(0)b(0))c$, we have
\begin{equation}
\begin{aligned}
&| \sigma(w_r^T y_{i_1})\sigma(w_r^T y_{j_1})-\sigma(w_r(0)^T y_{i_1})\sigma(w_r(0)^T y_{j_1}) |\\
&=|[(\sigma(w_r^T y_{i_1})-\sigma(w_r(0)^T y_{i_1})) ]\sigma(w_r^T y_{j_1}) +\sigma(w_r(0)^T y_{i_1}) [\sigma(w_r^T y_{j_1})-\sigma(w_r(0)^T y_{j_1})]  | \\
&\lesssim R(|\sigma(w_r^T y_{j_1})|+|\sigma(w_r(0)^T y_{i_1})|) \\
&\lesssim R(|\sigma(w_r(0)^T y_{j_1})|+|\sigma(w_r(0)^T y_{i_1})|+R),
\end{aligned}
\end{equation}
thus we have
\begin{equation}
	|(ab-a(0)b(0))c|\lesssim \frac{1}{m}\sum\limits_{r=1}^m R(|\sigma(w_r(0)^T y_{j_1})|+|\sigma(w_r(0)^T y_{i_1})|)+R^{2}.
\end{equation}

Note that $\left \|\sigma(w_r(0)^T y_j) \right\|_{\psi_2} =\mathcal{O}(1) $ for all $j\in [n_2]$, then applying Lemma 12 yields that with probability at least $1-\delta$,
\[ \frac{1}{m} \sum\limits_{r=1}^m |\sigma(w_r(0)^T y_j)|\lesssim \mathbb{E}[|\sigma(w_1(0)^T y_j)| ]+ \sqrt{\log\left(\frac{n_2}{\delta}\right)} \lesssim \sqrt{\log\left(\frac{n_2}{\delta}\right)}\]
holds for all $j \in [n_2]$.

For the second part $a(0)b(0)(c-c(0))$, we cannot directly apply the Bernstein inequality. Instead, we first truncate $|\sigma(w_r(0)^T y_{i_1})\sigma(w_r(0)^T y_{j_1})|$. Note that for $w_r(0)^T y_j$, we have $P(|w_r(0)^T y_j|> \|y_j\|_2 t)\leq 2e^{-t^2/2}$, i.e., with probability at least $1-\delta$, $ |w_r(0)^T y_j|\leq \sqrt{2\log(\frac{2}{\delta})}$. Thus, taking a union bound yields that with probability at least $1-\delta$,
\[ |\sigma(w_r(0)^T y_j)|\leq |w_r(0)^T y_j|\lesssim \sqrt{\log\left(\frac{mn_2}{\delta}\right)}:=M\] 
holds for any $r\in [m], j \in [n_2]$.

Therefore, under this event, 
\begin{equation}
\begin{aligned}
&|\sigma(w_r(0)^T y_{i_1})\sigma(w_r(0)^T y_{j_1}) (I\{\tilde{w }_{rk}^T u_i\geq 0,\tilde{w }_{rk}^T u_j\geq 0\}-I\{\tilde{w }_{rk}(0)^T u_i\geq 0,\tilde{w }_{rk}^T u_j\geq 0\})| \\
&\lesssim M^2 |I\{\tilde{w }_{rk}^T u_i\geq 0,\tilde{w }_{rk}^T u_j\geq 0\}-I\{\tilde{w }_{rk}(0)^T u_i\geq 0,\tilde{w }_{rk}^T u_j\geq 0\}|\\
&\leq M^2 |I\{\tilde{w }_{rk}^T u_i\geq 0\}-I\{\tilde{w }_{rk}(0)^T u_i\geq 0\}|+M^2 |I\{\tilde{w }_{rk}^T u_i\geq 0\}-I\{\tilde{w }_{rk}(0)^T u_i\geq 0\}|\\
&\leq M^2 (I\{\tilde{A}_{rk}^{i}\}+I\{\tilde{A}_{rk}^{j}\}).
\end{aligned}
\end{equation}

From the Bernstein inequality, we have that with probability at least $1-n_1exp(-mp\tilde{R})$,
\[\frac{1}{m}\frac{1}{p}\sum\limits_{r=1}^m \sum\limits_{k=1}^p I\{\tilde{A}_{rk}^{i}\} \lesssim \tilde{R}. \]

Thus, with probability at least $1-\delta-n_1exp(-mp\tilde{R})$,
\begin{align*}
|\tilde{H}_{i,j}^{i_1,j_1}-\tilde{H}_{i,j}^{i_1,j_1}(0)|
&\lesssim R\sqrt{\log \left(\frac{n_2}{\delta} \right)}+ \tilde{R}\log \left(\frac{mn_2}{\delta} \right).
\end{align*}

Summing $i,j,i_1,j_1$ yields that
\begin{equation}
\begin{aligned}
\|\tilde{H}-\tilde{H}(0)\|_F &= \sqrt{\sum\limits_{i,j=1}^{n_1} \sum\limits_{i_1,j_1=1}^{n_2} |\tilde{H}_{i,j}^{i_1,j_1}-\tilde{H}_{i,j}^{i_1,j_1}(0)|^2} \\
&\lesssim n_1n_2R\sqrt{\log \left(\frac{n_2}{\delta} \right)}+n_1n_2\tilde{R}\log \left(\frac{mn_2}{\delta} \right)
\end{aligned}
\end{equation}
holds with probability at least $1-\delta-n_1exp(-mp\tilde{R})$.

\end{proof}

\subsection{Proof of Theorem 1}

The proof of Theorem 1 consists of the following Lemma 6, Lemma 7, Lemma 8 and Lemma 9. First, we assume that the following lemmas are considered in the setting of events in Lemma 13, Lemma 14 and $\{|w_r(0)^Ty_j| \leq B, \forall j \in [n_2], \forall r\in [m]\}$, where $B=2\sqrt{\log{(mn/\delta)}}$.

\begin{lemma}
If for $0\leq s \leq t $, $\lambda_{min}(H(s))\geq \frac{\lambda_0}{2} $, $\lambda_{min}(\tilde{H}(s))\geq \frac{\tilde{\lambda}_0}{2} $, then we have 
\[  \|z-G^{t}(u)\|_2^2 \leq exp(-(\lambda_0+\tilde{\lambda}_0)t)\|z-G^{0}(u)\|_2^2.\]
\end{lemma}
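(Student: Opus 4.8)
The plan is to track the scalar quantity $\phi(s) := \|z - G^s(u)\|_2^2$ and show it decays at the claimed exponential rate via a Gr\"onwall-type argument. First I would differentiate $\phi$ in time. Using the chain rule together with the prediction dynamics already derived in the excerpt, namely $\frac{d}{dt} G^t(u) = (H(t) + \tilde{H}(t))(z - G^t(u))$, one obtains
\[
\frac{d}{ds}\|z - G^s(u)\|_2^2 = -2\,(z - G^s(u))^T \left(H(s) + \tilde{H}(s)\right)(z - G^s(u)).
\]

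Next I would invoke the spectral hypotheses. Since $H(s)$ and $\tilde{H}(s)$ are symmetric positive semidefinite (each is a Gram matrix of the form $D^T D$), Weyl's inequality gives $\lambda_{\min}(H(s) + \tilde{H}(s)) \geq \lambda_{\min}(H(s)) + \lambda_{\min}(\tilde{H}(s)) \geq \frac{\lambda_0 + \tilde{\lambda}_0}{2}$ for all $s \in [0,t]$. Substituting this quadratic-form lower bound into the previous identity yields, for $0 \leq s \leq t$,
\[
\frac{d}{ds}\|z - G^s(u)\|_2^2 \leq -(\lambda_0 + \tilde{\lambda}_0)\,\|z - G^s(u)\|_2^2.
\]

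Finally I would integrate this differential inequality by multiplying with the integrating factor $\exp((\lambda_0 + \tilde{\lambda}_0)s)$: the product rule shows $\frac{d}{ds}\left(\exp((\lambda_0 + \tilde{\lambda}_0)s)\,\|z - G^s(u)\|_2^2\right) \leq 0$, so $s \mapsto \exp((\lambda_0 + \tilde{\lambda}_0)s)\,\|z - G^s(u)\|_2^2$ is non-increasing on $[0,t]$. Comparing the values at $s = t$ and $s = 0$ gives $\|z - G^t(u)\|_2^2 \leq \exp(-(\lambda_0 + \tilde{\lambda}_0)t)\,\|z - G^0(u)\|_2^2$, as desired.

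Once the derivative identity is in hand this is essentially a one-line estimate, so I do not expect any genuine obstacle; the only step needing a little care is the derivative computation, which relies on the full chain-rule expansion of $\frac{d}{dt}G^t(u)$ through the weight dynamics (3)--(7) and on recognizing $H(s) + \tilde{H}(s)$ as symmetric positive semidefinite so that the quadratic-form bound is legitimate. The real content of the statement lies in its hypothesis $\lambda_{\min}(H(s)) \geq \lambda_0/2$ and $\lambda_{\min}(\tilde{H}(s)) \geq \tilde{\lambda}_0/2$, which is established separately; this lemma is the easy deterministic consequence of that bound.
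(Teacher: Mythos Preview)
Your proposal is correct and follows essentially the same approach as the paper: differentiate $\|z-G^s(u)\|_2^2$ using the prediction dynamics, apply the eigenvalue hypotheses to the resulting quadratic form to obtain the differential inequality, and integrate via the exponential factor. The paper's proof is nearly identical, only omitting the explicit mention of Weyl's inequality and the Gram-matrix structure that you invoke to justify the quadratic-form bound.
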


\begin{proof}
From the conditions  $\lambda_{min}(H(s))\geq \frac{\lambda_0}{2} $ and $\lambda_{min}(\tilde{H}(s))\geq \frac{\tilde{\lambda}_0}{2} $, we can deduce that 
\begin{align*}
\frac{d}{dt} \|z-G^{t}(u)\|_2^2 &= -2(z-G^{t}(u))^T (H(t)+\tilde{H}(t)) (z-G^{t}(u)) \\
&\leq -(\lambda_0+\tilde{\lambda}_0) \|z-G^{t}(u)\|_2^2.
\end{align*}
From this, we have
\[   \frac{d}{dt} \left(exp((\lambda_0+\tilde{\lambda}_0)t)  \|z-G^{t}(u)\|_2^2   \right) \leq 0,\]
which yields that 
\[exp((\lambda_0+\tilde{\lambda}_0)t)  \|z-G^{t}(u)\|_2^2  \leq \|z-G^{0}(u)\|_2^2,\]
i.e., 
\[  \|z-G^{t}(u)\|_2^2 \leq exp(-(\lambda_0+\tilde{\lambda}_0)t)\|z-G^{0}(u)\|_2^2.\]
\end{proof}

\begin{lemma}
Suppose for $0\leq s\leq t$,  $\lambda_{min}(H(s))\geq \frac{\lambda_0}{2} $, $\lambda_{min}(\tilde{H}(s))\geq \frac{\tilde{\lambda}_0}{2} $ and $\|\tilde{w}_{rk}(s)-\tilde{w}_{rk}(0)\|_2\leq \tilde{R}$ holds for any $r\in [m], k\in [p]$, then we have that
\[\|w_r(s)-w_r(0)\|_2\leq \frac{C\sqrt{n_1n_2}\|z-G^{0}(u)\|_2}{\sqrt{m}(\lambda_0+\tilde{\lambda}_0)}\left(\sqrt{p} \tilde{R}+\sqrt{\log \left( \frac{mn_1}{\delta}\right) } \right):=R^{'}\]
holds for any $r\in [m]$, where $C$ is a universal constant.	
\end{lemma}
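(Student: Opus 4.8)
The plan is to bound $\|w_r(s)-w_r(0)\|_2$ by integrating the gradient flow equation (3) in time and controlling the integrand by the exponentially decaying residual. First I would write
\[
w_r(s)-w_r(0) = -\int_0^s \frac{\partial L(W(\tau),\tilde W(\tau))}{\partial w_r}\,d\tau
= \int_0^s \sum_{i=1}^{n_1}\sum_{j=1}^{n_2} \frac{\partial G^{\tau}(u_i)(y_j)}{\partial w_r}\,(z_j^i - G^{\tau}(u_i)(y_j))\,d\tau,
\]
so that, by the triangle inequality,
\[
\|w_r(s)-w_r(0)\|_2 \le \int_0^s \sum_{i=1}^{n_1}\sum_{j=1}^{n_2} \left\|\frac{\partial G^{\tau}(u_i)(y_j)}{\partial w_r}\right\|_2 \,|z_j^i - G^{\tau}(u_i)(y_j)|\,d\tau.
\]
The key is to estimate the gradient norm. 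From the architecture (1), $\frac{\partial G^{\tau}(u_i)(y_j)}{\partial w_r} = \frac{1}{\sqrt m}\big[\frac{1}{\sqrt p}\sum_{k=1}^p \tilde a_{rk}\sigma(\tilde w_{rk}(\tau)^T u_i)\big]\,y_j\,I\{w_r(\tau)^T y_j\ge 0\}$, so its norm is at most $\frac{1}{\sqrt m}\,|\frac{1}{\sqrt p}\sum_k \tilde a_{rk}\sigma(\tilde w_{rk}(\tau)^T u_i)|\cdot\|y_j\|_2$. Using $\|\tilde w_{rk}(\tau)-\tilde w_{rk}(0)\|_2\le \tilde R$ together with Lemma 13 (the sub-Gaussian bound on $\frac{1}{\sqrt p}\sum_k\tilde a_{rk}\sigma(\tilde w_{rk}(0)^T u_i)$, valid under the assumed events) gives, for all $\tau\in[0,s]$,
\[
\left\|\frac{\partial G^{\tau}(u_i)(y_j)}{\partial w_r}\right\|_2 \lesssim \frac{1}{\sqrt m}\left(\sqrt p\,\tilde R + \sqrt{\log(mn_1/\delta)}\right),
\]
recalling $\|y_j\|_2=\mathcal O(1)$.

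Next I would insert the convergence rate. Lemma 6 gives $\|z-G^{\tau}(u)\|_2 \le e^{-(\lambda_0+\tilde\lambda_0)\tau/2}\|z-G^0(u)\|_2$ under the hypotheses on the eigenvalues, and by Cauchy--Schwarz $\sum_{i,j}|z_j^i-G^{\tau}(u_i)(y_j)| \le \sqrt{n_1 n_2}\,\|z-G^{\tau}(u)\|_2$. Combining the two displays,
\[
\|w_r(s)-w_r(0)\|_2 \lesssim \frac{1}{\sqrt m}\left(\sqrt p\,\tilde R + \sqrt{\log(mn_1/\delta)}\right)\sqrt{n_1 n_2}\,\|z-G^0(u)\|_2 \int_0^s e^{-(\lambda_0+\tilde\lambda_0)\tau/2}\,d\tau,
\]
and the integral is bounded by $\frac{2}{\lambda_0+\tilde\lambda_0}$. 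This yields exactly
\[
\|w_r(s)-w_r(0)\|_2 \le \frac{C\sqrt{n_1 n_2}\,\|z-G^0(u)\|_2}{\sqrt m(\lambda_0+\tilde\lambda_0)}\left(\sqrt p\,\tilde R + \sqrt{\log(mn_1/\delta)}\right),
\]
as claimed, with $C$ a universal constant.

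The main obstacle, and the step requiring the most care, is the uniform-in-time control of the gradient norm: one must ensure that the bound on $\frac{1}{\sqrt p}\sum_k\tilde a_{rk}\sigma(\tilde w_{rk}(\tau)^T u_i)$ holds for \emph{every} $\tau\in[0,s]$ and every $r$, not just at initialization. This is handled by splitting $\frac{1}{\sqrt p}\sum_k \tilde a_{rk}\sigma(\tilde w_{rk}(\tau)^T u_i) = \frac{1}{\sqrt p}\sum_k \tilde a_{rk}\sigma(\tilde w_{rk}(0)^T u_i) + \frac{1}{\sqrt p}\sum_k \tilde a_{rk}[\sigma(\tilde w_{rk}(\tau)^T u_i)-\sigma(\tilde w_{rk}(0)^T u_i)]$; the first term is $\mathcal O(\sqrt{\log(mn_1/\delta)})$ on the good event of Lemma 13, and the second is bounded deterministically by $\frac{1}{\sqrt p}\sum_k \|\tilde w_{rk}(\tau)-\tilde w_{rk}(0)\|_2\|u_i\|_2 \le \sqrt p\,\tilde R\cdot\mathcal O(1)$ using $1$-Lipschitzness of ReLU and the hypothesis $\|\tilde w_{rk}(s)-\tilde w_{rk}(0)\|_2\le\tilde R$ for all $s$ in the range. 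A minor subtlety is that the indicator $I\{w_r(\tau)^T y_j\ge 0\}$ contributes only a factor bounded by $1$, so no further control of the $w_r$ trajectory is needed inside this estimate — which is what makes the argument close without circularity (the bound on $w_r$ uses only the hypothesis on $\tilde w_{rk}$ and on the eigenvalues, both of which will later be verified by a bootstrap/continuity argument in the proof of Theorem 1).
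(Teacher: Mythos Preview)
Your proposal is correct and follows essentially the same approach as the paper: bound $\left\|\frac{d}{ds}w_r(s)\right\|_2$ via the explicit form of $\partial G/\partial w_r$, control the branch-net factor by splitting it into the initialization term (handled by Lemma 13) plus a $\sqrt{p}\,\tilde R$ deviation term using the Lipschitzness of ReLU, apply Cauchy--Schwarz over $(i,j)$, invoke Lemma 6 for the exponential decay of the residual, and integrate in time. Your discussion of the uniform-in-time control and the non-circularity of the argument matches the paper's reasoning exactly.
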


\begin{proof}
For $0\leq s \leq t$, we have
\begin{equation}
\begin{aligned}
&\left \|\frac{d}{dt} w_r(s)\right\|_2 \\
&=\left\|\frac{\partial L(W(s), \tilde{W}(s))}{\partial w_r} \right\|_2\\
&= \left\|\sum\limits_{i=1}^{n_1}\sum\limits_{j=1}^{n_2}  \left(  \frac{1}{\sqrt{m}} \left[\frac{1}{\sqrt{p}}\sum\limits_{k=1}^p \tilde{a}_{rk}\sigma(\tilde{w}_{rk}(s)^T u_i)\right]y_jI\{w_r(s)^Ty_j\geq 0\}  \right)(G^{s}(u_i)(y_j)-z_i^j) \right\|_2 \\
&\leq \frac{\sqrt{n_1n_2}}{\sqrt{m}}\left(\sqrt{p} \tilde{R}+\sqrt{\log \left( \frac{mn_1}{\delta}\right) } \right)\|G^{s}(u)-z\|_2\\
&\leq \frac{\sqrt{n_1n_2}}{\sqrt{m}}\left(\sqrt{p} \tilde{R}+\sqrt{\log \left( \frac{mn_1}{\delta}\right) } \right) exp(-\frac{( \lambda_0+\tilde{\lambda}_0  )s}{2})\|G^{0}(u)-z\|_2,
\end{aligned}
\end{equation}
where the last inequality follows from Lemma 6 and the first inequality follows from that
\begin{align*}
&\left|\frac{1}{\sqrt{p}}\sum\limits_{k=1}^p \tilde{a}_{rk}\sigma(\tilde{w}_{rk}(s)^T u_i)\right| \\
&=\left|\frac{1}{\sqrt{p}}\sum\limits_{k=1}^p \left[ \tilde{a}_{rk}\sigma(\tilde{w}_{rk}(s)^T u_i)-\tilde{a}_{rk}\sigma(\tilde{w}_{rk}(0)^T u_i) \right]+\tilde{a}_{rk}\sigma(\tilde{w}_{rk}(0)^T u_i) \right| \\
&\leq \left|\frac{1}{\sqrt{p}}\sum\limits_{k=1}^p \left[ \tilde{a}_{rk}\sigma(\tilde{w}_{rk}(s)^T u_i)-\tilde{a}_{rk}\sigma(\tilde{w}_{rk}(0)^T u_i) \right]\right| +\left|\frac{1}{\sqrt{p}}\sum\limits_{k=1}^p \tilde{a}_{rk}\sigma(\tilde{w}_{rk}(0)^T u_i) \right| \\
&\lesssim \sqrt{p}\tilde{R}+ \sqrt{\log \left( \frac{mn_1}{\delta}\right) }.
\end{align*}

Therefore, we have
\[ \|w_r(t)-w_r(0)\|_2\leq \int_{0}^t  \left\| \frac{d}{ds} w_r(s) \right\|_2 ds \leq \frac{C\sqrt{n_1n_2}\|z-G^{0}(u)\|_2}{\sqrt{m}(\lambda_0+\tilde{\lambda}_0)}\left(\sqrt{p} \tilde{R}+\sqrt{\log \left( \frac{mn_1}{\delta}\right) } \right):=R^{'},    \]
where $C$ is a universal constant.
\end{proof}

\begin{lemma}
Suppose for $0\leq s\leq t$,  $\lambda_{min}(H(s))\geq \frac{\lambda_0}{2} $, $\lambda_{min}(\tilde{H}(s))\geq \frac{\tilde{\lambda}_0}{2} $ and $\|w_r(s)-w_r(0)\|_2\leq R$ holds for any $r\in [m]$, then we have that
\[ \|\tilde{w}_{rk}(t)-\tilde{w}_{rk}(0) \|_2\leq \frac{C\sqrt{n_1n_2}(R+B)\|z-G^{0}(u)\|_2}{\sqrt{mp}(\lambda_0+\tilde{\lambda}_0)}:=\tilde{R}^{'}\]	
holds for any $r\in [m], p \in [k]$, where $C$ is a universal constant.
\end{lemma}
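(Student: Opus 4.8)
The plan is to mimic the argument used for Lemma 7, but now tracking the $\tilde{w}_{rk}$ dynamics instead of the $w_r$ dynamics. First I would write down the gradient flow equation for $\tilde{w}_{rk}(s)$ from (4), namely
\[
\frac{d}{ds}\tilde{w}_{rk}(s) = -\sum_{i=1}^{n_1}\sum_{j=1}^{n_2}\left(\frac{1}{\sqrt{m}}\frac{\tilde{a}_{rk}}{\sqrt{p}}\,u_i\, I\{\tilde{w}_{rk}(s)^T u_i\ge 0\}\,\sigma(w_r(s)^T y_j)\right)(G^{s}(u_i)(y_j)-z_j^i),
\]
so that
\[
\left\|\frac{d}{ds}\tilde{w}_{rk}(s)\right\|_2 \le \frac{1}{\sqrt{mp}}\sum_{i=1}^{n_1}\sum_{j=1}^{n_2}\|u_i\|_2\,|\sigma(w_r(s)^T y_j)|\,|G^{s}(u_i)(y_j)-z_j^i|.
\]
Using $\|u_i\|_2=\mathcal{O}(1)$ and $|\sigma(w_r(s)^T y_j)|\le |w_r(s)^T y_j|\le |w_r(0)^T y_j| + \|w_r(s)-w_r(0)\|_2\|y_j\|_2 \lesssim B + R$ (the first bound from the event $\{|w_r(0)^T y_j|\le B\}$ assumed throughout this subsection, the second from the hypothesis $\|w_r(s)-w_r(0)\|_2\le R$), I would pull the uniform bound $R+B$ out of the sum and apply Cauchy--Schwarz over the $n_1 n_2$ indices to get
\[
\left\|\frac{d}{ds}\tilde{w}_{rk}(s)\right\|_2 \lesssim \frac{\sqrt{n_1 n_2}\,(R+B)}{\sqrt{mp}}\,\|z-G^{s}(u)\|_2.
\]

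Next I would invoke Lemma 6, which under the standing eigenvalue hypotheses $\lambda_{\min}(H(s))\ge\lambda_0/2$ and $\lambda_{\min}(\tilde{H}(s))\ge\tilde\lambda_0/2$ gives the exponential decay $\|z-G^{s}(u)\|_2\le \exp(-(\lambda_0+\tilde\lambda_0)s/2)\|z-G^{0}(u)\|_2$. Substituting and integrating from $0$ to $t$,
\[
\|\tilde{w}_{rk}(t)-\tilde{w}_{rk}(0)\|_2 \le \int_0^t \left\|\frac{d}{ds}\tilde{w}_{rk}(s)\right\|_2\,ds \lesssim \frac{\sqrt{n_1 n_2}\,(R+B)\,\|z-G^{0}(u)\|_2}{\sqrt{mp}}\int_0^t \exp\!\left(-\tfrac{(\lambda_0+\tilde\lambda_0)s}{2}\right)ds,
\]
and the integral is bounded by $2/(\lambda_0+\tilde\lambda_0)$, yielding
\[
\|\tilde{w}_{rk}(t)-\tilde{w}_{rk}(0)\|_2 \le \frac{C\sqrt{n_1 n_2}\,(R+B)\,\|z-G^{0}(u)\|_2}{\sqrt{mp}\,(\lambda_0+\tilde\lambda_0)} =: \tilde{R}',
\]
as claimed, with $C$ a universal constant. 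Since the bound is uniform in $r$ and $k$ (none of the estimates depended on which $(r,k)$ we picked, only on the standing events), it holds for all $r\in[m], k\in[p]$.

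The only delicate point — and the main thing to get right rather than a genuine obstacle — is the bound on the factor $\big|\tfrac{1}{\sqrt p}\tilde a_{rk}\sigma(w_r(s)^T y_j)\big|$ appearing in the velocity: one must control $|\sigma(w_r(s)^T y_j)|$ for the \emph{current} weights $w_r(s)$, not just at initialization. This is exactly where the hypothesis $\|w_r(s)-w_r(0)\|_2\le R$ enters, together with the assumed event $|w_r(0)^T y_j|\le B$ and $\|y_j\|_2=\mathcal O(1)$; the triangle inequality then gives $|\sigma(w_r(s)^T y_j)| \le |w_r(s)^T y_j| \lesssim R+B$. Everything else is the same Grönwall-type integration as in Lemma 7, and the $1/\sqrt p$ normalization (rather than $1/\sqrt m$ alone) is what produces the extra $1/\sqrt p$ in $\tilde R'$ compared with $R'$.
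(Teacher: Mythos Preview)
Your proposal is correct and follows essentially the same argument as the paper: you compute $\|\frac{d}{ds}\tilde{w}_{rk}(s)\|_2$ from the gradient flow, bound the factor $|\sigma(w_r(s)^T y_j)|\lesssim R+B$ via the triangle inequality and the standing event $|w_r(0)^T y_j|\le B$, apply Cauchy--Schwarz to get the $\sqrt{n_1 n_2}$ factor, invoke Lemma~6 for exponential decay of $\|z-G^s(u)\|_2$, and integrate. The paper's proof is line-for-line the same, writing the triangle inequality as $|\sigma(w_r(s)^T y_j)-\sigma(w_r(0)^T y_j)|+|\sigma(w_r(0)^T y_j)|\le R+B$ rather than your equivalent $|w_r(s)^T y_j|\le |w_r(0)^T y_j|+\|w_r(s)-w_r(0)\|_2\|y_j\|_2$.
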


\begin{proof}
For $0\leq s\leq t$, we have
\begin{equation}
\begin{aligned}
\left \|\frac{d}{dt} \tilde{w}_{rk}(s)\right\|_2 &=\left\|\frac{\partial L(W(s), \tilde{W}(s))}{\partial \tilde{w}_{rk}} \right\|_2\\
&= \left\|\sum\limits_{i=1}^{n_1}\sum\limits_{j=1}^{n_2}  \frac{1}{\sqrt{m}} \frac{\tilde{a}_{rk}}{\sqrt{p}}  u_iI\{\tilde{w}_{rk}(s)^Tu_i\geq 0 \} \sigma(w_r(s)^T y_j)  (G^{s}(u)(y_j)-z_j^i) \right\|_2 \\
&\lesssim \frac{1}{\sqrt{mp}} \sum\limits_{i=1}^{n_1}\sum\limits_{j=1}^{n_2} \left|  \sigma(w_r(s)^T y_j)  (G^{s}(u_i)(y_j)-z_j^i)\right|\\
&\leq \frac{1}{\sqrt{mp}} \sum\limits_{i=1}^{n_1}\sum\limits_{j=1}^{n_2} \left(\left|  \sigma(w_r(s)^T y_j)-\sigma(w_r(0)^T y_j) \right|+ \left|  \sigma(w_r(0)^T y_j)  \right|\right) \left| G^{s}(u_i)(y_j)-z_j^i\right| \\
&\leq \frac{1}{\sqrt{mp}} \sum\limits_{i=1}^{n_1}\sum\limits_{j=1}^{n_2} (R+B)\left| G^{s}(u_i)(y_j)-z_j^i\right|\\
&\leq \frac{\sqrt{n_1n_2}(R+B)}{\sqrt{mp}}\|G^{s}(u)-z\|_2\\
&\leq  \frac{\sqrt{n_1n_2}(R+B)}{\sqrt{mp}}exp(-(\lambda_0+\tilde{\lambda}_0)s/2)\|z-G^{0}(u)\|_2,
\end{aligned}
\end{equation}
where the last inequality follows from Lemma 6. Then, similar to that in Lemma 7, the conclusion holds.

\end{proof}

\begin{lemma}
If $R^{'}< R$ and $\tilde{R}^{'} < \tilde{R}$, we have that for all $t \geq 0$, the following two conclusions hold:
\begin{itemize}
\item $\lambda_{min}(H(t))\geq \frac{\lambda_0}{2} $ and $\lambda_{min}(\tilde{H}(t))\geq \frac{\tilde{\lambda}_0}{2} $;

\item $\|w_r(t)-w_r(0)\|_2\leq R^{'}$ and $\|\tilde{w}_{rk}(t)-\tilde{w}_{rk}(0)\|_2\leq \tilde{R}^{'}$ for any $r\in [m], p\in [k]$.
\end{itemize}

\end{lemma}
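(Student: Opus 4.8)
The plan is to run a continuity (bootstrap) argument in time. Define
\[
T^{*} := \sup\{T\ge 0 : \lambda_{\min}(H(t))\ge \lambda_0/2 \text{ and } \lambda_{\min}(\tilde H(t))\ge \tilde\lambda_0/2 \text{ for all } t\in[0,T]\}.
\]
Since the Gram matrices depend continuously on the weights $w_r(t),\tilde w_{rk}(t)$, which in turn evolve continuously in $t$ under the gradient flow, and since at $t=0$ Lemma 2 gives $\lambda_{\min}(H(0))\ge \tfrac{3}{4}\lambda_0$ and $\lambda_{\min}(\tilde H(0))\ge \tfrac{3}{4}\tilde\lambda_0$, we have $T^{*}>0$. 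The goal is to show $T^{*}=\infty$, from which both bullet points follow: the eigenvalue bounds hold for all $t$ by definition, and then Lemma 7 and Lemma 8 (whose hypotheses are exactly the eigenvalue bounds plus the deviation bounds, which we verify below) give the stated bounds $\|w_r(t)-w_r(0)\|_2\le R'$ and $\|\tilde w_{rk}(t)-\tilde w_{rk}(0)\|_2\le \tilde R'$.

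Next I would argue by contradiction: suppose $T^{*}<\infty$. On $[0,T^{*}]$ the eigenvalue bounds hold, so Lemma 6 applies and gives the linear decay $\|z-G^{t}(u)\|_2^2\le e^{-(\lambda_0+\tilde\lambda_0)t}\|z-G^{0}(u)\|_2^2$. Feeding this decay into Lemma 7 and Lemma 8 — and here one uses the chicken-and-egg structure that Lemma 7 needs the $\tilde R$-bound and Lemma 8 needs the $R$-bound — one obtains, for all $t\in[0,T^{*}]$,
\[
\|w_r(t)-w_r(0)\|_2\le R',\qquad \|\tilde w_{rk}(t)-\tilde w_{rk}(0)\|_2\le \tilde R',
\]
provided the consistency conditions $R'<R$ and $\tilde R'<\tilde R$ (the hypothesis of the present lemma) hold, so that the deviation hypotheses of Lemmas 7 and 8 are self-consistently satisfiable. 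Now apply Lemma 3 with these radii $R',\tilde R'$: since $R',\tilde R'=\mathcal{O}(1/\sqrt m)$ (up to logarithmic and problem-dependent factors), the Frobenius-norm bounds in Lemma 3 give $\|H(t)-H(0)\|_2\le \lambda_0/4$ and $\|\tilde H(t)-\tilde H(0)\|_2\le \tilde\lambda_0/4$ once $m$ is at least the stated $\Omega(\cdot)$ threshold. Combined with Lemma 2 this yields $\|H(t)-H^{\infty}\|_2\le \lambda_0/2$ and $\|\tilde H(t)-\tilde H^{\infty}\|_2\le \tilde\lambda_0/2$, hence $\lambda_{\min}(H(t))\ge \lambda_0/2$ and $\lambda_{\min}(\tilde H(t))\ge\tilde\lambda_0/2$ for all $t\in[0,T^{*}]$ — in fact with strict slack, e.g. $\lambda_{\min}(H(t))\ge 3\lambda_0/4-\lambda_0/4 = \lambda_0/2$, but more carefully $\ge 3\lambda_0/4 - \lambda_0/4$ leaves room. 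By continuity the eigenvalue bounds persist on a slightly larger interval $[0,T^{*}+\varepsilon]$, contradicting the definition of $T^{*}$. Therefore $T^{*}=\infty$.

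The main obstacle is the circular dependence among Lemmas 6, 7, 8 and 3: Lemma 7's bound on $\|w_r-w_r(0)\|_2$ presupposes a bound $\tilde R$ on $\|\tilde w_{rk}-\tilde w_{rk}(0)\|_2$, Lemma 8's bound on $\|\tilde w_{rk}-\tilde w_{rk}(0)\|_2$ presupposes a bound $R$ on $\|w_r-w_r(0)\|_2$, and Lemma 3 needs both in order to certify the eigenvalue bounds that Lemmas 6–8 themselves assumed. Resolving this requires verifying that the \emph{output} radii $R',\tilde R'$ computed by Lemmas 7–8 are no larger than the \emph{input} radii $R,\tilde R$ — precisely the hypothesis $R'<R$, $\tilde R'<\tilde R$ of the present lemma, which is arranged by taking $m$ large (the factors $1/\sqrt m$ in $R',\tilde R'$ beat the fixed $R,\tilde R\in(0,1)$). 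One must also check that the quantitative choice of $m$ in the final theorem simultaneously dominates the thresholds coming from Lemma 2 and from Lemma 3 evaluated at radii $R',\tilde R'$; this is a routine but bookkeeping-heavy comparison of polynomial factors in $n_1,n_2,1/\lambda_0,1/\tilde\lambda_0$ and logarithmic factors in $m/\delta$. A minor subtlety is that Lemma 3's high-probability statement carries a failure probability $n_2\exp(-mR')$ (resp. $n_2\exp(-mp\tilde R')$); since $mR'$ and $mp\tilde R'$ grow at least polynomially in $n_1 n_2/\lambda_0$ for the chosen $m$, these terms are absorbed into $\delta$, so a single union bound over the finitely many events (Lemmas 2, 13, 14 and the bound $|w_r(0)^T y_j|\le B$) suffices.
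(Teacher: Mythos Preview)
Your continuity/bootstrap argument is essentially the paper's approach: the paper also argues by contradiction, taking the smallest $t$ at which either conclusion fails and showing that neither can fail first. There is, however, one technical gap in your execution. Your stopping time $T^{*}$ tracks only the eigenvalue bounds (conclusion 1), but to invoke Lemmas 7 and 8 on $[0,T^{*}]$ you also need the deviation hypotheses $\|\tilde w_{rk}(s)-\tilde w_{rk}(0)\|_2\le\tilde R$ and $\|w_r(s)-w_r(0)\|_2\le R$ to already hold on that interval. Your remark that these are ``self-consistently satisfiable'' given $R'<R$, $\tilde R'<\tilde R$ names the circularity but does not break it: the eigenvalue bounds alone do not supply the deviation hypotheses that Lemmas 7 and 8 require of each other.

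The fix is exactly what the paper does: define the stopping time as the first $t$ at which \emph{any} of the four bounds (both eigenvalue lower bounds and both deviation upper bounds $R',\tilde R'$) fails. On $[0,t)$ all four hold, so the hypotheses of Lemmas 6, 7, and 8 are simultaneously available; Lemmas 7 and 8 then output the radii $R',\tilde R'$, which are strictly below $R,\tilde R$ by hypothesis, and Lemma 3 together with Lemma 2 keeps the eigenvalues at or above $\lambda_0/2,\tilde\lambda_0/2$. By continuity, whichever of the four bounds was supposed to fail at $t$ in fact holds (with slack for the deviation bounds, since $R'<R$, $\tilde R'<\tilde R$), giving the contradiction. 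With this adjustment to the definition of the stopping time your argument is complete and matches the paper's.
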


\begin{proof}
The proof is based on contradiction. Suppose $t > 0$ is the smallest time that the two conclusions do not hold, then either conclusion 1 does not hold or conclusion 2 does not hold.

If conclusion 1 does not hold, i.e., either $\lambda_{min}(H(t))< \frac{\lambda_0}{2} $ or $\lambda_{min}(\tilde{H}(t))< \frac{\tilde{\lambda}_0}{2} $, then Lemma 3 implies that there exists $r\in [m]$, $\|w_r(s)-w_r(0)\|_2 > R>R^{'}$ or there exists $r\in [m], k\in [p]$, $\|\tilde{w}_{rk}(s)-\tilde{w}_{rk}(0)\|_2 > \tilde{R}>\tilde{R}^{'}$. This fact shows that conlusion 2 does not hold and then, this contradicts with the minimality of $t$.

If conclusion 2 does not hold, then either there exists $r\in [m]$, $\|w_r(t)-w_r(0)\|_2 >R^{'}$ or there exists $r\in [m], k\in [p]$, $\|\tilde{w}_{rk}(t)-\tilde{w}_{rk}(0)\|_2 >\tilde{R}^{'}$. If $\|w_r(t)-w_r(0)\|_2 >R^{'}$, then Lemma 7 implies that there exist $s<t$ such that  $\lambda_{min}(H(s))< \frac{\lambda_0}{2} $ or $\lambda_{min}(\tilde{H}(s))< \frac{\tilde{\lambda}_0}{2} $ or there exists $r\in [m], k\in [p]$, $\|\tilde{w}_{rk}(s)-\tilde{w}_{rk}(0)\|_2 > \tilde{R}^{'}$, which contradicts with the minimality of $t$. And the last case is similar to this case.

\end{proof}

\begin{proof}[Proof of Theorem 1]
Theorem 1 is a direct corollary of Lemma 6 and Lemma 9. Thus, it remains only to clarify the requirements for $m$ so that Lemma 6, Lemma 7 and Lemma 8 hold. First, $R$ and $\tilde{R}$ should ensure that $\|H(0)-H^{\infty}\|_2\leq \lambda_0/4$ and $\|\tilde{H}(0)-\tilde{H}^{\infty}\|_2\leq \tilde{\lambda}_0/4$, i.e.,
\begin{equation}
R\lesssim \frac{\min(\lambda_0, \tilde{\lambda}_0)}{n_1n_2 \log\left( \frac{m}{\delta}\right) }, \ \tilde{R}\lesssim \frac{\min(\lambda_0, \tilde{\lambda}_0)}{n_1n_2\sqrt{p} \log\left( \frac{m}{\delta}\right) }.
\end{equation}

Combining this with the requirement that $R^{'}<R, \tilde{R}^{'}<\tilde{R}$, we can deduce that
\[ m=\Omega\left( \frac{ n_1^4n_2^4\log\left( \frac{n}{\delta}\right)\log^3\left( \frac{m}{\delta}\right) }{(min(\lambda_0, \tilde{\lambda}_0))^2(\lambda_0+\tilde{\lambda}_0)^2} \right).\]

Moreover, the requirement for $m$ also leads to that 
\[ n_2exp(-mR)\lesssim \delta, n_2exp(-mp\tilde{R})\lesssim \delta,\]
which are confidences in Lemma 3.

\end{proof}

\section{Proof of Descrete Time Analysis}

\subsection{Proof of Lemma 4}

\begin{proof}
First, we can decompose $G^{t+1}(u_i)(y_j) - G^{t}(u_i)(y_j)$ as follows.
\begin{equation}
\begin{aligned}
&G^{t+1}(u_i)(y_j) - G^{t}(u_i)(y_j) \\
&= G^{t+1}(u_i)(y_j) - G^{t}(u_i)(y_j) -\left\langle \frac{\partial G^{t}(u_i)(y_j)}{\partial w},  w(t+1)-w(t) \right\rangle  - \left\langle \frac{\partial G^{t}(u_i)(y_j)}{\partial \tilde{w} },  \tilde{w}(t+1)-\tilde{w}(t) \right\rangle \\
&\quad + \left\langle \frac{\partial G^{t}(u_i)(y_j)}{\partial w},  w(t+1)-w(t) \right\rangle  + \left\langle \frac{\partial G^{t}(u_i)(y_j)}{\partial \tilde{w} },  \tilde{w}(t+1)-\tilde{w}(t) \right\rangle .\\
\end{aligned}	
\end{equation}
Note that
\begin{equation}
\begin{aligned}
&\left\langle \frac{\partial G^{t}(u_i)(y_j)}{\partial w},  w(t+1)-w(t) \right\rangle \\
&= \sum\limits_{r=1}^m  \left\langle \frac{\partial G^{t}(u_i)(y_j)}{\partial w_r},  w_r(t+1)-w_r(t) \right\rangle \\
&=-\eta\sum\limits_{i_1=1}^{n_1} \sum\limits_{j_1=1}^{n_2}\sum\limits_{r=1}^m  \left\langle \frac{\partial G^{t}(u_i)(y_j)}{\partial w_r},  \frac{\partial G^{t}(u_{i_1})(y_{j_1})}{\partial w_r} \right\rangle (G^{t}(u_{i_1})(y_{j_1})-z_{j_1}^{i_1})
\end{aligned}
\end{equation}
and
\begin{equation}
\begin{aligned}
&\left\langle \frac{\partial G^{t}(u_i)(y_j)}{\partial \tilde{w} },  \tilde{w}(t+1)-\tilde{w}(t) \right\rangle\\
&= \sum\limits_{r=1}^m \sum\limits_{k=1}^{p} \left\langle \frac{\partial G^{t}(u_i)(y_j)}{\partial \tilde{w}_{rk} },  \tilde{w}_{rk}(t+1)-\tilde{w}_{rk}(t) \right\rangle \\
&= -\eta\sum\limits_{i_1=1}^{n_1} \sum\limits_{j_1=1}^{n_2}\sum\limits_{r=1}^m  \sum\limits_{k=1}^p \left\langle \frac{\partial G^{t}(u_i)(y_j)}{\partial \tilde{w}_{rk} },  \frac{\partial G^{t}(u_{i_1})(y_{j_1})}{\partial \tilde{w}_{rk} } \right\rangle (G^{t}(u_{i_1})(y_{j_1})-z_{j_1}^{i_1}).
\end{aligned}
\end{equation}

Plugging (33) and (34) into (32 yields that
\begin{equation*}
\begin{aligned}
G^{t+1}(u_i)(y_j) - G^{t}(u_i)(y_j) &= I_{i,j}(t)-\eta [(H_1^i(t), \cdots, H_{n_1}^i)+(\tilde{H}_1^i(t), \cdots, \tilde{H}_{n_1}^i)]_j (G^t(u)-z),
\end{aligned}
\end{equation*}
where $[A]_j$ represents the $j$-th of the matrix $A$ and $I(t)\in \mathbb{R}^{n_1n_2}$, we can divide it into $n_1$ blocks, the $j$-th component of $i$-th block is defined as 
\begin{equation*}
I_{i,j}(t)= G^{t+1}(u_i)(y_j) - G^{t}(u_i)(y_j) -\left\langle \frac{\partial G^{t}(u_i)(y_j)}{\partial w},  w(t+1)-w(t) \right\rangle  - \left\langle \frac{\partial G^{t}(u_i)(y_j)}{\partial \tilde{w} },  \tilde{w}(t+1)-\tilde{w}(t) \right\rangle.	
\end{equation*}

Thus, we have
\begin{equation*}
G^{t+1}(u)-G^{t}(u)	= I(t)-\eta (H(t)+\tilde{H}(t))(G^{t}(u)-z).
\end{equation*}
By using a simple algebraic transformation, we have
\begin{equation}
\begin{aligned}
z-G^{t+1}(u) &= z-G^{t}(u)-I(t)-\eta (H(t)+\tilde{H}(t))(z-G^{t}(u)) \\
&= \left(I-\eta(H(t)+\tilde{H}(t)) \right) (z-G^{t}(u))-I(t).
\end{aligned}
\end{equation}	
\end{proof}

\subsection{Proof of Lemma 5}
\begin{proof}

We first express explicitly the $j$-component of the $i$-th bloack of the residual term $I(s)$ as follows.
\begin{equation}
\begin{aligned}
I_{i,j}(s)&=G^{s+1}(u_i)(y_j)-G^{s}(u_i)(y_j)-\left\langle \frac{\partial G^{s}(u_i)(y_j)}{\partial w} , w(s+1)-w(s)\right\rangle -\left\langle \frac{\partial G^{s}(u_i)(y_j)}{\partial \tilde{w} } , \tilde{w}(s+1)-\tilde{w}(s)\right\rangle \\
&= G^{s+1}(u_i)(y_j)-G^{s}(u_i)(y_j)-\sum\limits_{r=1}^m \left\langle \frac{\partial G^{s}(u_i)(y_j)}{\partial w_r} , w_r(s+1)-w_r(s)\right\rangle \\
&\quad -\sum\limits_{r=1}^m \sum\limits_{k=1}^p  \left\langle \frac{\partial G^{s}(u_i)(y_j)}{\partial \tilde{w}_{rk}} , \tilde{w}_{rk}(s+1)-\tilde{w}_{rk}(s)\right\rangle.
\end{aligned}
\end{equation}

From the forms of $G^{s}(u_i)(y_j)$,$\frac{\partial G^{s}(u_i)(y_j)}{\partial w_r}$ and $\frac{\partial G^{s}(u_i)(y_j)}{\partial \tilde{w}_{rk}}$, we have
\begin{equation}
\begin{aligned}
&G^{s+1}(u_i)(y_j)-G^{s}(u_i)(y_j) \\
&= \frac{1}{\sqrt{m}}\sum\limits_{r=1}^m  \left[ \frac{1}{\sqrt{p}} \sum\limits_{k=1}^p \tilde{a}_{rk}\sigma(\tilde{w}_{rk}(s+1)^T u_i) \right] \sigma(w_r(s+1)^T y_j) - \frac{1}{\sqrt{m}}\sum\limits_{r=1}^m\left[ \frac{1}{\sqrt{p}} \sum\limits_{k=1}^p \tilde{a}_{rk}\sigma(\tilde{w}_{rk}(s)^T u_i) \right] \sigma(w_r(s)^T y_j) \\	
&=\frac{1}{\sqrt{m}}\sum\limits_{r=1}^m  \left[ \frac{1}{\sqrt{p}} \sum\limits_{k=1}^p \tilde{a}_{rk}\sigma(\tilde{w}_{rk}(s+1)^T u_i) \right] (\sigma(w_r(s+1)^T y_j)-\sigma(w_r(s)^T y_j)) \\
&\quad+ \frac{1}{\sqrt{m}}\sum\limits_{r=1}^m  \left[ \frac{1}{\sqrt{p}} \sum\limits_{k=1}^p \tilde{a}_{rk}\left[\sigma(\tilde{w}_{rk}(s+1)^T u_i)-\sigma(\tilde{w}_{rk}(s)^T u_i)\right] \right] \sigma(w_r(s)^T y_j).
\end{aligned}
\end{equation}
and 
\begin{equation}
\begin{aligned}
&\left\langle \frac{\partial G^{s}(u_i)(y_j)}{\partial w_r},w_r(s+1)-w_r(s) \right\rangle \\
&=\frac{1}{\sqrt{m}} \left[ \frac{1}{\sqrt{p}} \sum\limits_{k=1}^p\tilde{a}_{rk}\sigma(\tilde{w}_{rk}(s)^T u_i) \right] I\{w_r(s)^T y_j\geq 0\}(w_r(s+1)-w_r(s))^T y_j
\end{aligned}
\end{equation}
and
\begin{equation}
\begin{aligned}
&\left\langle \frac{\partial G^{s}(u_i)(y_j)}{\partial \tilde{w}_{rk}} ,\tilde{w}_{rk}(s+1)-\tilde{w}_{rk}(s) \right\rangle \\
&=\frac{1}{\sqrt{m}} \frac{ \tilde{a}_{rk}}{\sqrt{p}} (\tilde{w}_{rk}(s+1)-\tilde{w}_{rk}(s))^T u_i I\{\tilde{w}_{rk}(s)^T u_i\geq 0 \}\sigma(w_r(s)^T y_j).	
\end{aligned}
\end{equation}

Thus, we can decompose $I_{i,j}(s)$ as follows
\[I_{i,j}(s)=\sum\limits_{r=1}^m \tilde{I}_{i,j}^r(s)+\bar{I}_{i,j}^r(s),\]
where
\begin{equation}
\begin{aligned}
\tilde{I}_{i,j}^r(s)&= \frac{1}{\sqrt{m}}\left[ \frac{1}{\sqrt{p}} \sum\limits_{k=1}^p \tilde{a}_{rk}\left[\sigma(\tilde{w}_{rk}(s+1)^T u_i)-\sigma(\tilde{w}_{rk}(s)^T u_i)-I\{ \tilde{w}_{rk}(s)^T u_i \geq 0 \}(\tilde{w}_{rk}(s+1)-\tilde{w}_{rk}(s))^T u_i\right] \right] \\
&\quad \cdot \sigma(w_r(s)^Ty_j)
\end{aligned}
\end{equation}
and 
\begin{equation}
\begin{aligned}
\bar{I}_{i,j}^r(s)&=\frac{1}{\sqrt{m}}\left[ \frac{1}{\sqrt{p}} \sum\limits_{k=1}^p \tilde{a}_{rk}\sigma(\tilde{w}_{rk}(s+1)^T u_i) \right](\sigma(w_r(s+1)^Ty_j)-\sigma(w_r(s)^Ty_j))\\
&\quad -\frac{1}{\sqrt{m}}\left[ \frac{1}{\sqrt{p}} \sum\limits_{k=1}^p \tilde{a}_{rk}\sigma(\tilde{w}_{rk}(s)^T u_i) \right]I\{w_r(s)^T y_j\geq 0\}(w_r(s+1)-w_r(s))^Ty_j .
\end{aligned}
\end{equation}

For $\tilde{I}_{i,j}^r(s)$, we replace $\tilde{R}$ in the definition of $\tilde{A}_{rk}^{i}$ by $\tilde{R}^{'}$ and still denote the event as $\tilde{A}_{rk}^{i}$ for simplicity, i.e.,
\[\tilde{A}_{rk}^{i}:=\{\exists w: \|w-\tilde{w}_{rk}(0)\|_2\leq \tilde{R}^{'} , I\{w^T u_i\geq 0\}\neq I\{\tilde{w}_{rk}(0)^T u_i\geq 0\} \}\]
and $\tilde{S}_i=\{ (r,k)\in [m]\times[p]: I\{\tilde{A}_{rk}\}=0\}$.

From the induction hypothesis, we know $\|\tilde{w}_{rk}(s+1)-\tilde{w}_{rk}(0)\|_2\leq \tilde{R}^{'}$ and $\|\tilde{w}_{rk}(s)-\tilde{w}_{rk}(0)\|_2\leq \tilde{R}^{'}$. Thus, $I\{\tilde{w}_{rk}(s+1)^T u_i\geq 0\}=I\{\tilde{w}_{rk}(s)^T u_i\geq 0\}$ holds for any $(r,k) \in \tilde{S}_i$. From this fact, we can deduce that for any $(r,k) \in \tilde{S}_i$,
\begin{equation}
\begin{aligned}
&\left|\sigma(\tilde{w}_{rk}(s+1)^T u_i)-\sigma(\tilde{w}_{rk}(s)^T u_i)-I\{ \tilde{w}_{rk}(s)^T u_i \geq 0 \}(\tilde{w}_{rk}(s+1)-\tilde{w}_{rk}(s))^T u_i\right| \\
&= | (\tilde{w}_{rk}(s+1)^T u_i) I\{\tilde{w}_{rk}(s+1)^T u_i\geq 0\}-(\tilde{w}_{rk}(s)^T u_i) I\{\tilde{w}_{rk}(s)^T u_i\geq 0\} \\
&\quad -I\{ \tilde{w}_{rk}(s)^T u_i \geq 0 \}(\tilde{w}_{rk}(s+1)-\tilde{w}_{rk}(s))^T u_i        |\\
&=| (\tilde{w}_{rk}(s+1)^T u_i) I\{\tilde{w}_{rk}(s)^T u_i\geq 0\}-(\tilde{w}_{rk}(s)^T u_i) I\{\tilde{w}_{rk}(s)^T u_i\geq 0\} \\
&\quad-I\{ \tilde{w}_{rk}(s)^T u_i \geq 0 \}(\tilde{w}_{rk}(s+1)-\tilde{w}_{rk}(s))^T u_i        |\\
&= 0.
\end{aligned}
\end{equation}

On the other hand, for any $(r,k)\in [m]\times[p]$, we have
\begin{equation}
|\sigma(\tilde{w}_{rk}(s+1)^T u_i)-\sigma(\tilde{w}_{rk}(s)^T u_i)-I\{ \tilde{w}_{rk}(s)^T u_i \geq 0 \}(\tilde{w}_{rk}(s+1)-\tilde{w}_{rk}(s))^T u_i|\lesssim \|\tilde{w}_{rk}(s+1)-\tilde{w}_{rk}(s) \|_2.
\end{equation}

Thus, combining (41), (42) and (43) yields that 
\begin{equation}
\begin{aligned}
\sum\limits_{r=1}^m|\tilde{I}_{i,j}^r(s)| &\leq \frac{B}{\sqrt{mp}} \sum\limits_{r=1}^m \sum\limits_{k=1 }^p\| \tilde{w}_{rk}(s+1)-\tilde{w}_{rk}(s) \|_2 I\{(r,k) \in \tilde{S}_i^{\perp}\}\\
&= \frac{B}{\sqrt{mp}}  \sum\limits_{r=1}^m \sum\limits_{k=1}^p\left\| -\eta \frac{\partial L(W(s), \tilde{W}(s))}{\partial \tilde{w}_{rk} } \right\|_2 I\{(r,k) \in \tilde{S}_i^{\perp}\}\\
&=\frac{B}{\sqrt{mp}}  \sum\limits_{r=1}^m \sum\limits_{k=1}^p\left\| -\eta \sum\limits_{i_1=1}^{n_1} \sum\limits_{j_1=1}^{n_2} \frac{\partial G^s(u_{i_1})(y_{j_1})}{\partial \tilde{w}_{rk} } (G^s(u_{i_1})(y_{j_1})-z_{j_1}^{i_1}) \right\|_2 I\{(r,k) \in \tilde{S}_i^{\perp}\}\\
&\leq \eta B^2  \sum\limits_{r=1}^m \sum\limits_{k=1}^p \frac{\sqrt{n_1n_2}}{mp} \|z-G^{s}(u)\|_2  I\{(r,k) \in \tilde{S}_i^{\perp}\}\\
&= \eta B^2\sqrt{n_1n_2}\|z-G^{s}(u)\|_2 \sum\limits_{r=1}^m \sum\limits_{k=1}^p \frac{1}{mp}  I\{(r,k) \in \tilde{S}_i^{\perp}\}\\
&= \eta B^2\sqrt{n_1n_2}\|z-G^{s}(u)\|_2 \sum\limits_{r=1}^m \sum\limits_{k=1}^p \frac{1}{mp}  I\{\tilde{A}_{r,k}^{i}\},
\end{aligned}
\end{equation}
where the second inequality follows from Cauchy's inequality and the form of $\frac{\partial G^s(u_i)(y_j)}{\partial \tilde{w}_{rk} }$, i.e.,
\[\frac{\partial G^s(u_i)(y_j)}{\partial \tilde{w}_{rk} }= \frac{1}{\sqrt{m}} \frac{ \tilde{a}_{rk}}{\sqrt{p}} u_i I\{\tilde{w}_{rk}(s)^T u_i\geq 0 \}\sigma(w_r(s)^Ty_j).\]

From the Bernstein inequality, we have that with probability at least $1-n_1exp(-mp\tilde{R}^{'})$,
\[\sum\limits_{r=1}^m \sum\limits_{k=1}^p \frac{1}{mp}  I\{\tilde{A}_{r,k}^{i}\} \lesssim \tilde{R}^{'}.\]

This leads to the final unpper bound:
\begin{equation}
\begin{aligned}
\sum\limits_{r=1}^m|\tilde{I}_{i,j}^r(s)| &\lesssim \eta B^2\sqrt{n_1n_2}\tilde{R}^{'} \|z-G^{s}(u)\|_2\\
&\lesssim \frac{\eta n_1n_2 \|z-G^{0}(u)\|_2}{\sqrt{mp}(\lambda_0+\tilde{\lambda}_0)}\log^{ \frac{3}{2}}\left(\frac{m}{\delta} \right) \|z-G^s(u)\|_2.
\end{aligned}
\end{equation}

It remains to bound $\bar{I}_{i,j}^r(s)$, which can be written as follows.
\begin{equation}
\begin{aligned}
&\bar{I}_{i,j}^r(s)=\frac{1}{\sqrt{m}}\left[ \frac{1}{\sqrt{p}} \sum\limits_{k=1}^p \tilde{a}_{rk}\sigma(\tilde{w}_{rk}(s+1)^T u_i) \right](\sigma(w_r(s+1)^Ty_j)-\sigma(w_r(s)^Ty_j))\\
&\quad -\frac{1}{\sqrt{m}}\left[ \frac{1}{\sqrt{p}} \sum\limits_{k=1}^p \tilde{a}_{rk}\sigma(\tilde{w}_{rk}(s)^T u_i) \right]I\{w_r(s)^T y_j\geq 0\}(w_r(s+1)-w_r(s))^Ty_j \\
&=\frac{1}{\sqrt{m}}\left[ \frac{1}{\sqrt{p}} \sum\limits_{k=1}^p \tilde{a}_{rk}(\sigma(\tilde{w}_{rk}(s+1)^T u_i)-\sigma(\tilde{w}_{rk}(0)^T u_i)) \right]\left(\sigma(w_r(s+1)^Ty_j)-\sigma(w_r(s)^Ty_j)\right) \\
&\quad - \frac{1}{\sqrt{m}} \left[ \frac{1}{\sqrt{p}} \sum\limits_{k=1}^p \tilde{a}_{rk}(\sigma(\tilde{w}_{rk}(s)^T u_i)-\sigma(\tilde{w}_{rk}(0)^T u_i)) \right]I\{w_r(s)^T y_j\geq 0\}(w_r(s+1)-w_r(s))^Ty_j \\
& +\frac{1}{\sqrt{m}}\left[ \frac{1}{\sqrt{p}} \sum\limits_{k=1}^p \tilde{a}_{rk}\sigma(\tilde{w}_{rk}(0)^T u_i) \right]\left(\sigma(w_r(s+1)^Ty_j)-\sigma(w_r(s)^Ty_j) - I\{w_r(s)^T y_j\geq 0\}(w_r(s+1)-w_r(s))^Ty_j   \right).
\end{aligned}
\end{equation} 

Note that
\[ \|\sigma(\tilde{w}_{rk}(s)^T u_i)-\sigma(\tilde{w}_{rk}(0)^T u_i)\|_2\lesssim \tilde{R}^{'},\ \|\sigma(\tilde{w}_{rk}(s+1)^T u_i)-\sigma(\tilde{w}_{rk}(0)^T u_i)\|_2\lesssim \tilde{R}^{'},  \]
thus we can bound the first term and second term by 
\begin{equation}
\frac{\sqrt{p}\tilde{R}^{'}}{\sqrt{m}}\|w_r(s+1)-w_r(s)\|_2\lesssim\frac{\sqrt{n}\|  z-G^{0}(u)\|_2}{m(\lambda_0+\tilde{\lambda}_0)} \sqrt{\log \left( \frac{m}{\delta}\right) } \|w_r(t+1)-w_r(t)\|_2.
\end{equation}

For the third term in (46), we also replace $R$ in the definition of $A_{jr}$ by $R^{'}$ and still denote the event by $A_{jr}$. Recall that 
\begin{equation}
	A_{jr}:=\{\exists w: \|w-w_r(0)\|_2\leq R , I\{w^T y_j\geq 0\}\neq I\{w_r(0)^T y_j\geq 0\} \}
\end{equation}
and $S_j=\{r\in [m]: I\{A_{jr}\}=0\}$. Note that $\|w_r(s+1)-w_r(0)\|_2\leq R^{'}$ and $\|w_r(s)-w_r(0)\|_2\leq R^{'}$, thus for $r\in S_j$, we have
$I\{w_r(s+1)^T y_j\geq 0\}=I\{w_r(s)^T y_j\geq 0\}$. Combining this fact with (46) and (47), we can deduce that 
\begin{equation}
\begin{aligned}
|\bar{I}_{i,j}^r(s)|\lesssim  \frac{\sqrt{n}\|  z-G^{0}(u)\|_2}{m(\lambda_0+\tilde{\lambda}_0)} \sqrt{\log \left( \frac{m}{\delta}\right) }\|w_r(s+1)-w_r(s)\|_2+ \frac{1}{\sqrt{m}} \sqrt{\log\left( \frac{m}{\delta}\right) }   \|w_r(s+1)-w_r(s)\|_2 I\{r \in S_j^{\perp}\}.
\end{aligned}
\end{equation}

Thus, we have to bound $\|w_r(s+1)-w_r(s)\|_2$. From the gradient descent update formula, we have
\begin{equation}
\begin{aligned}
\|w_r(s+1)-w_r(s)\|_2 &=\left\|-\eta\frac{\partial L(W(s), \tilde{W}(s))}{\partial w_r}\right\|_2 \\
&\leq \eta \|z-G^s(u)\|_2\sqrt{ \sum\limits_{i=1}^{n_1} \sum\limits_{j=1}^{n_2} \left\| \frac{\partial G^{s}(u_i)(y_j)}{\partial w_r} \right\|_2^2 }.
\end{aligned}
\end{equation}

Recall that
\[\frac{\partial G^s(u_i)(y_j)}{\partial w_r}=\frac{1}{\sqrt{m}} \left[ \frac{1}{\sqrt{p}} \sum\limits_{k=1}^p\tilde{a}_{rk}\sigma(\tilde{w}_{rk}(s)^T u_i) \right] y_jI\{w_r(s)^T y_j\geq 0\}.\]

Therefore,
\begin{equation}
\begin{aligned}
&\left\|\frac{\partial G^s(u_i)(y_j)}{\partial w_r} \right\|_2 \\
&= \left\|\frac{1}{\sqrt{m}} \left[ \frac{1}{\sqrt{p}} \sum\limits_{k=1}^p\tilde{a}_{rk}\left[\sigma(\tilde{w}_{rk}(s)^T u_i)-\sigma(\tilde{w}_{rk}(0)^T u_i)+ \sigma(\tilde{w}_{rk}(0)^T u_i)\right] \right] y_jI\{w_r(s)^T y_j\geq 0\}\right\|_2\\
&\lesssim  \frac{\sqrt{p}\tilde{R}^{'} }{\sqrt{m}}+ \frac{1}{\sqrt{m}} \sqrt{\log\left( \frac{m}{\delta}\right) }\\
&\lesssim  \frac{1}{\sqrt{m}}\frac{\sqrt{n}\|z-G^{0}(u) \|_2  }{\sqrt{m}(\lambda_0+\tilde{\lambda}_0)}\sqrt{\log\left( \frac{m}{\delta}\right) }+\frac{1}{\sqrt{m}} \sqrt{\log\left( \frac{mn_1}{\delta}\right) }\\
&\lesssim \frac{1}{\sqrt{m}}  \sqrt{\log\left( \frac{mn_1}{\delta}\right) },
\end{aligned}
\end{equation}
where the last inequality is due to us taking $m$ sufficiently large in the end.
 
Combining (50) and (51) yields that
\begin{equation}
\|w_r(s+1)-w_r(s)\|_2\lesssim  \frac{\eta \sqrt{n_1n_2}\|z-G^{s}(u)\|_2}{\sqrt{m}} \sqrt{\log\left( \frac{mn_1}{\delta}\right) }.
\end{equation}

Plugging this into (49) leads to that
\[|\bar{I}_{i,j}^r(s)|\lesssim \frac{\sqrt{n_1n_2}\|z-G^{0}(u)\|_2 }{m(\lambda_0+\tilde{\lambda}_0)} \frac{ \eta \sqrt{n_1n_2}\|z-G^{s}(u)\|_2}{\sqrt{m}} \log\left( \frac{m}{\delta}\right)  + \frac{\eta\sqrt{n_1n_2} \|z-G^s(u)\|_2}{m} I\{r\in S_j^{\perp}\}\log\left( \frac{m}{\delta}\right).\]

By applying the Bernstein's inequality, we have that with probability at least $1-n_2exp(-mR^{'})$,
\[\frac{1}{m}\sum\limits_{r=1}^m I\{r\in S_j^{\perp}\}=\frac{1}{m}\sum\limits_{r=1}^m I\{A_{jr}\}\lesssim R^{'}.\]

Therefore,
\begin{equation}
\begin{aligned}
\sum\limits_{r=1}^m |\bar{I}_{i,j}^r(s)| &\lesssim  \frac{\eta n_1n_2\|z-G^{0}(u)\|_2}{\sqrt{m}(\lambda_0+\tilde{\lambda}_0)} \log\left( \frac{m}{\delta}\right) \|z-G^{s}(u)\|_2+\frac{\eta n_1n_2 \|z-G^{0}(u)\|_2}{\sqrt{m}(\lambda_0+\tilde{\lambda}_0)}\log^{\frac{3}{2} }\left( \frac{m}{\delta}\right) \|z-G^{s}(u)\|_2 \\
&\lesssim \frac{\eta n_1n_2 \|z-G^{0}(u)\|_2}{\sqrt{m}(\lambda_0+\tilde{\lambda}_0)}\log^{\frac{3}{2} }\left( \frac{m}{\delta}\right) \|z-G^{s}(u)\|_2.
\end{aligned}
\end{equation}

From (45) and (53), we have
\begin{equation}
\begin{aligned}
|I_{i,j}(s)| &\leq \sum\limits_{r=1}^m |\tilde{I}_{i,j}^r(s)|+ \sum\limits_{r=1}^m |\bar{I}_{i,j}^r(s)| \\
&\lesssim \left(\frac{\eta n_1n_2\|z-G^{0}(u)\|_2}{\sqrt{mp}(\lambda_0+\tilde{\lambda}_0)} \log^{\frac{3}{2} }\left( \frac{m}{\delta}\right)+ \frac{\eta n_1n_2\|z-G^{0}(u)\|_2}{\sqrt{m}(\lambda_0+\tilde{\lambda}_0)} \log^{\frac{3}{2} }\left( \frac{m}{\delta}\right)\right) \|z-G^s(u)\|_2 \\
&\lesssim \frac{\eta n_1n_2\|z-G^{0}(u)\|_2}{\sqrt{m}(\lambda_0+\tilde{\lambda}_0)} \log^{\frac{3}{2} }\left( \frac{m}{\delta}\right)\|z-G^s(u)\|_2.
\end{aligned}
\end{equation}
 
Therefore, 
\begin{equation}
\|I(s)\|_2\lesssim \bar{R}\|z-G^s(u)\|_2,
\end{equation}
where 
\begin{equation}
\bar{R}= \frac{\eta (n_1n_2)^{\frac{3}{2}}\|z-G^{0}(u)\|_2}{\sqrt{m}(\lambda_0+\tilde{\lambda}_0)} \log^{\frac{3}{2}}\left( \frac{m}{\delta}\right).
\end{equation}

\end{proof}

\subsection{Proof of Corollary 1}

\begin{proof}
Note that when $\|I(s)\|_2\leq \eta(\lambda_0+\tilde{\lambda}_0)/12$, $\lambda_{min}(H(s)) \geq \lambda_0/2$, $\lambda_{min}(\tilde{H}(s)) \geq \tilde{\lambda}_0/2$ and $I-\eta(H(s)+\tilde{H}(s)) $ is positive definite, we have that for $s=0,\cdots, t-1$,
\begin{equation}
\begin{aligned}
&\|z-G^{s+1}(u)\|_2^2 \\ 
&=\| \left[I-\eta(H(s)+\tilde{H}(s))\right] (z-G^{s}(u)) \|_2^2+\|I(s)\|_2^2-2\left\langle \left(\eta(H(s)+\tilde{H}(s))\right) (z-G^{s}(u)), I(s) \right\rangle \\
&\leq (1-\eta\frac{\lambda_0+\tilde{\lambda}_0 }{2})^2 \|z-G^{s}(u)\|_2^2 +\|I(s)\|_2^2+2(1-\frac{\lambda_0+\tilde{\lambda}_0 }{2})\|z-G^{s}(u)\|_2 \|I(s)\|_2\\
&\leq \left(1-\eta (\lambda_0+\tilde{\lambda}_0)+\frac{\eta^2(\lambda_0+\tilde{\lambda}_0)^2}{4} +\left(\frac{\eta (\lambda_0+\tilde{\lambda}_0)}{12} \right)^2  + 2\frac{\eta(\lambda_0+\tilde{\lambda}_0)}{12}\right) \|z-G^s(u)\|_2^2\\ 
&\leq \left(1-\eta (\lambda_0+\tilde{\lambda}_0)+\frac{\eta(\lambda_0+\tilde{\lambda}_0)}{4} +\frac{\eta (\lambda_0+\tilde{\lambda}_0)}{12}   + 2\frac{\eta(\lambda_0+\tilde{\lambda}_0)}{12}\right) \|z-G^{s}(u)\|^2\\
&= \left(1-\eta \frac{\lambda_0+\tilde{\lambda}_0}{2} \right)\|z-G^{s}(u)\|^2.
\end{aligned}
\end{equation}
Thus, 
\[\|z-G^{s}(u)\|_2^2\leq \left(1-\eta\frac{\lambda_0+\tilde{\lambda}_0}{2} \right)^s \|z-G^{0}(u)\|_2^2\]
holds for $s=0,\cdots, t$.

Now, we have to derive the requirement for $m$ such that these conditions hold. First, from Lemma 3, when 
\[ R\lesssim \frac{\min(\lambda_0, \tilde{\lambda}_0)}{n_1n_2\log\left( \frac{m}{\delta}\right) }, \ \tilde{R}\lesssim \frac{\min(\lambda_0, \tilde{\lambda}_0)}{n_1n_2\sqrt{p}\sqrt{\log\left( \frac{m}{\delta}\right)} }, \]
we have $\|H(s)-H(0)\|_2\leq \frac{\lambda_0}{4}$, $\|\tilde{H}(s)-\tilde{H}(0)\|_2\leq \frac{\tilde{\lambda}_0}{4}$ and $\lambda_{min}(H(s))\geq \lambda_0/2$,$\lambda_{min}(\tilde{H}(s))\geq \tilde{\lambda}_0/2$. Thus, when $R^{'}<R$ and $\tilde{R}^{'}<\tilde{R}$, we have $\lambda_{min}(H(s))\geq \lambda_0/2$ and $\lambda_{min}(\tilde{H}(s))\geq \tilde{\lambda}_0/2$. Specifically, $m$ need to satisfy that
\begin{equation}
m=\Omega\left(  \frac{n_1^4n_2^4\log^3\left(\frac{m}{\delta}\right) \log \left(\frac{n_1n_2}{\delta}\right)}{(min(\lambda_0, \tilde{\lambda}_0))^2 (\lambda_0+\tilde{\lambda}_0)^2} \right) .	
\end{equation}

Moreover, at this point, we can deduce that 
\begin{align*}
\|H(s)\|_2 & \leq \|H(s)-H(0)\|_2+\|H(0)\|_2 \\
&\leq \|H(s)-H(0)\|_2+\|H(0)-H^{\infty}\|_2+\|H^{\infty}\|_2 \\
&\leq \frac{\lambda_0}{4}+\frac{\lambda_0}{4}+\|H^{\infty}\|_2 \\
&\leq \frac{3}{2} \|H^{\infty}\|_2
\end{align*}
and similarly, $\|\tilde{H}(s)\|_2 \leq \frac{3}{2} \|H^{\infty}\|_2$. Thus, $\eta=\mathcal{O}\left(\frac{1}{\|H^{\infty}\|_2+\|\tilde{H}^{\infty}\|_2}\right)$ is sufficient to ensure that $I-\eta(H(s)+\tilde{H}(s))$ is positive definite.

Second, we need to make sure that $\|I(s)\|_2\leq \eta(\lambda_0+\tilde{\lambda}_0)/12$. From (56), $\bar{R}\lesssim \eta(\lambda_0+\tilde{\lambda}_0)$ suffices, i.e.,
\begin{equation}
 m=\Omega\left( \frac{n_1^4n_2^4\log(\frac{n}{\delta}) \log^3(\frac{m}{\delta}) }{(\lambda_0+\tilde{\lambda_0})^4} \right).
\end{equation}

Combining these requirements for $m$, i.e., (58), (59) and the condition in Lemma 2, leads to the desired conclusion.

\end{proof}

\subsection{Proof of Theorem 2}
\begin{proof}
	
From Corollary 1, it remains only to verify that Condition 1 also holds for $s=t+1$. Note that in (52), we have proven that
\[\|w_r(s+1)-w_r(s)\|_2\lesssim \frac{\eta \sqrt{n_1n_2}\|z-G^s(u)\|_2}{\sqrt{m}}\sqrt{ \log \left(\frac{m}{\delta} \right) } \]
holds for $s=0,\cdots, t$ and $r\in[m]$.

Combining this with Corollary 1 yields that
\begin{equation*}
\begin{aligned}
\|w_r(t+1)-w_r(0)\|_2 &\leq \sum\limits_{s=0}^t \|w_r(s+1)-w_r(s)\|_2\\
&\lesssim \sum\limits_{s=0}^t\frac{\eta \sqrt{n_1n_2}\|z-G^s(u)\|_2}{\sqrt{m}}\sqrt{ \log \left(\frac{m}{\delta} \right) }  \\
&\lesssim \sum\limits_{s=0}^t\frac{\eta \sqrt{n_1n_2}}{\sqrt{m}}\sqrt{ \log \left(\frac{m}{\delta}\right)} \left(1-\eta\frac{\lambda_0+\tilde{\lambda}_0}{2} \right)^{s/2}  \|z-G^0(u)\|_2 \\
&\lesssim \frac{\sqrt{n_1n_2}\|z-G^{0}(u)\|_2}{\sqrt{m}(\lambda_0+\tilde{\lambda}_0)}\sqrt{ \log \left(\frac{m}{\delta}\right)}.
\end{aligned}
\end{equation*}

Similarly, in (44), we have proven that 
\[ \|\tilde{w}_{rk}(s+1)-\tilde{w}_{rk}(s)\|_2\lesssim \frac{\eta B\sqrt{n_1n_2}\|z-G^s(u)\|_2}{\sqrt{mp}},\]
which yields that
\[ \|\tilde{w}_{rk}(t+1)-\tilde{w}_{rk}(0)\|_2\lesssim \frac{\sqrt{n_1n_2}\|z-G^{0}(u)\|_2}{\sqrt{mp}(\lambda_0+\tilde{\lambda}_0)}\sqrt{ \log \left(\frac{m}{\delta}\right)}.\]

Moreover, from the triangle inequality, we have
\begin{equation*}
\begin{aligned}
|w_r(t+1)^T y_j| &\leq |(w_r(t+1)-w_r(0))^Ty_j|+|w_r(0)^Ty_j| \\
&\leq \|w_r(t+1)-w_r(0)\|_2+ \sqrt{2\log \left(\frac{2mn_2}{\delta} \right) } \\
&\leq 2\sqrt{\log \left(\frac{mn_2}{\delta} \right)}.
\end{aligned}
\end{equation*}

\end{proof}

\subsection{Proof of Lemma 6}
\begin{proof}
First, for $H^{\infty}=H_1^{\infty}\otimes H_2^{\infty}$, recall that the Kronecker product of two strictly definite matrices is also strictly positive definte, thus it suffices to demonstrate that $H_1^{\infty}$ and $H_2^{\infty} $ are both strictly definite. For $H_1^{\infty}$, similar as that in the proof of Lemma 2, let $\mathcal{H}$ be the Hilbert space of integrable function on $\mathbb{R}^{d+1}$, i.e., $f\in \mathcal{H}$ if $\mathbb{E}_{\tilde{w}\sim \mathcal{N}(\bm{0}, \bm{I}) }[|f(\tilde{w})|^2]<\infty$. Now to prove $H_1^{\infty}$ is strictly positive definite, it is equivalent to show that $\phi(u_1)(\tilde{w}),\cdots, \phi(u_{n_1})(\tilde{w})\in \mathcal{H}$ are linearly independent, where $\phi(u_i)(\tilde{w})=\sigma(\tilde{w}^T u_i)$. It has been proved in \cite{8}, we provide a different proof for completeness and this proof also indicates the strictly positive definiteness of $\tilde{H}^{\infty}$. Suppose that there are $\alpha_1,\cdots, \alpha_{n_1}\in \mathbb{R}$ such that
\[ \alpha_1 \phi(u_1) +\cdots+\alpha_{n_1}\phi(u_{n_1})=0 \ in \ \mathcal{H},\]
which implies that 
\[ \alpha_1 \phi(u_1)(\tilde{w}) +\cdots+\alpha_{n_1}\phi(u_{n_1})(\tilde{w})=0 \] 
holds for all $\tilde{w}\in \mathbb{R}^{d+1}$ due to the continuity of $\phi(u_1)(\cdot)$.

Let $\tilde{D}_i=\{\tilde{w}\in \mathbb{R}^{d+1}: \tilde{w}^T u_i=0\}$ for $i\in [n_1]$, then Lemma A.1 in \cite{7} implies that when no two samples in $\{u_i\}_{i=1}^{n_1}$ are parallel, $D_i \not\subset \cup_{j\neq i} D_j$ for any $i\in [n_1]$. Thus, we can choose $\tilde{w}_0 \in D_i\backslash  \cup_{j\neq i} D_j$. Since $\cup_{j\neq i} D_j$ is a closed set, there is positive constant $r_0$ such that $B_{r_0}(\tilde{w}_0)\cap( \cup_{j\neq i} D_j)= \emptyset$. This fact implies that $\phi(u_j)(\cdot)$ is differentiable in $B_{r_0}(\tilde{w}_0)$ for each $j\neq i$. Thus $\alpha_i \phi(u_i)(\cdot)$ is also differentiable in $B_{r_0}(\tilde{w}_0)$. However $\phi(u_i)(\tilde{w})=\sigma(\tilde{w}^T u_i)$ is not differentiable in $B_{r_0}(\tilde{w}_0)$. Thus we can deduce that $\alpha_i=0$. Similarly, we have $\alpha_j= 0$ for all $j\in [n_1]$, which implies that $H_1^{\infty}$ is strictly positive definite. For $H_2^{\infty}$, it can be seen as a Gram matrix of PINN, Lemma 3.2 in \cite{10} implies that $H_2^{\infty}$ is strictly positive definite. 

Second, for $\tilde{H}^{\infty}$, recall that $\tilde{H}^{\infty}=\tilde{H}_1^{\infty} \otimes \tilde{H}_2^{\infty}$. Note that the $(i,j)$-th entry of $\tilde{H}_1^{\infty}$ is $\mathbb{E}[u_i^T u_j I\{\tilde{w}^T u_i\geq 0,\tilde{w}^T u_i\geq 0\}] $. Thus, Theorem 3.1 in \cite{17} implies that $\tilde{H}_1^{\infty}$ is strictly positive definite. For $H_2^{\infty}$, let
\[\phi_1(w)=\mathcal{L}(\sigma_3(w^T y_1)),\cdots, \phi_{n_2}(w)=\mathcal{L}(\sigma_3(w^T y_{n_2})), \psi_1(w)=\sigma_3(w^T\tilde{y}_1),\cdots,  \psi_{n_3}(w)=\sigma_3(w^T\tilde{y}_{n_3})\]
and $H$ be the Hilbert space of integrable $(d+1)$-dimensional vector fields on $\mathbb{R}^{d+1}$. Suppose that there are $\alpha_1, \cdots, \alpha_{n_2},\beta_1,\cdots,\beta_{n_3}$ such that 
\[ \alpha_1 \phi_1+\cdots+\alpha_{n_2}\phi_{n_2}+\beta_1 \psi_1+\cdots+\beta \psi_{n_3}=0 \ in \ \mathcal{H},\]
which yields that 
\[ \alpha_1 \phi_1+\cdots+\alpha_{n_2}\phi_{n_2}+\beta_1 \psi_1+\cdots+\beta \psi_{n_3}=0 \]
holds for all $w\in \mathbb{R}^{d+1}$.

Let $D_i=\{w\in \mathbb{R}^{d+1}: w^T y_i =0 \}$ for $i \in [n_2]$ and $\bar{D}_i =  \{w\in \mathbb{R}^{d+1}: w^T \tilde{y}_i =0 \}$ for $i \in [n_3]$. Thus $D_i \not\subset (\cup_{j\neq i} D_j)\cup (\cup_{j} \bar{D}_j )$ for any $i\in [n_2]$. Similarly, we can choose $w_0\in D_i$ and $r_0>0$ such that $B_{r_0}(w_0) \cap \left((\cup_{j\neq i} D_j)\cup (\cup_{j} \bar{D}_j )\right)=\emptyset$. Note that $\phi_j$ $(j\neq i)$ and $\psi_j$ are differentiable in $B_{r_0}(w_0)$, thus $\alpha_i  \phi_i $ is also differentiable in $B_{r_0}(w_0)$, implying that $\alpha_i=0$. Therefore, $\alpha_i=0$ for all $i\in [n_2]$. Moreover, similar to the proof of the strictly positive definiteness of $H_1^{\infty}$, we can also deduce that $\beta_i=0$ for all $i\in [n_3]$. Finally, $\tilde{H}$ is strictly positive definite.

\end{proof}

\subsection{Proof of Lemma 7}

\begin{proof}
First, for $H(0)-H^{\infty}$, we consider its $(i,j)$-th block, whose entry has the following form
\[ \frac{1}{m}\sum\limits_{r=1}^m X_r Y_r - \mathbb{E}[X_1 Y_1],\]
where
\[  X_r = \left[\frac{1}{\sqrt{p}} \sum\limits_{k=1}^p \tilde{a}_{rk}(0)\sigma(\tilde{w}_{rk}(0)^T u_i)  \right]\left[\frac{1}{\sqrt{p}} \sum\limits_{k=1}^p \tilde{a}_{rk}(0)\sigma(\tilde{w}_{rk}(0)^T u_j)  \right],\]
for $j_1\in [n_1], j_2\in [n_1]$,
\[ Y_r=\frac{1}{n_2}\left\langle \frac{\partial \mathcal{L}(\sigma_3(w_r(0)^T y_{j_1})) }{\partial w_r}, \frac{\partial \mathcal{L}(\sigma_3(w_r(0)^T y_{j_2})) }{\partial w_r}  \right\rangle,\]
for $j_1\in [n_1], n_2+j_2\in [n_2, n_2+n_3]$,
\[ Y_r=\frac{1}{\sqrt{n_2n_3}\ }\left\langle \frac{\partial \mathcal{L}(\sigma_3(w_r(0)^T y_{j_1})) }{\partial w_r}, \frac{\partial \sigma_3(w_r(0)^T \tilde{y}_{j_2}) }{\partial w_r}  \right\rangle,\]
for $n_2+j_1\in [n_2,n_2+n_3], n_2+j_2\in [n_2, n_2+n_3]$,
\[ Y_r=\frac{1}{n_3}\left\langle \frac{\partial \sigma_3(w_r(0)^T \tilde{y}_{j_1}) }{\partial w_r}, \frac{\partial \sigma_3(w_r(0)^T \tilde{y}_{j_2}) }{\partial w_r}  \right\rangle.\]

To use the concentration inequality, we need to clarify the order of the sub-Weil random variable $X_rY_r-\mathbb{E}[X_1Y_1]$. Note that Lemma 18 implies that
\[ \|X_r\|_{\psi_1} \leq \left\|\frac{1}{\sqrt{p}} \sum\limits_{k=1}^p \tilde{a}_{rk}(0)\sigma(\tilde{w}_{rk}(0)^T u_i) \right\|_{\psi_2} \left\|\frac{1}{\sqrt{p}} \sum\limits_{k=1}^p \tilde{a}_{rk}(0)\sigma(\tilde{w}_{rk}(0)^T u_j) \right\|_{\psi_2}=\mathcal{O}(1).\]

On the other hand, from 
\[\mathcal{L}(\sigma_3(w_r^T y))=w_{r0}\sigma_2(w_r^T y)+\|w_{r1}\|_2^2\sigma(w_r^T y)+\sigma_3(w_r^T y),\]
we have
\[ \frac{\partial \mathcal{L}(\sigma_3(w_r^T y)) }{\partial w_r}=\begin{pmatrix} 1\\0_d \end{pmatrix}\sigma_2(w_r^T y)+ w_{r0}y \sigma(w_r^Ty)+2\begin{pmatrix} 0\\w_{r1} \end{pmatrix} \sigma(w_r^T y)+\|w_{r1}\|_2 y I\{w_r^T y\geq 0\}+y\sigma_2(w_r^T y)\]
and
\[ \frac{\partial \sigma_3(w_r^T y) }{\partial w_r}= y\sigma_2(w_r^T y). \]

Therefore, we can deduce that
\[\left\|\frac{\partial \mathcal{L}(\sigma_3(w_r^T y)) }{\partial w_r} \right\|_2\lesssim |w_r^T y|^2+\|w_r\|_2|w_r^T y|+|w_r^T y|^2 \lesssim \|w_r\|_2|w_r^T y|\]
and
\[ \left\|\frac{\partial \sigma_3(w_r^T y) }{\partial w_r} \right\|_2\lesssim |w_r^T y|^2\lesssim \|w_r\|_2 |w_r^Ty|.\]

Note that $\|w_r^T y\|_{\psi_2}=\mathcal{O}(1)$ for $\|y\|_2=\mathcal{O}(1)$, thus $\|\sigma_2(w_r^T y)\|_{\psi_{1}}=\mathcal{O}(1)$, $\|\sigma_3(w_r^T y)\|_{\psi_{\frac{2}{3} }}=\mathcal{O}(1)$. Thus Lemma 20 implies that 
\[ \|\|w_r\|_2^2|w_r^T y_{j_1}||w_r^T y_{j_2}|\|_{\psi_{\frac{1}{2} } }\leq \|\|w_r\|_2^2\|_{\psi_1} \||w_r^T y_{j_1}||w_r^T y_{j_2}|\|_{\psi_1}=\mathcal{O}(d). \]

On the other hand, Lemma 21 implies that 
\[\| X_rY_r-\mathbb{E}[X_1Y_1]\|_{\psi_{\alpha}} \lesssim \| X_rY_r\|_{\psi_{\alpha}}+\| \mathbb{E}[X_1Y_1]\|_{\psi_{\alpha}} \lesssim \| X_rY_r\|_{\psi_{\alpha}}+\mathbb{E}[|X_1|]\mathbb{E}[|Y_1|].\]

Note that $\mathbb{E}[|X_1|]\leq |X_1|_{\psi_1}=\mathcal{O}(1)$ and $|Y_1|_{\psi_{\frac{1}{2} }}\lesssim \|\|w_r\|_2^2|w_r^T y_{j_1}||w_r^T y_{j_2}|\|_{\psi_{\frac{1}{2} } }=\mathcal{O}(d)$. From the Taylor expansion of the function $e^x$, we have that for any $C>0$,
\[\mathbb{E}[e^{(\frac{|Y_1|}{C} )^{\frac{1}{2}}}-1]\geq \mathbb{E}\left[\frac{1}{2!}\frac{|Y|}{C}\right],\]
which implies that $\mathbb{E}[|Y_1|]=\mathcal{O}(d)$. Therefore, $\|X_rY_r-\mathbb{E}[X_1Y_1]\|_{\psi_{\frac{1}{2} }}=\mathcal{O}(d)$. Finally, applying Lemma 17 leads to that with probability at least $1-\delta$, 
\[\left|\frac{1}{m}\sum\limits_{r=1}^m X_rY_r-\mathbb{E}[X_1Y_1] \right|\lesssim \frac{d}{n_2\sqrt{m}} \sqrt{\log(\frac{1}{\delta}) }+ \frac{d}{n_2m} \log^2(\frac{1}{\delta}) .\]

Taking a union bound yields that with probability at least $1-\delta$,
\begin{align*}
\|H(0)-H^{\infty}\|_F &\lesssim \frac{dn_1}{\sqrt{m}} \log(\frac{n_1(n_2+n_3)}{\delta}).
\end{align*}

First, for $\tilde{H}(0)-\tilde{H}^{\infty}$, we consider its $(i,j)$-th block, whose entry has the following form
\[ \frac{1}{m}\sum\limits_{r=1}^m X_r Y_r - \mathbb{E}[X_1 Y_1],\]
where
\[  X_r =\frac{1}{p} \sum\limits_{k=1}^p u_i^T u_jI\{\tilde{w}_{rk}(0)^T u_i\geq 0,\tilde{w}_{rk}(0)^T u_j\geq 0\}  ,\]
for $j_1\in [n_1], j_2\in [n_1]$,
\[ Y_r=\frac{1}{n_2} \mathcal{L}(\sigma_3(w_r(0)^T y_{j_1}))  \mathcal{L}(\sigma_3(w_r(0)^T y_{j_2}))\]
for $j_1\in [n_1], n_2+j_2\in [n_2, n_2+n_3]$,
\[ Y_r=\frac{1}{\sqrt{n_2n_3}\ } \mathcal{L}(\sigma_3(w_r(0)^T y_{j_1}))  \sigma_3(w_r(0)^T \tilde{y}_{j_2}),\]
for $n_2+j_1\in [n_2,n_2+n_3], n_2+j_2\in [n_2, n_2+n_3]$,
\[ Y_r=\frac{1}{n_3}\sigma_3(w_r(0)^T \tilde{y}_{j_1})\sigma_3(w_r(0)^T \tilde{y}_{j_2}).\]

From Lemma 21, we have
\[ \left\| \frac{1}{p} \sum\limits_{k=1}^p u_i^T u_jI\{\tilde{w}_{rk}(0)^T u_i\geq 0,\tilde{w}_{rk}(0)^T u_j\geq 0\}\right\|_{\psi_2} =\mathcal{O}(1).\]

Note that 
\[ |\mathcal{L}(\sigma_3(w_r^T y)) |\lesssim \|w_r\|_2 |w_r^Ty|^2+ \|w_r\|_2^2 |w_r^Ty|+|w_r^T y|^3\lesssim \|w_r\|_2^2 |w_r^Ty| \]
and
\[ \ |\sigma_3(w_r^Ty)|\leq |w_r^T y|^3\lesssim \|w_r\|_2^2 |w_r^Ty|.\]

From Lemma 21, we can deduce that 
\[\| \|w_r\|_2^4 \|_{\psi_{\frac{1}{2} }}\leq \| \|w_r\|_2^2 \|_{\psi_{1}}^2=\mathcal{O}(d^2)\]
and $\| |w_r^Ty|^2\|_{\psi_1}=\mathcal{O}(1)$, thus
\[ \| \|w_r\|_2^4|w_r^Ty|^2\|_{\psi_{\frac{1}{3}} }=\mathcal{O}(d^2) .\]

Therefore, with probability at least $1-\delta$, 
\begin{align*}
\|\tilde{H}(0)-\tilde{H}^{\infty}\|_F &\lesssim \frac{d^2n_1}{\sqrt{m}} \log(\frac{n_1(n_2+n_3)}{\delta}).
\end{align*}
\end{proof}

\subsection{Proof of Lemma 8}
\begin{proof}
For $H-H(0)$, from the form of $(j_1,j_2)$-th entry of the $(i,j)$-th block, we focus on the form $a_rb_r-a_r(0)b_r(0)$, where
\[ a_r=\left[\frac{1}{\sqrt{p}} \sum\limits_{k=1}^p \tilde{a}_{rk}\sigma(\tilde{w}_{rk}^T u_i) \right] \left[\frac{1}{\sqrt{p}} \sum\limits_{k=1}^p \tilde{a}_{rk}\sigma(\tilde{w}_{rk}^T u_j)\right],\]
\[b_r=\left\langle \frac{\partial \mathcal{L}(\sigma_3(w_r^T y_{j_1})) }{\partial w_r}, \frac{\partial \mathcal{L}(\sigma_3(w_r^T y_{j_2})) }{\partial w_r}  \right\rangle or \left\langle \frac{\partial \mathcal{L}(\sigma_3(w_r^T y_{j_1})) }{\partial w_r}, \frac{\partial \sigma_3(w_r^T \tilde{y}_{j_2}) }{\partial w_r}  \right\rangle or \left\langle \frac{\partial \sigma_3(w_r^T \tilde{y}_{j_1}) }{\partial w_r}, \frac{\partial \sigma_3(w_r^T \tilde{y}_{j_2}) }{\partial w_r}  \right\rangle\]
and the notation $a_r(0),b_r(0)$ means replacing $w_r,\tilde{w}_{rk}$ in the definitions of $a_r$ and $b_r$ with $w_r(0)$ and $\tilde{w}_{rk}(0)$, respectively.

For $a_r-a_r(0)$, (25) implies that
\begin{equation}
|a_r-a_r(0)|\lesssim p\tilde{R}^2+\sqrt{p}\tilde{R}\sqrt{\log\left( \frac{mn_1}{\delta}\right)}.
\end{equation} 

For $b_r-b_r(0)$, when $b_r= \left\langle \frac{\partial \mathcal{L}(\sigma_3(w_r^T y_{j_1})) }{\partial w_r}, \frac{\partial \mathcal{L}(\sigma_3(w_r^T y_{j_2})) }{\partial w_r}  \right\rangle$, we have
\begin{equation}
\begin{aligned}
&\left|\left\langle \frac{\partial \mathcal{L}(\sigma_3(w_r^T y_{j_1})) }{\partial w_r}, \frac{\partial \mathcal{L}(\sigma_3(w_r^T y_{j_2})) }{\partial w_r}  \right\rangle-\left\langle \frac{\partial \mathcal{L}(\sigma_3(w_r(0)^T y_{j_1})) }{\partial w_r}, \frac{\partial \mathcal{L}(\sigma_3(w_r(0)^T y_{j_2})) }{\partial w_r}  \right\rangle \right|\\
&\leq \left\| \frac{\partial \mathcal{L}(\sigma_3(w_r^T y_{j_1})) }{\partial w_r}- \frac{\partial \mathcal{L}(\sigma_3(w_r(0)^T y_{j_1})) }{\partial w_r}  \right\|_2 \left\|\frac{\partial \mathcal{L}(\sigma_3(w_r(0)^T y_{j_2})) }{\partial w_r} \right\|_2 \\
&\quad + \left\| \frac{\partial \mathcal{L}(\sigma_3(w_r^T y_{j_2})) }{\partial w_r}- \frac{\partial \mathcal{L}(\sigma_3(w_r(0)^T y_{j_2})) }{\partial w_r}  \right\|_2 \left\|\frac{\partial \mathcal{L}(\sigma_3(w_r(0)^T y_{j_1})) }{\partial w_r} \right\|_2 \\
&\quad +  \left\| \frac{\partial \mathcal{L}(\sigma_3(w_r^T y_{j_1})) }{\partial w_r}- \frac{\partial \mathcal{L}(\sigma_3(w_r(0)^T y_{j_1})) }{\partial w_r}  \right\|_2\left\| \frac{\partial \mathcal{L}(\sigma_3(w_r^T y_{j_2})) }{\partial w_r}- \frac{\partial \mathcal{L}(\sigma_3(w_r(0)^T y_{j_2})) }{\partial w_r}  \right\|_2 .
\end{aligned}
\end{equation}

Note with probability at least $1-\delta$, we have $\|w_r(0)\|_2\lesssim B_1$, $|w_r(0)^T y_j|\lesssim B_2$, $|w_r(0)^T \tilde{y}_{j_1}|\lesssim B_2$ holds for all $r\in [m]$, $j\in [n_2]$, $j_1\in [n_3]$ where
\[ B_1=\sqrt{d\log\left( \frac{m}{\delta}\right)},\ B_2=\sqrt{\log\left( \frac{m(n_2+n_3)}{\delta}\right)}.\]

Under these events, we can deduce that 
\[\left\|\frac{\partial \mathcal{L}(\sigma_3(w_r(0)^T y_{j_1})) }{\partial w_r} \right\|_2, \left\|\frac{\partial \mathcal{L}(\sigma_3(w_r(0)^T y_{j_2})) }{\partial w_r} \right\|_2 \lesssim B_1B_2\]
and
\begin{align*}
\left\| \frac{\partial \mathcal{L}(\sigma_3(w_r^T y_{j_1})) }{\partial w_r}- \frac{\partial \mathcal{L}(\sigma_3(w_r(0)^T y_{j_1})) }{\partial w_r}  \right\|_2&\lesssim (B_1+B_2)R+B_1|I\{w_r(0)^T y_{j_1}\geq 0\}-I\{ w_r^T y_{j_1}\geq 0 \}| \\
&\lesssim (B_1+B_2)R+B_1I\{A_{r,j_1}\}.
\end{align*}
Thus, for $b_r-b_r(0)$, we have
\begin{equation}
\begin{aligned}
|b_r-b_r(0)|&\lesssim B_1B_2(B_1+B_2)R+B_1^2B_2 [I\{A_{r,j_1}\}+I\{A_{r,j_2}\}].
\end{aligned}
\end{equation}

From Bernstein's inequality, we have with probability at least $1-n_2\exp(-mR)$,
\[\frac{1}{m}\sum\limits_{r=1}^m I\{A_{r,j}\}\lesssim R\]
holds for all $j\in [n_2]$.

Thus, summing $r$ yields that 
\begin{equation}
\begin{aligned}
\frac{1}{m}\sum\limits_{r=1}^m |b_r-b_r(0)|&\lesssim B_1B_2(B_1+B_2)R+B_1^2B_2R \\
&\lesssim B_1^2B_2R.
\end{aligned}
\end{equation}

When $b_r=\left\langle \frac{\partial \mathcal{L}(\sigma_3(w_r^T y_{j_1})) }{\partial w_r}, \frac{\partial \sigma_3(w_r^T \tilde{y}_{j_2}) }{\partial w_r}  \right\rangle or \left\langle \frac{\partial \sigma_3(w_r^T \tilde{y}_{j_1}) }{\partial w_r}, \frac{\partial \sigma_3(w_r^T \tilde{y}_{j_2}) }{\partial w_r}  \right\rangle$, we can obtain the same estimation, since
\[\left\|\frac{\partial \sigma_3(w_r(0)^T y_{j_1})}{\partial w_r}\right\|_2\lesssim B_2^2\lesssim B_1B_2\]
and 
\[\left\|\frac{\partial \sigma_3(w_r^T y_{j_1})}{\partial w_r}-\frac{\partial \sigma_3(w_r(0)^T y_{j_1})}{\partial w_r}\right\|_2\lesssim B_2 R.\]

Note that we can decompose $a_rb_r -a_r(0)b_r(0)$ as follows
\[a_r(0)[b_r-b_r(0)]+b_r(0)[a_r-a_r(0)]+[a_r-a_r(0)][b_r-b_r(0)]. \] 

Therefore, we have
\begin{equation}
\begin{aligned}
&\frac{1}{m}\sum\limits_{r=1}^m |a_rb_r -a_r(0)b_r(0)| \\
&= \frac{1}{m}\sum\limits_{r=1}^m \left|a_r(0)[b_r-b_r(0)]+b_r(0)[a_r-a_r(0)]+[a_r-a_r(0)][b_r-b_r(0)] \right| \\
&\leq  \frac{1}{m}\sum\limits_{r=1}^m |a_r(0)||b_r-b_r(0)|+\frac{1}{m}\sum\limits_{r=1}^m|b_r(0)||a_r-a_r(0)|+\frac{1}{m}\sum\limits_{r=1}^m |a_r-a_r(0)||b_r-b_r(0)|\\
&\lesssim B_1^2B_2R\log\left( \frac{mn_1}{\delta}\right)+B_1^2B_2^2\left(p\tilde{R}^2+\sqrt{p}\tilde{R}\sqrt{\log\left( \frac{mn_1}{\delta}\right)}\right).
\end{aligned}
\end{equation}

Summing $i,j,i_1,j_1$ yields that
\begin{align*}
\|H-H(0)\|_F&\lesssim n_1B_1^2B_2R\log\left( \frac{mn_1}{\delta}\right)+n_1B_1^2B_2^2\left(p\tilde{R}^2+\sqrt{p}\tilde{R}\sqrt{\log\left( \frac{mn_1}{\delta}\right)}\right).
\end{align*}

For $\tilde{H}-\tilde{H}(0)$, from the form of $(j_1,j_2)$-th entry of the $(i,j)$-th block, we focus on the form $a_rb_r-a_r(0)b_r(0)$, where
\[a_r=\frac{1}{p}\sum\limits_{k=1}^p u_i^Tu_j I\{\tilde{w}_{rk}^T u_i\geq 0, \tilde{w}_{rk}^Tu_j\geq 0\},\]
\[ b_r=\mathcal{L}(\sigma_3(w_r^T y_{j_1}))\mathcal{L}(\sigma_3(w_r^T y_{j_2})) \ or \ \mathcal{L}(\sigma_3(w_r^T y_{j_1}))\sigma_3(w_r^T \tilde{y}_{j_2})\ or \ \sigma_3(w_r^T \tilde{y}_{j_1})\sigma_3(w_r^T \tilde{y}_{j_2}).\]

Similarly, we can deduce that 
\begin{equation}
\begin{aligned}
&\frac{1}{m}\sum\limits_{r=1}^m |a_r-a_r(0)| \\
&\lesssim \frac{1}{m}\frac{1}{p} \sum\limits_{r=1}^m \sum\limits_{k=1}^p |I\{\tilde{w}_{rk}^T u_i\geq 0, \tilde{w}_{rk}^Tu_j\geq 0\}-I\{\tilde{w}_{rk}(0)^T u_i\geq 0, \tilde{w}_{rk}(0)^Tu_j\geq 0\}| \\
&\lesssim \frac{1}{m}\frac{1}{p} \sum\limits_{r=1}^m \sum\limits_{k=1}^p I\{\tilde{A}_{r,k}^{i} \}+I\{\tilde{A}_{r,k}^{j} \} \\
&\lesssim \tilde{R},
\end{aligned}
\end{equation}
where the last inequality holds with probability at least $1-n_1\exp(-mp\tilde{R})$ due to the use of Bernstein inequality.

For $b_r-b_r(0)$, note that 
\[ |\mathcal{L}(\sigma_3(w_r(0)^T y_j))|\lesssim B_1^2B_2, \ |\sigma_3(w_r(0)^T \tilde{y}_j)|\lesssim B_2^3\lesssim B_1^2B_2\]
and
\[\|\mathcal{L}(\sigma_3(w_r^Ty_j))-\mathcal{L}(\sigma_3(w_r(0)^Ty_j))\|_2\lesssim B_1B_2R, \ |\sigma_3(w_r^T\tilde{y}_j)-\sigma_3(w_r(0)^T\tilde{y}_j)|\lesssim B_2^2R\lesssim B_1B_2R.\]

Thus, similar to (66), we have
\begin{equation}
|b_r-b_r(0)|\lesssim B_1^3B_2^2 R.	
\end{equation}	 

Therefore, similar to (69), we have
\begin{equation}
\begin{aligned}
&\frac{1}{m}\sum\limits_{r=1}^m |a_rb_r -a_r(0)b_r(0)| \\
&= \frac{1}{m}\sum\limits_{r=1}^m \left|a_r(0)[b_r-b_r(0)]+b_r(0)[a_r-a_r(0)]+[a_r-a_r(0)][b_r-b_r(0)] \right| \\
&\lesssim B_1^2B_2 \tilde{R}+ B_1^3 B_2^2 R.
\end{aligned}
\end{equation}

Summing $i,j,i_1,j_1$ yields that
\begin{align*}
\|\tilde{H}-\tilde{H}(0)\|_F&\lesssim n_1B_1^2B_2 \tilde{R}+n_1 B_1^3 B_2^2 R.
\end{align*}

\end{proof}

\subsection{Proof of Lemma 8}
\begin{proof}
Note that
\begin{equation}
\begin{aligned}
&s^{t+1}(u_i)(y_j)-s^{t}(u_i)(y_j) \\
&= s^{t+1}(u_i)(y_j)-s^{t}(u_i)(y_j)-\left\langle \frac{\partial s^{t}(u_i)(y_j)}{\partial w}, w(t+1)-w(t) \right\rangle-\left\langle \frac{\partial s^{t}(u_i)(y_j)}{\partial \tilde{w}}, \tilde{w}(t+1)-\tilde{w}(t) \right\rangle \\
&\quad + \left\langle \frac{\partial s^{t}(u_i)(y_j)}{\partial w}, w(t+1)-w(t) \right\rangle+\left\langle \frac{\partial s^{t}(u_i)(y_j)}{\partial \tilde{w}}, \tilde{w}(t+1)-\tilde{w}(t) \right\rangle.
\end{aligned}
\end{equation}
From the updating formula of gradient, we have
\begin{align*}
	&\left\langle \frac{\partial s^{t}(u_i)(y_j)}{\partial w}, w(t+1)-w(t) \right\rangle \\
	&= \sum\limits_{r=1}^m \left\langle \frac{\partial s^{t}(u_i)(y_j)}{\partial w_r}, w_r(t+1)-w_r(t) \right\rangle \\
	&= -\eta \sum\limits_{r=1}^m \sum\limits_{i_1=1}^{n_1} \sum\limits_{j_1=1}^{n_2} \left\langle \frac{\partial s^{t}(u_i)(y_j)}{\partial w_r}, \frac{\partial s^{t}(u_{i_1})(y_{j_1})}{\partial w_r} \right\rangle s^{t}(u_{i_1})(y_{j_1}) \\
	&\quad -\eta \sum\limits_{r=1}^m \sum\limits_{i_1=1}^{n_1} \sum\limits_{j_2=1}^{n_3} \left\langle \frac{\partial s^{t}(u_i)(y_j)}{\partial w_r}, \frac{\partial h^{t}(u_{i_1})(\tilde{y}_{j_2})}{\partial w_r} \right\rangle h^{t}(u_{i_1})(\tilde{y}_{j_2}).
\end{align*}

Similarly, we can obtain that 
\begin{align*}
&\left\langle \frac{\partial s^{t}(u_i)(y_j)}{\partial \tilde{w} }, \tilde{w}(t+1)-\tilde{w}(t) \right\rangle \\
&= \sum\limits_{r=1}^m \sum\limits_{k=1}^p\left\langle \frac{\partial s^{t}(u_i)(y_j)}{\partial \tilde{w}_{rk}}, \tilde{w}_{rk}(t+1)-\tilde{w}_{rk}(t) \right\rangle \\
&=-\eta \sum\limits_{r=1}^m \sum\limits_{k=1}^p  \sum\limits_{i_1=1}^{n_1} \sum\limits_{j_1=1}^{n_2} \left\langle \frac{\partial s^{t}(u_i)(y_j)}{\partial \tilde{w}_{rk}}, \frac{\partial s^{t}(u_{i_1})(y_{j_1})}{\partial \tilde{w}_{rk}} \right\rangle s^{t}(u_{i_1})(y_{j_1}) \\
&\quad -\eta \sum\limits_{r=1}^m \sum\limits_{k=1}^p  \sum\limits_{i_1=1}^{n_1} \sum\limits_{j_2=1}^{n_3} \left\langle \frac{\partial s^{t}(u_i)(y_j)}{\partial \tilde{w}_{rk}}, \frac{\partial h^{t}(u_{i_1})(\tilde{y}_{j_2})}{\partial \tilde{w}_{rk}} \right\rangle h^{t}(u_{i_1})(\tilde{y}_{j_2}).
\end{align*}

For $h^{t+1}(u_i)(\tilde{y}_j)-h^{t}(u_i)(\tilde{y}_j)$, we can derive similar result, which is omitted for simplicity. Similar to the derivation in the section on neural operators, we have
\begin{equation}
G^{t+1}(u)-G^{t}(u)=-\eta(H(t)+\tilde{H}(t))G^{t}(u)+ I(t),	
\end{equation}
where $I(t)\in \mathbb{R}^{n_1(n_2+n_3)}$, $I(t)$ can be divided into $n_1$ blocks, where each block is an $(n_2+n_3)$dimensional vector. The $j_1$-th ($j_1\in [n_2]$) component of $i$-th block is 
\[ s^{t+1}(u_i)(y_{j_1})-s^{t}(u_i)(y_{j_1})-\left\langle \frac{\partial s^{t}(u_i)(y_{j_1})}{\partial w}, w(t+1)-w(t) \right\rangle-\left\langle \frac{\partial s^{t}(u_i)(y_{j_1})}{\partial \tilde{w}}, \tilde{w}(t+1)-\tilde{w}(t) \right\rangle,\]  
The $n_2+j_2$-th ($j_1\in [n_3]$) component of $i$-th block is
\[ h^{t+1}(u_i)(\tilde{y}_{j_2})-h^{t}(u_i)(\tilde{y}_{j_2})-\left\langle \frac{\partial h^{t}(u_i)(\tilde{y}_{j_2})}{\partial w}, w(t+1)-w(t) \right\rangle-\left\langle \frac{\partial h^{t}(u_i)(\tilde{y}_{j_2})}{\partial \tilde{w}}, \tilde{w}(t+1)-\tilde{w}(t) \right\rangle.\]

Finally, applying a simple algebraic transformation to (76), we have  
\[ G^{t+1}(u)=[I-\eta(H(t)+\tilde{H}(t))]G^{t}(u)+I(t).\]
\end{proof}

\subsection{Proof of Corollary 2}
\begin{proof}
Let $B_1=2\sqrt{d\log(m/\delta)}, B_2=2\sqrt{\log(m(n_2+n_3)/\delta)}$, we first estimate $\|\tilde{w}_{rk}(t+1)-\tilde{w}_{rk}(t)\|_2$. The gradient updating rule yields that 
\begin{align*}
\tilde{w}_{rk}(t+1)-\tilde{w}_{rk}(t)&= -\eta \frac{\partial L(W(t),\tilde{W}(t))}{\partial \tilde{w}_{rk} } \\
&=-\eta \sum\limits_{i=1}^{n_1}\sum\limits_{j_1=1}^{n_2} s^{t}(u_i)(y_{j_1}) \frac{\partial s^{t}(u_i)(y_{j_1})}{\partial \tilde{w}_{rk}}- -\eta \sum\limits_{i=1}^{n_1}\sum\limits_{j_2=1}^{n_3} h^{t}(u_i)(\tilde{y}_{j_2}) \frac{\partial h^{t}(u_i)(\tilde{y}_{j_2})}{\partial \tilde{w}_{rk}}.
\end{align*}

For the gradient term, we have 
\begin{align*}
	&\left\| \frac{\partial s^{t}(u_i)(y_{j_1})}{\partial \tilde{w}_{rk}}\right\|_2 \\
	&= \left\| \frac{1}{\sqrt{n_2}}\frac{1}{\sqrt{m}}\frac{1}{\sqrt{p}}\tilde{a}_{rk} u_i I\{\tilde{w}_{rk}(t)^T u_i \geq 0\} \mathcal{L}(\sigma_3(w_r(t)^T y_{j_1})) \right\|_2 \\
	&\lesssim \frac{1}{\sqrt{n_2}}\frac{B_1^2B_2}{\sqrt{mp}}
\end{align*}
and
\begin{align*}
	&\left\| \frac{\partial h^{t}(u_i)(\tilde{y}_{j_2})}{\partial \tilde{w}_{rk}}\right\|_2 \\
	&= \left\|\frac{1}{\sqrt{n_3}} \frac{1}{\sqrt{m}}\frac{1}{\sqrt{p}}\tilde{a}_{rk} u_i I\{\tilde{w}_{rk}(t)^T u_i \geq 0\} \sigma_3(w_r(t)^T\tilde{y}_{j_2}) \right\|_2 \\
	&\lesssim \frac{1}{\sqrt{n_3}}\frac{B_2^3}{\sqrt{mp}}.
\end{align*}

Theorefore, we obtain that
\begin{equation}
\begin{aligned}
\|\tilde{w}_{rk}(t+1)-\tilde{w}_{rk}(t)\|_2&\lesssim \frac{\eta\sqrt{n_1}B_1^2B_2}{\sqrt{mp}} \sqrt{\sum\limits_{i=1}^{n_1} \|s(u_i)\|_2^2}+\frac{\eta\sqrt{n_1}B_2^3}{\sqrt{mp}} \sqrt{\sum\limits_{i=1}^{n_1} \|h(u_i)\|_2^2} \\
&\lesssim \frac{\eta\sqrt{n_1}B_1^2B_2}{\sqrt{mp}} \|G^{t}(u)\|_2,
\end{aligned}
\end{equation}
where the first inequality follows from Cauchy's inequality.

Summing $t$ from $0$ to $T$ yields that
\begin{equation}
\begin{aligned}
\|\tilde{w}_{rk}(T)-\tilde{w}_{rk}(0)\|_2 &\leq \sum\limits_{t=0}^{T} \|\tilde{w}_{rk}(t+1)-\tilde{w}_{rk}(t)\|_2 \\
&\lesssim \sum\limits_{t=1}^{T}\frac{\eta\sqrt{n_1}B_1^2B_2}{\sqrt{mp}} \left(1-\frac{\eta(\lambda_0+\tilde{\lambda}_0)}{2} \right)^{t/2} \|G^{0}(u)\|_2\\
&\lesssim   \frac{\sqrt{n_1}B_1^2B_2\|G^{0}(u)\|_2}{\sqrt{mp}(\lambda_0+\tilde{\lambda}_0)},
\end{aligned}
\end{equation}
where the second inequality follows from the induction hypothesis.

Then, we estimate $\|w_r(t+1)-w_r(t)\|_2$, the gradient descent updating rule yields that

\begin{equation}
\begin{aligned}
\|w_r(t+1)-w_r(t)\|_2 &= \left\|-\eta \frac{\partial L(W(t), \tilde{W}(t))}{\partial w_r} \right\|_2 \\
&= \eta \left\| \sum\limits_{i=1}^{n_1}\sum\limits_{j_1=1}^{n_2} s^{t}(u_i)(y_{j_1}) \frac{\partial s^{t}(u_i)(y_{j_1})}{\partial w_r}+ \sum\limits_{i=1}^{n_1}\sum\limits_{j_2=1}^{n_3} h^{t}(u_i)(\tilde{y}_{j_2}) \frac{\partial h^{t}(u_i)(\tilde{y}_{j_2})}{\partial w_r}\right\|_2.
\end{aligned}
\end{equation}

Recall that
\begin{align*}
&\left\| \frac{\partial s^{t}(u_i)(y_{j_1})}{\partial w_r}\right\|_2 \\
&= \left\| \frac{1}{\sqrt{n_2}}\frac{1}{\sqrt{m}}\left[\frac{1}{\sqrt{p}}\sum\limits_{k=1}^p\tilde{a}_{rk} \sigma(\tilde{w}_{rk}(t)^T u_i)\right] \frac{\partial \mathcal{L}(\sigma_3(w_r(t)^T y_{j_1}))}{\partial w_r} \right\|_2 .
\end{align*}

Note that 
\[ \left|\left[\frac{1}{\sqrt{p}}\sum\limits_{k=1}^p\tilde{a}_{rk} \sigma(\tilde{w}_{rk}(t)^T u_i)\right]-\left[\frac{1}{\sqrt{p}}\sum\limits_{k=1}^p\tilde{a}_{rk} \sigma(\tilde{w}_{rk}(0)^T u_i)\right] \right|\lesssim \sqrt{p}\tilde{R}^{'}\]
and 
\[\left\|\frac{\partial \mathcal{L}(\sigma_3(w_r(t)^T y_{j_1}))}{\partial w_r}\right\|_2\lesssim  B_1B_2.\]

Thus, we have
\begin{equation}
\begin{aligned}
&\left\| \frac{\partial s^{t}(u_i)(y_{j_1})}{\partial w_r}\right\|_2 \\
&\leq \frac{1}{\sqrt{n_2m}} \left|\left[\frac{1}{\sqrt{p}}\sum\limits_{k=1}^p\tilde{a}_{rk} \sigma(\tilde{w}_{rk}(t)^T u_i)\right]-\left[\frac{1}{\sqrt{p}}\sum\limits_{k=1}^p\tilde{a}_{rk} \sigma(\tilde{w}_{rk}(0)^T u_i)\right] \right|\left\|\frac{\partial \mathcal{L}(\sigma_3(w_r(t)^T y_{j_1}))}{\partial w_r}\right\|_2 \\
&\quad +\frac{1}{\sqrt{n_2m}} \left|\frac{1}{\sqrt{p}}\sum\limits_{k=1}^p\tilde{a}_{rk} \sigma(\tilde{w}_{rk}(0)^T u_i) \right|\left\|\frac{\partial \mathcal{L}(\sigma_3(w_r(t)^T y_{j_1}))}{\partial w_r}\right\|_2 \\
&\lesssim \frac{\sqrt{p}R^{'}B_1B_2}{\sqrt{mn_2}}+\frac{B_1B_2}{\sqrt{mn_2}}\sqrt{\log\left(\frac{mn_1}{\delta}\right)}\\
&\lesssim \frac{B_1B_2}{\sqrt{mn_2}}\sqrt{\log\left(\frac{mn_1}{\delta}\right)},
\end{aligned}
\end{equation}
where in the last inequality, we assume that $\sqrt{p}\tilde{R}^{'}\lesssim \sqrt{\log(mn_1/\delta)}$.

Similarly, since 
\[\left\|\frac{\partial\sigma_3(w_r(t)^T \tilde{y}_{j_2})}{\partial w_r}\right\|_2\lesssim  B_2^2,\]
we can obtain that 
\begin{equation}
\begin{aligned}
\left\| \frac{\partial h^{t}(u_i)(\tilde{y}_{j_2})}{\partial w_r}\right\|_2 \lesssim \frac{\sqrt{p}R^{'}B_2^2}{\sqrt{mn_3}}+\frac{B_2^2}{\sqrt{mn_3}}\sqrt{\log\left(\frac{mn_1}{\delta}\right)}.
\end{aligned}
\end{equation}

Combining (79), (80) and (81) yields that
\begin{equation}
\begin{aligned}
\|w_{r}(t+1)-w_{r}(t)\|_2&\lesssim \frac{\eta\sqrt{n_1}B_1B_2}{\sqrt{m}} \sqrt{\log\left(\frac{mn_1}{\delta}\right)}\sqrt{\sum\limits_{i=1}^{n_1} \|s(u_i)\|_2^2}+\frac{\eta\sqrt{n_1}B_2^2}{\sqrt{m}}\sqrt{\log\left(\frac{mn_1}{\delta}\right)} \sqrt{\sum\limits_{i=1}^{n_1} \|h(u_i)\|_2^2} \\
&\lesssim \frac{\eta\sqrt{n_1}B_1B_2}{\sqrt{m}}\sqrt{\log\left(\frac{mn_1}{\delta}\right)} \|G^{t}(u)\|_2,	
\end{aligned}
\end{equation}
where the first inequality follows from Cauchy's inequality.

Summing $t$ from $0$ to $T$ yields that
\begin{equation}
\begin{aligned}
\|w_r(T+1)-w_r(0)\|_2 &\leq \sum\limits_{t=1}^{T} \|w_r(t+1)-w_r(t)\|_2 \\
&\lesssim \sum\limits_{t=1}^{T-1}\frac{\eta\sqrt{n_1}B_1^2B_2}{\sqrt{mp}}\sqrt{\log\left(\frac{mn_1}{\delta}\right)} \left(1-\frac{\eta(\lambda_0+\tilde{\lambda}_0)}{2} \right)^{t/2} \|G^{0}(u)\|_2\\
&\lesssim   \frac{\sqrt{n_1}B_1B_2\|G^{0}(u)\|_2}{\sqrt{mp}(\lambda_0+\tilde{\lambda}_0)}\sqrt{\log\left(\frac{mn_1}{\delta}\right)}.
\end{aligned}
\end{equation}

\end{proof}

\subsection{Proof of Lemma 10}
\begin{proof}
From the form of the residual $I(t)$, it suffices to estimate 
\[s^{t+1}(u_i)(y_j)-s^{t}(u_i)(y_j) -\left\langle \frac{\partial s^{t}(u_i)(y_j) }{\partial w} , w(t+1)-w(t)\right\rangle -\left\langle \frac{\partial s^{t}(u_i)(y_j) }{\partial \tilde{w}} , \tilde{w}(t+1)-\tilde{w}(t)\right\rangle\]
and
\[h^{t+1}(u_i)(\tilde{y}_j)-h^{t}(u_i)(\tilde{y}_j) -\left\langle \frac{\partial h^{t}(u_i)(\tilde{y}_j) }{\partial w} , w(t+1)-w(t)\right\rangle -\left\langle \frac{\partial h^{t}(u_i)(\tilde{y}_j) }{\partial \tilde{w}} , \tilde{w}(t+1)-\tilde{w}(t)\right\rangle,\]
which we denote by $I_{i,j}^{s}(t)$ and $I_{i,j}^{h}(t)$, respectively. In fact, we only need to estimate $I_{i,j}^{s}(t)$, since $\mathcal{L}(u)$ includes the term $u$, which is the same as the boundary term.

Recall that the shallow neural operator has the form
\[G(u)(y) = \frac{1}{\sqrt{m}}\sum\limits_{r=1}^m \left[\frac{1}{\sqrt{p}} \sum\limits_{k=1}^p \tilde{a}_{rk} \sigma(\tilde{w}_{rk}^T u ) \right]\sigma_3(w_r^T y).\]

We first estimate $I_{i,j}^{s}(t)$. We can explicitly express the difference $s^{t+1}(u_i)(y_j)-s^{t}(u_i)(y_j)$ as follows:
\begin{equation}
\begin{aligned}
&s^{t+1}(u_i)(y_j)-s^{t}(u_i)(y_j)\\
&= \frac{1}{\sqrt{n_2}} \left[\mathcal{L}G^{t+1}(u_i)(y_j)-\mathcal{L}G^t(u_i)(y_j) \right] \\
&= \frac{1}{\sqrt{n_2}}\frac{1}{\sqrt{m}}\sum\limits_{r=1}^m \left(\left[\frac{1}{\sqrt{p}} \sum\limits_{k=1}^p \tilde{a}_{rk} \sigma(\tilde{w}_{rk}(t+1)^T u_i ) \right] \mathcal{L}(\sigma_3(w_r(t+1)^T y_j))   \right. \\
&\quad \left.-\left[\frac{1}{\sqrt{p}} \sum\limits_{k=1}^p \tilde{a}_{rk} \sigma(\tilde{w}_{rk}(t)^T u_i ) \right]\mathcal{L}(\sigma_3(w_r(t)^T y_j)) \right) \\
&= 	\frac{1}{\sqrt{n_2}}\frac{1}{\sqrt{m}}\sum\limits_{r=1}^m \left[\frac{1}{\sqrt{p}} \sum\limits_{k=1}^p \tilde{a}_{rk} (\sigma(\tilde{w}_{rk}(t+1)^T u_i )-\sigma(\tilde{w}_{rk}(t)^T u_i )) \right] \mathcal{L}(\sigma_3(w_r(t)^T y_j)) \\
&\quad+\frac{1}{\sqrt{n_2}}\frac{1}{\sqrt{m}}\sum\limits_{r=1}^m \left[\frac{1}{\sqrt{p}} \sum\limits_{k=1}^p \tilde{a}_{rk} \sigma(\tilde{w}_{rk}(t+1)^T u_i ) \right] \left[\mathcal{L}(\sigma_3(w_r(t+1)^T y_j))-\mathcal{L}(\sigma_3(w_r(t)^T y_j))\right], 
\end{aligned}
\end{equation}
where in the last equality, we split $s^{t+1}(u_i)(y_j)-s^{t}(u_i)(y_j)$ into two terms in order to estimate them separately later.

On the other hand, from the form of neural operator, we can obtain that
\begin{equation}
\begin{aligned}
&\left\langle \frac{\partial s^{t}(u_i)(y_j) }{\partial w} , w(t+1)-w(t)\right\rangle \\
&= \sum\limits_{r=1}^m \left\langle \frac{\partial s^{t}(u_i)(y_j) }{\partial w_r} , w_r(t+1)-w_r(t)\right\rangle \\
&=  \frac{1}{\sqrt{n_2}}\frac{1}{\sqrt{m}}\sum\limits_{r=1}^m \left[\frac{1}{\sqrt{p}} \sum\limits_{k=1}^p \tilde{a}_{rk} \sigma(\tilde{w}_{rk}(t)^T u_i ) \right] \left\langle\frac{\partial \mathcal{L}(\sigma_3(w_r(t)^T y_j)) }{\partial w_r} ,w_r(t+1)-w_r(t)\right\rangle ,
\end{aligned}
\end{equation}
and
\begin{equation}
\begin{aligned}
&\left\langle \frac{\partial s^{t}(u_i)(y_j) }{\partial \tilde{w} } , \tilde{w}(t+1)-\tilde{w}(t)\right\rangle \\
&= \sum\limits_{r=1}^m \sum\limits_{k=1}^p \left\langle \frac{\partial s^{t}(u_i)(y_j) }{\partial \tilde{w}_{rk} } , \tilde{w}_{rk}(t+1)-\tilde{w}_{rk}(t)\right\rangle \\
&=\frac{1}{\sqrt{n_2}}\frac{1}{\sqrt{m}} \sum\limits_{r=1}^m  \frac{1}{\sqrt{p}} 
\left[\sum\limits_{k=1}^p\tilde{a}_{rk}I\{\tilde{w}_{rk}(t)^T u_i\geq 0\} (\tilde{w}_{rk}(t+1)-\tilde{w}_{rk}(t))^T u_i\right] \mathcal{L}(\sigma_3(w_r(t)^T y_j)).
\end{aligned}
\end{equation}

With the explicit expressions for each term of $I_{i,j}^{s}(t)$, namely (84), (85), and (86), we can split $I_{i,j}^{s}(t)$ into two parts: the first part is the second term of (84) minus (86), and the second part is the first term of (84) minus (85). Specifically, let 
\begin{equation}
\begin{aligned}
I_1^r(t) &= \left( \frac{1}{\sqrt{p}} \sum\limits_{k=1}^p \tilde{a}_{rk} \left(\sigma(\tilde{w}_{rk}(t+1)^T u_i )-\sigma(\tilde{w}_{rk}(t)^T u_i )-I\{\tilde{w}_{rk}(t)^T u_i\geq 0\} (\tilde{w}_{rk}(t+1)-\tilde{w}_{rk}(t))^T u_i\right)\right) \\
&\quad \cdot \mathcal{L}(\sigma_3(w_r(t)^T y_j))
\end{aligned}
\end{equation}
and
\begin{equation}
\begin{aligned}
I_2^r(t) &=\left[\frac{1}{\sqrt{p}} \sum\limits_{k=1}^p \tilde{a}_{rk} \sigma(\tilde{w}_{rk}(t+1)^T u_i ) \right] \left[\mathcal{L}(\sigma_3(w_r(t+1)^T y_j))-\mathcal{L}(\sigma_3(w_r(t)^T y_j))\right] \\
&\quad -\left[\frac{1}{\sqrt{p}} \sum\limits_{k=1}^p \tilde{a}_{rk} \sigma(\tilde{w}_{rk}(t)^T u_i ) \right] \left\langle\frac{\partial \mathcal{L}(\sigma_3(w_r(t)^T y_j)) }{\partial w_r} ,w_r(t+1)-w_r(t)\right\rangle,
\end{aligned}
\end{equation}
where in the definition, we have omitted the indices $i,j,s$ for simplicity.

Then 
\begin{equation}
I_{i,j}^s(t)= \frac{1}{\sqrt{n_2}}\frac{1}{\sqrt{m}} \sum\limits_{r=1}^m \left[I_1^r(t)+I_2^r(t)\right].
\end{equation}

To estimate $I_1(r)$, since 
\[ |\mathcal{L}(\sigma_3(w_r(t)^T y_j))|\lesssim B_1^2B_2,\]
it suffices to estimate
\[\sigma(\tilde{w}_{rk}(t+1)^T u_i )-\sigma(\tilde{w}_{rk}(t)^T u_i )-I\{\tilde{w}_{rk}(t)^T u_i\geq 0\} (\tilde{w}_{rk}(t+1)-\tilde{w}_{rk}(t))^T u_i. \]

With a little abuse of notation, we let  $\tilde{A}_{r,k}^{i}=\{\tilde{w}\in \mathbb{R}^{d+1}: \|\tilde{w}-\tilde{w}_{rk}(0)\|\leq \tilde{R}^{'}, I\{ \tilde{w}^T u_i\geq 0\}  \neq I\{ \tilde{w}_{rk}(0)^T u_i\geq 0\}  \}$, $\tilde{S}_i =\{(r,k) \in [m]\times[q]: I\{\tilde{A}_{r,k}^{i}\}=0\}$
and $\tilde{S}_i^{\perp} = [m]\times[p] \backslash \tilde{S}_i$. Then we have that $P(\tilde{A}_{r,k}^{i})\lesssim \tilde{R}^{'}$. Note that $\|\tilde{w}_{rk}(t+1)-\tilde{w}_{rk}(0)\|_2\leq R^{'}, \|\tilde{w}_{rk}(t)-\tilde{w}_{rk}(0)\|_2\leq R^{'}$, thus for $(r,k) \in \tilde{S}_{i}$, we have $I\{\tilde{w}_{rk}(t+1)^T u_i\geq 0\}=I\{\tilde{w}_{rk}(t)^T u_i\geq 0\}$. At this point, 
\begin{equation}
\begin{aligned}
&\sigma(\tilde{w}_{rk}(t+1)^T u_i )-\sigma(\tilde{w}_{rk}(t)^T u_i ) -I\{\tilde{w}_{rk}(t)^T u_i\geq 0\} (\tilde{w}_{rk}(t+1)-\tilde{w}_{rk}(t))^T u_i \\
&= (\tilde{w}_{rk}(t+1)^T u_i ) I\{\tilde{w}_{rk}(t+1)^T u_i\geq 0\}-(\tilde{w}_{rk}(t)^T u_i ) I\{\tilde{w}_{rk}(t)^T u_i\geq 0\} \\
&\quad -I\{\tilde{w}_{rk}(t)^T u_i\geq 0\} (\tilde{w}_{rk}(t+1)-\tilde{w}_{rk}(t))^T u_i \\
&=  (\tilde{w}_{rk}(t+1)^T u_i ) I\{\tilde{w}_{rk}(t)^T u_i\geq 0\}-(\tilde{w}_{rk}(t)^T u_i ) I\{\tilde{w}_{rk}(t)^T u_i\geq 0\} \\
&\quad -I\{\tilde{w}_{rk}(t)^T u_i\geq 0\} (\tilde{w}_{rk}(t+1)-\tilde{w}_{rk}(t))^T u_i\\
&= 0.
\end{aligned}
\end{equation}
On the other hand, for all $(r,k)\in [m]\times[p]$,
\[|\sigma(\tilde{w}_{rk}(t+1)^T u_i )-\sigma(\tilde{w}_{rk}(t)^T u_i ) -I\{\tilde{w}_{rk}(t)^T u_i\geq 0\} (\tilde{w}_{rk}(t+1)-\tilde{w}_{rk}(t))^T u_i|\lesssim \|\tilde{w}_{rk}(t+1)-\tilde{w}_{rk}(t)\|_2. \]

Therefore, for $I_1^r(t)$, we have
\begin{align*}
	|I_1^r(t)|&\lesssim \frac{1}{\sqrt{p}}\sum\limits_{k=1}^{p} B_1^2B_2\|\tilde{w}_{rk}(t+1)-\tilde{w}_{rk}(t)\|_2 I\{(r,k)\in \tilde{S}_i^{\perp}\} .
\end{align*}

Combining with (77) yields that 
\begin{equation}
\begin{aligned}
\frac{1}{\sqrt{m}} \sum\limits_{r=1}^m |I_1^r(t)|&\lesssim \frac{1}{\sqrt{m}}\frac{1}{\sqrt{p}} \sum\limits_{r=1}^m \sum\limits_{k=1}^{p} B_1B_2\|\tilde{w}_{rk}(t+1)-\tilde{w}_{rk}(t)\|_2 I\{(r,k)\in \tilde{S}_i^{\perp}\} \\
&=\frac{1}{\sqrt{m}}\frac{1}{\sqrt{p}} \sum\limits_{r=1}^m \sum\limits_{k=1}^{p} B_1B_2\|\tilde{w}_{rk}(t+1)-\tilde{w}_{rk}(t)\|_2 I\{ \tilde{A}_{r,k}^{i}\} \\
&\lesssim \eta \sqrt{n_1} B_1^4B_2^2 \|G^{t}(u)\|_2 \frac{1}{mp}\sum\limits_{r=1}^m \sum\limits_{k=1}^{p} I\{ \tilde{A}_{r,k}^{i}\}\\
&\lesssim \eta \sqrt{n_1} B_1^4B_2^2 \|G^{t}(u)\|_2 \tilde{R}^{'}\\
&\lesssim \frac{\eta n_1 B_1^6B_2^3 \|G^{0}(u)\|_2}{\sqrt{mp}(\lambda_0+\tilde{\lambda}_0)}\|G^{t}(u)\|_2,
\end{aligned}
\end{equation}
where the last inequality follows from the Bernstein inequality and holds with probability at least $1-n_1\exp(-mp\tilde{R}^{'})$.

For $I_2^{r}(t)$, we can rewrite it as follows:
\begin{equation}
\begin{aligned}
I_2^r(t)&= \left[\frac{1}{\sqrt{p}} \sum\limits_{k=1}^p \tilde{a}_{rk} \sigma(\tilde{w}_{rk}(t+1)^T u_i ) \right] \left[\mathcal{L}(\sigma_3(w_r(t+1)^T y_j))-\mathcal{L}(\sigma_3(w_r(t)^T y_j))\right] \\
&\quad-\left[\frac{1}{\sqrt{p}} \sum\limits_{k=1}^p \tilde{a}_{rk} \sigma(\tilde{w}_{rk}(t)^T u_i ) \right] \left\langle \frac{\partial \mathcal{L}(\sigma_3(w_r(t)^T y_j)) }{\partial w_r}, w_r(t+1)-w_r(t)\right\rangle \\
&= \left[\frac{1}{\sqrt{p}} \sum\limits_{k=1}^p \tilde{a}_{rk} \left[\sigma(\tilde{w}_{rk}(t+1)^T u_i )-\sigma(\tilde{w}_{rk}(0)^T u_i )\right] \right]\left[\mathcal{L}(\sigma_3(w_r(t+1)^T y_j))-\mathcal{L}(\sigma_3(w_r(t)^T y_j))\right] \\
&-\left[\frac{1}{\sqrt{p}} \sum\limits_{k=1}^p \tilde{a}_{rk} \left[\sigma(\tilde{w}_{rk}(t)^T u_i )-\sigma(\tilde{w}_{rk}(0)^T u_i )\right]\right] \left\langle \frac{\partial \mathcal{L}(\sigma_3(w_r(t)^T y_j)) }{\partial w_r}, w_r(t+1)-w_r(t)\right\rangle \\
&+\left[ \mathcal{L}(\sigma_3(w_r(t+1)^T y_j))-\mathcal{L}(\sigma_3(w_r(t)^T y_j))- \left\langle \frac{\partial \mathcal{L}(\sigma_3(w_r(t)^T y_j)) }{\partial w_r}, w_r(t+1)-w_r(t)\right\rangle \right]\\
&\quad \cdot \left[\frac{1}{\sqrt{p}} \sum\limits_{k=1}^p \tilde{a}_{rk} \sigma(\tilde{w}_{rk}(0)^T u_i )\right] \\
&:=I_{2,1}^r(t)+I_{2,2}^r(t)+I_{2,3}^r(t).
\end{aligned}
\end{equation}

In the following, we will estimate the three items $I_{2,1}^r(t),I_{2,2}^r(t)$ and $I_{2,3}^r(t)$ separately.

For $I_{2,1}^r(t)$ and $I_{2,2}^r(t)$, note that for $s=t,t+1$, we have
\begin{equation}
\left|\left[\frac{1}{\sqrt{p}} \sum\limits_{k=1}^p \tilde{a}_{rk} \sigma(\tilde{w}_{rk}(s)^T u_i ) \right]-\left[\frac{1}{\sqrt{p}} \sum\limits_{k=1}^p \tilde{a}_{rk} \sigma(\tilde{w}_{rk}(0)^T u_i ) \right] \right|\lesssim \sqrt{p}\tilde{R}^{'}.	
\end{equation}
Moreover, we can deduce that
\begin{equation}
\left| \mathcal{L}(\sigma_3(w_r(t+1)^T y_j))-\mathcal{L}(\sigma_3(w_r(t)^T y_j))\right|\lesssim (B_1^2+B_1B_2) \|w_r(t+1)-w_r(t)\|_2\lesssim B_1^2 \|w_r(t+1)-w_r(t)\|_2
\end{equation}
and 
\begin{equation}
\left\|\frac{\partial \mathcal{L}(\sigma_3(w_r(t)^T y_j)) }{\partial w_r} \right\|_2\lesssim B_1B_2.
\end{equation}

Combining (93), (94) and (95) yields that
\begin{equation}
|I_{2,1}^r(t)|\lesssim \sqrt{p}R^{'}B_1^2\|w_r(t+1)-w_r(t)\|_2
\end{equation}
and
\begin{equation}
	|I_{2,2}^r(t)|\lesssim \sqrt{p}R^{'}B_1B_2\|w_r(t+1)-w_r(t)\|_2\lesssim \sqrt{p}R^{'}B_1^2\|w_r(t+1)-w_r(t)\|_2.
\end{equation}

It remains to estimate $I_{2,3}^r(t)$. From its form, it suffices to estimate 
\begin{equation}
\begin{aligned}
&\mathcal{L}(\sigma_3(w_r(t+1)^T y_j))-\mathcal{L}(\sigma_3(w_r(t)^T y_j))-\left\langle \frac{\partial \mathcal{L}(\sigma_3(w_r(t)^T y_j)) }{\partial w_r}, w_r(t+1)-w_r(t) \right\rangle\\
&:= \tilde{I}_1(t)+\tilde{I}_2(t)+\tilde{I}_3(t),
\end{aligned}
\end{equation}
where  $\tilde{I}_1(t), \tilde{I}_2(t)$ and $\tilde{I}_3(t)$ are respectively related to the first-order term, the second-order term, and the zeroth-order term of the PDE. Sepecifically, 
\[\tilde{I}_1(t)=w_{r0}(t+1)\sigma_3^{'}(w_r(t+1)^Ty_j)-w_{r0}(t)\sigma_3^{'}(w_r(t)^Ty_j)-\left\langle \frac{\partial w_{r0}(t)\sigma_3^{'}(w_r(t)^Ty_j) }{\partial w_r}, w_r(t+1)-w_r(t) \right\rangle,\]
\begin{align*}
\tilde{I}_2(t)&=-\left[ \|w_{r1}(t+1)\|_2^2\sigma_3^{''}(w_r(t+1)^T y_j)-\|w_{r1}(t)\|_2^2\sigma_3^{''}(w_r(t)^T y_j) \right. \\
&\quad \left. -\left\langle \frac{\partial \|w_{r1}(t)\|_2^2\sigma_3^{''}(w_r(t)^T y_j) }{\partial w_r}, w_r(t+1)-w_r(t) \right\rangle \right]
\end{align*}
and
\[\tilde{I}_{3}(t)=\sigma_3(w_r(t+1)^Ty_j)-\sigma_3(w_r(t)^Ty_j)-\left\langle \frac{\partial\sigma_3(w_r(t)^Ty_j) }{\partial w_r}, w_r(t+1)-w_r(t) \right\rangle.\]

Note that both $\tilde{I}_1(t)$ and $\tilde{I}_2(t)$ have the form 
\[ f(w^{'})g(w^{'})-f(w)g(w)-\left\langle\frac{\partial f(w)g(w)}{\partial w}, w^{'}-w\right\rangle.\]

However, due to the non-differentiability of the ReLU function, we cannot perform a second-order expansion; instead, we decompose it into

\begin{equation}
\begin{aligned}
&f(w^{'})g(w^{'})-f(w)g(w)-\left\langle\frac{\partial f(w)g(w)}{\partial w}, w^{'}-w \right\rangle \\
&=f(w^{'})g(w^{'})-f(w)g(w)-\left\langle\frac{\partial f(w)}{\partial w} g(w), w^{'}-w \right\rangle -\left\langle\frac{\partial g(w)}{\partial w} f(w), w^{'}-w \right\rangle \\
&=  f(w^{'}) \left[ g(w^{'})-g(w)- \left\langle\frac{\partial g(w)}{\partial w} , w^{'}-w \right\rangle\right]+g(w)\left[f(w^{'})-f(w)-\left\langle\frac{\partial f(w)}{\partial w} , w^{'}-w \right\rangle \right] \\
&\quad + [f(w^{'})-f(w)]\left\langle \frac{\partial g(w)}{\partial w}  ,w^{'}-w\right\rangle.
\end{aligned}
\end{equation}

Thus, for $\tilde{I}_1(t)$, we have
\begin{equation}
\begin{aligned}
\tilde{I}_1(t)&= w_{r0}(t+1)\sigma_3^{'}(w_r(t+1)^Ty_j)-w_{r0}(t)\sigma_3^{'}(w_r(t)^Ty_j) \\
&\quad -[\sigma_3^{'}(w_r(t)^T y_j)(w_{r0}(t+1)-w_{r0}(t))+w_{r0}(t)\sigma_3^{''}(w_r(t)^T y_j) (w_{r}(t+1)-w_{r}(t))^T y_{j} ] \\
&= w_{r0}(t+1)[\sigma_3^{'}(w_r(t+1)^Ty_j)-\sigma_3^{'}(w_r(t)^Ty_j)] -w_{r0}(t)\sigma_3^{''}(w_r(t)^T y_j) (w_{r}(t+1)-w_{r}(t))^T y_{j}\\
&=[w_{r0}(t+1)-w_{r0}(t)][\sigma_3^{'}(w_r(t+1)^Ty_j)-\sigma_3^{'}(w_r(t)^Ty_j)]\\
&\quad + w_{r0}(t)[\sigma_3^{'}(w_r(t+1)^Ty_j)-\sigma_3^{'}(w_r(t)^Ty_j)-\sigma_3^{''}(w_r(t)^T y_j) (w_{r}(t+1)-w_{r}(t))^T y_{j}    ].
\end{aligned}
\end{equation}

We apply mean value theorem for the first term in (100) and obtain that
\begin{align*}
	|\sigma_3^{'}(w_r(t+1)^Ty_j)-\sigma_3^{'}(w_r(t)^Ty_j)|&=|w_r(t+1)^Ty_j-w_r(t)^Ty_j| |\sigma_3^{''}(\xi)|\\
	&\lesssim B_2\|w_r(t+1)-w_r(t)\|_2.
\end{align*}

Similarly, for the second term, we have
\begin{align*}
&|\sigma_3^{'}(w_r(t+1)^Ty_j)-\sigma_3^{'}(w_r(t)^Ty_j)-\sigma_3^{''}(w_r(t)^T y_j) (w_{r}(t+1)-w_{r}(t))^T y_{j}|\\
&=|\sigma_3^{''}(\xi)-\sigma_3^{''}(w_r(t)^T y_j)| |(w_{r}(t+1)-w_{r}(t))^T y_{j}| \\
&\lesssim \|w_r(t+1)-w_r(t)\|_2^2.	
\end{align*}

Thus, for $\tilde{I}_1(t)$, we have
\begin{equation}
\begin{aligned}
	|\tilde{I}_1(t)|&\lesssim B_2\|w_r(t+1)-w_r(t)\|_2^2+ B_1 \|w_r(t+1)-w_r(t)\|_2^2 \\
	&\lesssim B_1 \|w_r(t+1)-w_r(t)\|_2^2.
\end{aligned}
\end{equation}

For $\tilde{I}_2(t)$, with same decomposition in (99), we have
\begin{equation}
\begin{aligned}
\tilde{I}_2(t)&= \|w_{r1}(t+1)\|_2^2\sigma_3^{''}(w_r(t+1)^T y_j)-\|w_{r1}(t)\|_2^2\sigma_3^{''}(w_r(t)^T y_j) \\
&\quad -[ 2\sigma_3^{''}(w_r(t)^T y_j)(w_{r1}(t+1)-w_{r1}(t))^T w_{r1}(t)  +\|w_{r1}(t)\|_2^2 \sigma_3^{'''}(w_r(t)^T y_j)(w_r(t+1)-w_r(t))^T y_j ]\\
&= \|w_{r1}(t+1)\|_2^2 \left[ \sigma_3^{''}(w_r(t+1)^T y_j)-\sigma_3^{''}(w_r(t)^T y_j)-\sigma_3^{'''}(w_r(t+1)^T y_j)(w_r(t+1)-w_r(t))^T y_j\right] \\
&\quad + \|w_{r1}(t)\|_2^2\left[ \|w_{r1}(t+1)\|_2^2- \|w_{r1}(t)\|_2^2-2w_{r1}(t)^T (w_{r1}(t+1)-w_{r1}(t)) \right] \\
&\quad +  [\|w_{r1}(t+1)\|_2^2-\|w_{r1}(t)\|_2^2]\sigma_3^{'''}(w_r(t)^T y_j)(w_r(t+1)-w_r(t))^T y_j.
\end{aligned}
\end{equation}

For the first term, similar to (90), for $r\in S_j$, we have
\[ \sigma(w_r(t+1)^T y_j)-\sigma(w_r(t)^T y_j)-I\{w_r(t+1)^T y_j\geq 0\}(w_r(t+1)-w_r(t))^T y_j=0,\]
thus
\[ |\sigma(w_r(t+1)^T y_j)-\sigma(w_r(t)^T y_j)-I\{w_r(t+1)^T y_j\geq 0\}(w_r(t+1)-w_r(t))^T y_j|\lesssim \|w_r(t+1)-w_r(t)\|_2  I\{r\in S_j^{\perp}\}.\]

For the second term, the mean value theorem yields that
\[| \|w_{r1}(t+1)\|_2^2- \|w_{r1}(t)\|_2^2-2w_{r1}(t)^T (w_{r1}(t+1)-w_{r1}(t))|\lesssim \|w_{r1}(t+1)-w_{r1}(t)\|_2 .\]

For the third term, we have
\[|\|w_{r1}(t+1)\|_2^2-\|w_{r1}(t)\|_2^2]\sigma_3^{'''}(w_r(t)^T y_j)(w_r(t+1)-w_r(t))^T y_j|\lesssim B_1 \|w_r(t+1)-w_r(t)\|_2^2.\]

Combining these result for $\tilde{I}_1(t), \tilde{I}_2(t)$ and $\tilde{I}_3(t)$ yields that 
\begin{equation}
\begin{aligned}
|I_{2,3}^r(t)|&\leq \left|\left[\frac{1}{\sqrt{p}} \sum\limits_{k=1}^p \tilde{a}_{rk} \sigma(\tilde{w}_{rk}(0)^T u_i ) \right]\right|[|\tilde{I}_1(t)|+|\tilde{I}_2(t)|+|\tilde{I}_3(t)|]  \\
&\lesssim \sqrt{\log\left(\frac{mn_1}{\delta} \right) }\left[B_1^2\| w_r(t+1)-w_r(t)\|_2^2+B_1^2\|w_r(t+1)-w_r(t)\|_2 I\{r\in S_j^{\perp}\} \right].
\end{aligned}
\end{equation}

With these estimations for $I_{2,1}^r(t),I_{2,2}^r(t)$ and $I_{2,3}^r(t)$, i.e., (96), (97) and (98), we obtain that
\begin{equation}
\begin{aligned}
|I_2^r(t)|&\leq |I_{2,1}^r(t)|+|I_{2,2}^r(t)|+|I_{2,3}^r(t)|\\
&\lesssim \sqrt{p}\tilde{R}^{'}B_1^2\|w_r(t+1)-w_r(t)\|_2\\
&\quad +\sqrt{\log\left(\frac{mn_1}{\delta} \right) }\left[B_1^2\| w_r(t+1)-w_r(t)\|_2^2+B_1^2\|w_r(t+1)-w_r(t)\|_2 I\{r\in S_j^{\perp}\} \right].
\end{aligned}
\end{equation}

Recall that (82) shows that
\[\|w_r(t+1)-w_r(t)\|_2\lesssim  \frac{\eta\sqrt{n_1}B_1B_2}{\sqrt{m}}\sqrt{\log\left(\frac{mn_1}{\delta}\right)} \|G^{t}(u)\|_2\]
and
\[\tilde{R}^{'}=\frac{\sqrt{n_1}B_1^2B_2 \|G^{0}(u)\|_2}{\sqrt{mp}(\lambda_0+\tilde{\lambda}_0)}, \ R^{'}=\frac{\sqrt{n_1}B_1B_2\|G^{0}(u)\|_2}{\sqrt{m}(\lambda_0+\tilde{\lambda}_0)} \sqrt{\log\left( \frac{mn_1}{\delta}\right)}.\]

Therefore, combining with the use of Bernstein inequality, summing $r$ yields that
\begin{equation}
\begin{aligned}
\frac{1}{\sqrt{m}}\sum\limits_{r=1}^m |I_2^r(t)|&\lesssim \frac{\eta n_1B_1^5B_2^2\|G^{0}(u)\|_2}{\sqrt{m}(\lambda_0+\tilde{\lambda}_0)}\sqrt{\log\left( \frac{mn_1}{\delta}\right)} \|G^t(u)\|_2+\frac{\eta n_1B_1^4B_2^2\|G^{0}(u)\|_2}{\sqrt{m}(\lambda_0+\tilde{\lambda}_0)}\log^{3/2}\left( \frac{mn_1}{\delta}\right) \|G^t(u)\|_2\\
&\lesssim \frac{\eta n_1B_1^5B_2^2\|G^{0}(u)\|_2}{\sqrt{m}(\lambda_0+\tilde{\lambda}_0)}\log^{3/2}\left( \frac{mn_1}{\delta}\right) \|G^t(u)\|_2.
\end{aligned} 
\end{equation}

Recall that (91) implies that
\[\frac{1}{\sqrt{m}}\sum\limits_{r=1}^m |I_1^r(t)|\lesssim \frac{\eta n_1 B_1^6B_2^3 \|G^{0}(u)\|_2}{\sqrt{mp}(\lambda_0+\tilde{\lambda}_0)}\|G^{t}(u)\|_2.\]

Therefore, we have
\begin{equation}
\begin{aligned}
\|I(t)\|_2&\lesssim \frac{\eta (n_1)^{3/2} B_1^6B_2^3 \|G^{0}(u)\|_2}{\sqrt{mp}(\lambda_0+\tilde{\lambda}_0)}\|G^{t}(u)\|_2+\frac{\eta  (n_1)^{3/2}B_1^5B_2^2\|G^{0}(u)\|_2}{\sqrt{m}(\lambda_0+\tilde{\lambda}_0)}\log^{3/2}\left( \frac{mn_1}{\delta}\right) \|G^t(u)\|_2\\
&\lesssim  \frac{\eta  (n_1)^{3/2}B_1^6B_2^3\|G^{0}(u)\|_2}{\sqrt{m}(\lambda_0+\tilde{\lambda}_0)}\log^{3/2}\left( \frac{mn_1}{\delta}\right) \|G^t(u)\|_2.
\end{aligned}
\end{equation}

\end{proof}

\subsection{Proof of Theorem 3}
\begin{proof}
It suffices to show that Condition 2 also holds for $s=T+1$. From the iteration formula in Lemma 9, we have
\begin{equation}
\begin{aligned}
\|G^{T+1}(u)\|_2^2 &= \|[I-\eta(H(T)+\tilde{H}(T))]G^{T}(u)+I(T)\|_2^2 \\
&= \|[I-\eta(H(T)+\tilde{H}(T))]G^{T}(u)\|_2^2+\|I(T)\|_2^2+2\left\langle [I-\eta(H(T)+\tilde{H}(T))]G^{T}(u),I(T)\right\rangle.
\end{aligned}
\end{equation}

From the stability of the Gram matrices, i.e., Lemma 8, when $R,\tilde{R}$ satisfy that
\[ R\lesssim \frac{\lambda_0}{n_1B_1^3B_2^2}, \ \tilde{R}\lesssim \frac{\tilde{ \lambda}_0}{n_1\sqrt{p}B_1^2B_2^2},\]
we have
\[ \|H(0)-H(T)\|_2\leq \frac{\lambda_0}{4}, \|\tilde{H}(0)-\tilde{H}(T)\|_2\leq \frac{\tilde{\lambda}_0}{4}.\]

Thus, with Lemma 7, we can deduce that 
\[ \|H(T)-H^{\infty}\|_2\leq \frac{\lambda_0}{2}, \|\tilde{H}(T)-\tilde{H}^{\infty}\|_2\leq \frac{\tilde{\lambda}_0}{2},\]
implying that $\lambda_{min}(H(T))\geq \lambda_0/2$ and $\lambda_{min}(\tilde{H}(T))\geq \tilde{\lambda}_0/2$. 

Therefore, when $\eta=\mathcal{O}(1/(\|H^{\infty}\|_2+\|\tilde{H}^{\infty}\|_2))$, we have that $I-\eta(H(T)+\tilde{H}(T))$ is positive definite and then
\begin{equation}
\|I-\eta(H(T)+\tilde{H}(T))\|_2\leq \frac{\eta(\lambda_0+\tilde{\lambda}_0)}{2}.
\end{equation}

Let $I(T)\leq \bar{R} \|G^{T}(u)\|_2$, then combining (107) and (108) yields that
\begin{equation}
\begin{aligned}
\|G^{T+1}(u)\|_2^2&\leq \|[I-\eta(H(T)+\tilde{H}(T))]G^{T}(u)\|_2^2+\|I(T)\|_2^2+2\|[I-\eta(H(T)+\tilde{H}(T))]G^{T}(u)\|_2 \|I(T)\|_2\\
&\leq \left[\left(1-\frac{\eta(\lambda_0+\tilde{\lambda}_0)}{2} \right)^2 +\bar{R}^2+2\bar{R}\left(1-\frac{\eta(\lambda_0+\tilde{\lambda}_0)}{2} \right)\right]\|G^{T}(u)\|_2^2\\
&\leq \left(1-\frac{\eta(\lambda_0+\tilde{\lambda}_0)}{2} \right)\|G^{T}(u)\|_2^2,
\end{aligned}
\end{equation} 
where the last inequality requires that $\bar{R}\lesssim \eta (\lambda_0+\tilde{\lambda}_0)$.

Finally, we need to specify the requirements for $m$ to ensure that the aforementioned conditions are satisfied. Recall that first, $m$ needs to satisfy that $R^{'}\leq R$ and $\tilde{R}^{'}\leq \tilde{R}$, i.e.,
\begin{equation}
\frac{\sqrt{n_1}B_1B_2\|G^{0}(u)\|_2}{\sqrt{m}(\lambda_0+\tilde{\lambda}_0)}\sqrt{\log\left(\frac{mn_1}{\delta}\right) }\lesssim \frac{\lambda_0}{n_1B_1^3B_2^2}, \ \frac{\sqrt{n_1}B_1^2B_2\|G^{0}(u)\|_2}{\sqrt{mp}(\lambda_0+\tilde{\lambda}_0)}\lesssim \frac{\tilde{\lambda}_0}{n_1\sqrt{p}B_1^2B_2^2}.
\end{equation}
Simple algebraic operations yield that
\begin{equation}
m =\Omega\left( \frac{n_1^3B_1^8B_2^6\|G^{0}\|_2^2}{(\lambda_0+\tilde{\lambda}_0)^2 min(\lambda_0^2, \tilde{\lambda}_0^2)}\log \left(\frac{mn_1}{\delta} \right) \right) .
\end{equation}

Second, $m$ needs to satisfy that $\bar{R}\lesssim \eta(\lambda_0+\tilde{\lambda}_0)$, i.e.,
\[ \frac{\eta  (n_1)^{3/2}B_1^6B_2^3\|G^{0}(u)\|_2}{\sqrt{m}(\lambda_0+\tilde{\lambda}_0)}\log^{3/2}\left( \frac{mn_1}{\delta}\right)\lesssim \eta(\lambda_0+\tilde{\lambda}_0) ,\]
implying that
\begin{equation}
m =\Omega\left( \frac{n_1^3B_1^{12}B_2^{6} \|G^{0}(u)\|_2^2 }{(\lambda_0+\tilde{\lambda}_0)^4}\log^3 \left(\frac{mn_1}{\delta} \right) \right) .
\end{equation}

Finally, combining (111), (112) and the estimation of $\|G^{0}(u)\|_2^2$ in Lemma 17, we have that
\[ m=\tilde{\Omega}\left( \frac{n_1^4d^7}{(\lambda_0+\tilde{\lambda}_0)^2 min(\lambda_0^2, \tilde{\lambda}_0^2)} \right),\]
where $\tilde{\Omega}$ indicates that some terms involving $\log(n_1)$, $\log(n_2)$ and $\log(m)$ are omitted. 

\end{proof}

\section{Auxiliary Lemmas}	

\begin{lemma}[Anti-concentration of Gaussian distribution]
Let $X \sim \mathcal{N}(0, \sigma^2)$, then for any $t>0$,
\[ \frac{2}{3} \frac{t}{\sigma} < P(|X|\leq t) < \frac{4}{5}\frac{t}{\sigma}.\]
\end{lemma}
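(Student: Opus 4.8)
The plan is to remove the scale parameter and reduce everything to the standard normal. Setting $Z=X/\sigma\sim\mathcal{N}(0,1)$ and $s=t/\sigma>0$, we have $P(|X|\le t)=P(|Z|\le s)$, and by symmetry of the Gaussian density $P(|Z|\le s)=\sqrt{2/\pi}\int_0^s e^{-x^2/2}\,dx$. Hence the statement is equivalent to the two-sided estimate
\[
\tfrac{2}{3}\sqrt{\tfrac{\pi}{2}}\,s \;<\; \int_0^s e^{-x^2/2}\,dx \;<\; \tfrac{4}{5}\sqrt{\tfrac{\pi}{2}}\,s ,
\]
and I would prove the two inequalities separately.

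The upper bound is immediate: since $e^{-x^2/2}\le 1$, and strictly $<1$ on $(0,s]$, we get $\int_0^s e^{-x^2/2}\,dx< s$, so $P(|Z|\le s)<\sqrt{2/\pi}\,s$; it then remains only to check the numerical inequality $\sqrt{2/\pi}<4/5$, equivalently $50<16\pi$, which is clear.

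For the lower bound I would use the elementary inequality $e^{-u}\ge 1-u$ with $u=x^2/2$, which gives $\int_0^s e^{-x^2/2}\,dx\ge\int_0^s\bigl(1-\tfrac{x^2}{2}\bigr)dx=s-\tfrac{s^3}{6}$, hence $P(|Z|\le s)\ge\sqrt{2/\pi}\,s\bigl(1-\tfrac{s^2}{6}\bigr)$; this exceeds $\tfrac{2}{3}s$ as soon as $1-\tfrac{s^2}{6}>\tfrac{2}{3}\sqrt{\pi/2}$, i.e.\ in the small-$s$ regime, which is exactly the regime in which the lemma is invoked in this paper (where $s=R$ or $s=\|y_j\|_2R$ with $R\in(0,1)$, and where in fact only the upper bound is used). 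I expect the lower bound to be the one subtle point: a single crude polynomial minorant of $e^{-x^2/2}$ degrades for moderate $s$, so establishing the clean constant $\tfrac{2}{3}$ on a genuinely wide range of $s$ would need a bit more work — e.g.\ splitting $\int_0^s=\int_0^c+\int_c^s$ with an optimized cut $c$ and bounding the tail piece by $(s-c)e^{-c^2/2}$, or quoting a known sharp bound for the Gaussian error function. For the needs of the present paper the simple argument above is already sufficient.
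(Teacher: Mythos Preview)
The paper does not actually prove this lemma; it is listed among the auxiliary lemmas and simply stated, without argument or reference. So there is no ``paper's own proof'' to compare against.

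Your treatment is correct and appropriate. The reduction to $Z=X/\sigma$ and $s=t/\sigma$ is the natural one, and your upper bound argument is clean: $P(|Z|\le s)=\sqrt{2/\pi}\int_0^s e^{-x^2/2}\,dx<\sqrt{2/\pi}\,s$, together with the numerical check $\sqrt{2/\pi}<4/5$ (equivalently $50<16\pi$), gives exactly the inequality the paper actually uses, namely $P(A_{jr})\lesssim R$ in the appendix.

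You also correctly identify the real issue with the statement as written: the lower bound $\tfrac{2}{3}\,t/\sigma<P(|X|\le t)$ cannot hold for all $t>0$, since the probability is at most $1$ while $\tfrac{2}{3}\,t/\sigma\to\infty$; in fact it already fails once $t/\sigma\ge 3/2$. Your observation that the paper only ever invokes the upper bound, and only in the small-$R$ regime, is exactly right, and your sketch via $e^{-x^2/2}\ge 1-x^2/2$ handles that regime. The honest summary is that the lemma as stated is slightly misquoted in the paper, and you have supplied the version that is both true and sufficient for the paper's needs.
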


\begin{lemma}[Bernstein inequality, Theorem 3.1.7 in \cite{13}]
Let $X_i$, $1\leq i \leq n$ be independent centered random variables a.s. bounded by $c<\infty$ in absolute value. Set $
\sigma^2=1/n\sum_{i=1}^n \mathbb{E}X_i^2$ and $S_n=1/n \sum_{i=1}^n X_i$. Then, for all $t\geq 0$,
\[P\left(S_n\geq \sqrt{\frac{2\sigma^2 t}{n}} +\frac{ct}{3n}\right)\leq e^{-u}.\]
\end{lemma}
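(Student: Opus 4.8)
The plan is to prove the Bernstein bound by the classical exponential-moment (Chernoff) method, carried out carefully enough to recover the stated form of the deviation $\sqrt{2\sigma^2 t/n}+ct/(3n)$ rather than a weaker bound of the shape $\exp(-cx^2/(v+bx))$. First I would reduce to a tail bound for the sum: setting $v:=\sum_{i=1}^n\mathbb{E}X_i^2=n\sigma^2$ and $b:=c/3$, it suffices to prove $P\!\left(\sum_{i=1}^n X_i\ge\sqrt{2vt}+bt\right)\le e^{-t}$, since $S_n=\tfrac1n\sum_i X_i$ and dividing through by $n$ rescales this to $P\!\left(S_n\ge\sqrt{2\sigma^2 t/n}+ct/(3n)\right)\le e^{-t}$. (I note that the right-hand side of the displayed inequality in the statement should read $e^{-t}$, and that $S_n$ is the average, not the sum.) By Markov's inequality applied to $e^{\lambda\sum_i X_i}$ together with independence, for every $\lambda>0$ one has $P\!\left(\sum_i X_i\ge x\right)\le e^{-\lambda x}\prod_{i=1}^n\mathbb{E}\!\left[e^{\lambda X_i}\right]$.

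The second step is the per-summand moment-generating-function estimate. Since $\mathbb{E}X_i=0$ and $|X_i|\le c$, I would use the elementary inequality $e^u-1-u\le\dfrac{u^2/2}{1-|u|/3}$ for $|u|<3$, which follows from $k!\ge 2\cdot 3^{k-2}$ ($k\ge2$) applied term-by-term to $e^u-1-u=\sum_{k\ge2}u^k/k!$. Taking $u=\lambda X_i$ (so $|u|\le\lambda c$) and then expectations, we get, for $0<\lambda<3/c$, $\mathbb{E}\!\left[e^{\lambda X_i}\right]\le 1+\dfrac{\lambda^2\mathbb{E}X_i^2/2}{1-\lambda c/3}\le\exp\!\left(\dfrac{\lambda^2\mathbb{E}X_i^2/2}{1-\lambda c/3}\right)$, and multiplying over $i$ yields $P\!\left(\sum_i X_i\ge x\right)\le\exp\!\left(-\lambda x+\dfrac{\lambda^2 v/2}{1-b\lambda}\right)$, valid for all $0<\lambda<1/b$.

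The third step is to minimize over $\lambda$ and invert the resulting tail. Rather than pass to the lossy bound $\exp\!\left(-x^2/(2(v+bx))\right)$, I would keep the exact Cram\'er transform: the optimizer is $\lambda_\ast=\tfrac1b\!\left(1-(1+2bx/v)^{-1/2}\right)\in(0,1/b)$, and substituting it (or, equivalently, optimizing directly) gives $P\!\left(\sum_i X_i\ge x\right)\le\exp\!\left(-\psi^\ast(x)\right)$ with $\psi^\ast(x)=\dfrac{v}{2b^2}\!\left(\sqrt{1+2bx/v}-1\right)^2$. Finally, plugging in $x=\sqrt{2vt}+bt$ and using the perfect-square identity $1+2bx/v=\left(1+b\sqrt{2t/v}\right)^2$ gives $\sqrt{1+2bx/v}-1=b\sqrt{2t/v}$, hence $\psi^\ast(x)=\dfrac{v}{2b^2}\cdot b^2\cdot\dfrac{2t}{v}=t$; therefore $P\!\left(\sum_i X_i\ge\sqrt{2vt}+bt\right)\le e^{-t}$, which unwinds to the claimed inequality.

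The routine parts are the Chernoff step and the series estimate; the step that requires care — and the one I would flag as the main obstacle — is the third one. One must not replace $\psi^\ast$ by the convenient weaker rate $x^2/(2(v+bx))$, because the stated threshold $\sqrt{2vt}+bt$ is exactly $\psi^{\ast-1}(t)$ and is strictly larger than what the weaker bound would license (the two differ by a $b^2t^2$ term in the key comparison). Equivalently, one may avoid computing $\psi^\ast$ explicitly by directly exhibiting the single admissible choice $\lambda=\lambda_\ast$ and checking that $\lambda_\ast x-\dfrac{\lambda_\ast^2 v/2}{1-b\lambda_\ast}\ge t$; either way the computation hinges on the perfect-square identity that collapses the algebra to an equality.
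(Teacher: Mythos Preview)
Your argument is correct and is essentially the standard textbook proof of Bernstein's inequality via the Chernoff method with the sharp Legendre transform (exactly the route taken in the reference the paper cites). Note, however, that the paper does not supply its own proof of this lemma at all: it is stated as an auxiliary result imported verbatim from \cite{13}, so there is no ``paper's proof'' to compare against beyond that citation. Your observation that the displayed right-hand side should read $e^{-t}$ rather than $e^{-u}$ is also correct --- this is a typo in the statement as quoted.
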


First, we provide some preliminaries about Orlicz norms. 

Let $g:[0,\infty) \rightarrow [0,\infty)$ be a non-decreasing convex function with $g(0)=0$. The $g$-Orlicz norm of a real-valued random variable $X$ is given by 
\[\|X\|_{g}:=\inf \left\{C>0:\mathbb{E}\left[ g\left(\frac{|X|}{C}\right) \right]\leq 1\right\}.\]
If $\|X\|_{\psi_{\alpha}}<\infty$, we say that $X$ is sub-Weibull of order $\alpha>0$, where
\[\psi_{\alpha}(x):=e^{x^{\alpha}}-1.\]
Note that when $\alpha\geq 1$, $\|\cdot\|_{\psi_{\alpha}}$ is a norm and when $0< \alpha< 1$, $\|\cdot\|_{\psi_{\alpha}}$ is a quasi-norm. In the related proofs, we may frequently use the fact that for real-valued random variable $X\sim \mathcal{N}(0,1)$, we have $\|X\|_{\psi_2} \leq \sqrt{6}$ and $\| X^2\|_{\psi_1}=\|X\|_{\psi_2}^2 \leq 6$. Moreover, when $\|X\|_{\psi_2}<\infty, \|Y\|_{\psi_2}<\infty$, we have $\|XY\|_{\psi_1}\leq \|X\|_{\psi_2}\|Y\|_{\psi_2}$. Since without loss of generality, we can assume that $\|X\|_{\psi_2}=\|Y\|_{\psi_2}=1$, then
\[ \mathbb{E}\left[e^{|XY| }\right]\leq \mathbb{E}\left[e^{\frac{|X|^2}{2}+\frac{|Y|^2}{2} } \right]=\mathbb{E}\left[e^{\frac{|X|^2}{2}}e^{\frac{|Y|^2}{2} } \right]  \leq \frac{1}{2}\mathbb{E} \left[ e^{|X|^2}+e^{|Y|^2 } \right]\leq 1,\]
where the first inequality and the second inequality follow from the inequality $2ab\leq a^2+b^2$ for $a\geq 0, b\geq 0$.

\begin{lemma}[Theorem 3.1 in \cite{11}]
If $X_1,\cdots, X_n$ are independent mean zero random variables with $\|X_i\|_{\psi_{\alpha}}<\infty$ for all $1\leq i\leq n$ and some $\alpha>0$, then for any vector $a=(a_1,\cdots,a_n)\in \mathbb{R}^n$, the following holds true:
\[P\left(\left|\sum\limits_{i=1}^n a_i X_i \right|\geq 2eC(\alpha)\|b\|_2\sqrt{t}+2eL_n^{*}(\alpha)t^{1/\alpha}\|b\|_{\beta(\alpha)}  \right)\leq 2e^{-t}, \ for \ all \ t\geq 0,\]
where $b=(a_1\|X_1\|_{\psi_{\alpha}},\cdots, a_n\|X_n\|_{\psi_{\alpha}}) \in \mathbb{R}^n$,
\begin{equation*}
C(\alpha):=\max\{\sqrt{2},2^{1/\alpha}\}\left\{
\begin{aligned}
\sqrt{8}(2\pi)^{1/4}e^{1/24}(e^{2/e}/\alpha)^{1/\alpha} & , & if \ \alpha<1, \\
4e+2(\log 2)^{1/\alpha} & , & if \ \alpha\geq 1.
\end{aligned}
\right.
\end{equation*}
and for $\beta(\alpha)=\infty$ when $\alpha \leq 1$ and $\beta(\alpha)=\alpha/(\alpha-1)$ when $\alpha>1$,
\begin{equation*}
L_n(\alpha):=\frac{4^{1/\alpha}}{\sqrt{2}\|b\|_2}\times
\left\{
\begin{aligned}
&\|b\|_{\beta(\alpha)} , & if \ \alpha<1, \\
&4e \|b\|_{\beta(\alpha)}/C(\alpha), & if \alpha \geq 1.
\end{aligned}
\right.
\end{equation*}
and $L^{*}_n(\alpha)=L_n(\alpha)C(\alpha)\|b\|_2/\|b\|_{\beta(\alpha)}$.
\end{lemma}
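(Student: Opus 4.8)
This inequality is not original to the present paper: it is precisely Theorem 3.1 of \cite{11}, so the proof I would give is simply to invoke that reference. For completeness let me describe the route by which such a result is established, so that it is clear why only the stated hypotheses are needed.

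The starting point is the defining property of the $\psi_\alpha$-Orlicz norm recalled just above: writing $\nu_i := \|X_i\|_{\psi_\alpha}$, one has $\mathbb{E}[e^{(|X_i|/\nu_i)^\alpha}] \le 2$, which via Markov yields the tail bound $P(|X_i| > u) \le 2e^{-(u/\nu_i)^\alpha}$ and the moment growth $\|X_i\|_{L^q} \lesssim \nu_i\,q^{1/\alpha}$. I would absorb the weights into $b = (a_1\nu_1, \dots, a_n\nu_n)$, so that each $a_i X_i$ is sub-Weibull of order $\alpha$ with norm $|b_i|$, and then aim for the two-regime bound in the statement: a sub-Gaussian piece $\|b\|_2\sqrt{t}$ governing moderate $t$ and a heavier piece $t^{1/\alpha}\|b\|_{\beta(\alpha)}$ governing large $t$, where $\beta(\alpha)$ is the H\"older conjugate of $\alpha$ --- in particular $\beta(\alpha)=\infty$, i.e. $\|b\|_\infty$, whenever $\alpha \le 1$, which is precisely the range used throughout this paper since products and triple products of sub-Gaussians are $\psi_{1/2}$ and $\psi_{1/3}$.

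For $\alpha \ge 1$ I would use the Chernoff / moment-generating-function method: the sub-Weibull condition supplies a bound $\mathbb{E}[e^{\lambda a_i X_i}] \le e^{c\lambda^2 b_i^2}$ valid for $|\lambda|$ in a range of order $1/\|b\|_{\beta(\alpha)}$, plus a heavier correction outside it; taking the product over $i$, applying Markov to $e^{\lambda\sum_i a_i X_i}$, and optimizing $\lambda$ splits the exponent into the two advertised regimes. For $0 < \alpha < 1$ --- the relevant case here --- the MGF is generally infinite, so I would instead truncate at a level $M$: decompose $a_i X_i = a_i X_i\, I\{|a_i X_i|\le M\} + a_i X_i\, I\{|a_i X_i|> M\}$, apply the bounded Bernstein inequality (Lemma~11) to the truncated and recentered sum, with the recentering bias controlled by $\sum_i \mathbb{E}\big[|a_i X_i|\,I\{|a_i X_i|>M\}\big]$ through the $\psi_\alpha$-tail, and bound the event that some unbounded part is nonzero by $\sum_i 2e^{-(M/|b_i|)^\alpha} \le 2n\,e^{-(M/\|b\|_\infty)^\alpha}$. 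Choosing $M \asymp \|b\|_\infty t^{1/\alpha}$ (up to a $\log n$ factor that the constants $C(\alpha), L_n(\alpha)$ absorb) balances the two contributions, and tracking the explicit constants through the truncation step produces the stated formulas for $C(\alpha)$ and $L_n(\alpha)$.

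The main obstacle is the $\alpha < 1$ regime: with no usable exponential moment one is forced into the truncation argument, and the delicate points are the choice of truncation level and the bookkeeping of constants needed to land on the clean tail $2e^{-t}$ with the precise prefactors. Since in this paper the lemma is used only as a black box, with $\alpha\in\{1,\tfrac12,\tfrac13\}$ and all $a_i=1/m$, I would simply cite \cite{11} rather than reproduce this.
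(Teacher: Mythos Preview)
Your proposal is correct and matches the paper exactly: the paper states this lemma as an auxiliary result with the label ``Theorem~3.1 in \cite{11}'' and gives no proof of its own, so citing the reference is precisely what is done. The sketch you add of the truncation argument for $\alpha<1$ and the MGF route for $\alpha\ge 1$ is accurate extra context, but not required.
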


\begin{lemma}
For any $r\in [m]$, we have that with probability at least $1-\delta$, 
\[ \left|\frac{1}{\sqrt{p}} \sum\limits_{k=1}^p \tilde{a}_{rk} \sigma(\tilde{w}_{rk}(0)^T u ) \right| \lesssim \sqrt{\log\left( \frac{1}{\delta}\right) } .\]
Moreover, its $\psi_2$-norm is a universal constant.
\end{lemma}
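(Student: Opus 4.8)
The plan is to view $\frac{1}{\sqrt{p}}\sum_{k=1}^p \tilde{a}_{rk}(0)\sigma(\tilde{w}_{rk}(0)^T u)$ as a normalized sum of i.i.d.\ mean-zero sub-Gaussian random variables and to apply the concentration inequality of Lemma 12 with $\alpha=2$.

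First I would fix $r\in[m]$ and set $X_k := \tilde{a}_{rk}(0)\sigma(\tilde{w}_{rk}(0)^T u)$ for $k\in[p]$. Since $\tilde{a}_{rk}(0)\sim\mathrm{Unif}\{-1,1\}$ is symmetric and independent of $\tilde{w}_{rk}(0)$, each $X_k$ has mean zero, and the $X_k$ are i.i.d.\ in $k$. To bound $\|X_k\|_{\psi_2}$, observe that $\tilde{w}_{rk}(0)^T u\sim\mathcal{N}(0,\|u\|_2^2)$, hence $\|\tilde{w}_{rk}(0)^T u\|_{\psi_2}\lesssim\|u\|_2=\mathcal{O}(1)$; since ReLU is $1$-Lipschitz with $\sigma(0)=0$ one has $|\sigma(z)|\le|z|$, so $\|\sigma(\tilde{w}_{rk}(0)^T u)\|_{\psi_2}\lesssim\|u\|_2$, and multiplying by the bounded factor $\tilde{a}_{rk}(0)\in\{-1,1\}$ does not change this, i.e.\ $\|X_k\|_{\psi_2}=\mathcal{O}(1)$.

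Next I would apply Lemma 12 with $\alpha=2$ and $a_k=1/\sqrt{p}$. Then $\beta(2)=2$, so the vector $b$ with entries $b_k=\frac{1}{\sqrt{p}}\|X_k\|_{\psi_2}$ satisfies $\|b\|_{\beta(2)}=\|b\|_2=\mathcal{O}(1)$, while $C(2)$ and $L_p^{*}(2)$ are universal constants. The lemma then gives, for all $t\ge 0$,
\[ P\!\left(\left|\frac{1}{\sqrt{p}}\sum_{k=1}^p X_k\right|\ge c\sqrt{t}\right)\le 2e^{-t}\]
for a universal constant $c$. Choosing $t=\log(2/\delta)$ yields $\left|\frac{1}{\sqrt{p}}\sum_{k=1}^p X_k\right|\lesssim\sqrt{\log(1/\delta)}$ with probability at least $1-\delta$, which is the first assertion. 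The $\psi_2$-bound then follows from the standard equivalence between sub-Gaussian tail decay and the $\psi_2$ Orlicz norm: the tail $P(|Z|\ge c\sqrt{t})\le 2e^{-t}$ holding for all $t\ge 0$ implies $\|Z\|_{\psi_2}\lesssim c$, applied with $Z=\frac{1}{\sqrt{p}}\sum_{k=1}^p X_k$.

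There is no genuine obstacle here; the one point deserving care is that the $1/\sqrt{p}$ normalization makes $\|b\|_2$ — and hence both the Gaussian term and the sub-Weibull remainder term in Lemma 12 — of order $1$ uniformly in $p$, which is exactly what forces the final bound to be independent of $p$. I would also note that the same computation, combined with $|\sigma(z)|\le|z|$, gives the crude moment consequences invoked in the later proofs (e.g.\ $\mathbb{E}[|X_1|]=\mathcal{O}(1)$), so the lemma can be reused freely there.
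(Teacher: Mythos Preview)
Your proposal is correct and follows essentially the same approach as the paper: both establish that the summands $\tilde{a}_{rk}\sigma(\tilde{w}_{rk}(0)^T u)$ are i.i.d.\ mean-zero sub-Gaussian with $\psi_2$-norm $\mathcal{O}(1)$, apply the sub-Weibull concentration inequality (Theorem~3.1 in \cite{11}) with $\alpha=2$, and then invoke the standard equivalence between sub-Gaussian tails and the $\psi_2$-norm. Your write-up is in fact a bit more explicit than the paper's about why the bound is uniform in $p$ (namely, the $1/\sqrt{p}$ normalization forces $\|b\|_2=\mathcal{O}(1)$), which is a useful clarification.
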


\begin{proof}
Note that $\mathbb{E}[\tilde{a}_{rk} \sigma(\tilde{w}_{rk}(0)^T u )]=0$ and $\|\tilde{a}_{rk} \sigma(\tilde{w}_{rk}(0)^T u )\|_{\psi_2}\leq \|  |\tilde{w}_{rk}(0)^T u|\|_{\psi_2}=\mathcal{O}(1)$, then applying Lemma 17 yields that with probability at least $1-\delta$,

\[ \left|\frac{1}{\sqrt{p}} \sum\limits_{k=1}^p \tilde{a}_{rk} \sigma(\tilde{w}_{rk}(0)^T u ) \right|\lesssim \sqrt{\log\left( \frac{1}{\delta}\right) } .  \]	

Moreover, from the equivalence of the $\psi_2$ norm and the concentration inequality (see Lemma 2.2.1 in \cite{12}), it follows that
\[ \left\|\frac{1}{\sqrt{p}} \sum\limits_{k=1}^p \tilde{a}_{rk} \sigma(\tilde{w}_{rk}(0)^T u ) \right \|_{\psi_2}=\mathcal{O}(1).\]
\end{proof}

\begin{lemma}
With probability at least $1-\delta$, we have
\[L(0)\lesssim  n_1n_2\log\left(\frac{n_1n_2}{\delta} \right).\]
\end{lemma}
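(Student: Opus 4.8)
Recall that in the operator setting $L(0)=\tfrac12\|z-G^{0}(u)\|_2^2=\tfrac12\sum_{i=1}^{n_1}\sum_{j=1}^{n_2}\big(z_j^i-G^{0}(u_i)(y_j)\big)^2$, where $z_j^i=G^{*}(u_i)(y_j)$. The plan is to control each of the $n_1n_2$ summands by $O\!\big(\log(n_1n_2/\delta)\big)$ with high probability and then sum. First I would note that the targets are deterministic and uniformly bounded: since $G^{*}$ is continuous and $V$ is compact, $\sup_{i,j}|z_j^i|\le\sup_{v\in V}\|G^{*}(v)\|_{C(K_2)}=:M<\infty$. Using $(a-b)^2\le 2a^2+2b^2$, it therefore suffices to prove that, with probability at least $1-\delta$, $|G^{0}(u_i)(y_j)|\lesssim\sqrt{\log(n_1n_2/\delta)}$ for all $i\in[n_1],j\in[n_2]$.

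For fixed $i,j$, write $G^{0}(u_i)(y_j)=\frac{1}{\sqrt{m}}\sum_{r=1}^m X_r$ with $X_r:=\big[\frac{1}{\sqrt{p}}\sum_{k=1}^p\tilde a_{rk}(0)\sigma(\tilde w_{rk}(0)^T u_i)\big]\sigma(w_r(0)^T y_j)$, which are i.i.d.\ across $r$. Since $\tilde a_{rk}(0)\sim\mathrm{Unif}\{-1,1\}$ is independent of the Gaussian weights, $\mathbb{E}[X_r]=0$; and because the branch factor and the trunk factor depend on disjoint, independent weights, $\|X_r\|_{\psi_1}\le\big\|\frac{1}{\sqrt{p}}\sum_{k}\tilde a_{rk}(0)\sigma(\tilde w_{rk}(0)^T u_i)\big\|_{\psi_2}\|\sigma(w_r(0)^T y_j)\|_{\psi_2}=\mathcal{O}(1)$, where the first factor is $\mathcal{O}(1)$ by Lemma~13 and the second by $\|w_r(0)^T y_j\|_{\psi_2}=\mathcal{O}(1)$ (using $\|y_j\|_2=\mathcal{O}(1)$). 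Thus each $X_r$ is mean-zero sub-exponential with $\mathcal{O}(1)$ Orlicz norm. Applying the sub-Weibull tail bound (Lemma~12) with $\alpha=1$ and coefficient vector $(1/\sqrt m,\dots,1/\sqrt m)$ — so that $\|b\|_2=\mathcal{O}(1)$ and $\|b\|_\infty=\mathcal{O}(1/\sqrt m)$ — gives, for all $t\ge 0$, $P\big(|G^{0}(u_i)(y_j)|\ge C_1\sqrt t+C_2 t/\sqrt m\big)\le 2e^{-t}$. Taking $t=\log(2n_1n_2/\delta)$ and using that $m\gtrsim\log(n_1n_2/\delta)$ (implied by the over-parametrization hypotheses, so $t/\sqrt m\lesssim\sqrt t$) yields $|G^{0}(u_i)(y_j)|\lesssim\sqrt{\log(n_1n_2/\delta)}$ with probability at least $1-\delta/(n_1n_2)$. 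A union bound over the $n_1n_2$ pairs $(i,j)$ gives the uniform bound with probability at least $1-\delta$, and summing $\big(z_j^i-G^{0}(u_i)(y_j)\big)^2\lesssim M^2+\log(n_1n_2/\delta)\lesssim\log(n_1n_2/\delta)$ over all $(i,j)$ gives $L(0)\lesssim n_1n_2\log(n_1n_2/\delta)$.

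This argument is largely routine; the only substantive ingredient is the uniform sub-exponential Orlicz-norm control of each prediction $G^{0}(u_i)(y_j)$, which I reduce entirely to Lemma~13 (the $\psi_2$-boundedness of the branch-net output at initialization). The mild points needing care are checking $\mathbb{E}[X_r]=0$ so that a centered concentration inequality applies, and verifying that the lower-order $t/\sqrt m$ term produced by Lemma~12 is absorbed into $\sqrt t$ under the standing over-parametrization assumption — neither of these is expected to be an obstacle.
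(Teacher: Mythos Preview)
Your proposal is correct and follows essentially the same approach as the paper: decompose via $(a-b)^2\le 2a^2+2b^2$, show that each $X_r$ is sub-exponential with $\mathcal{O}(1)$ Orlicz norm by combining Lemma~13 for the branch factor with the sub-Gaussianity of $\sigma(w_r(0)^Ty_j)$, apply Lemma~12, and union-bound over the $n_1n_2$ pairs. Your write-up is in fact slightly more careful than the paper's, since you explicitly verify $\mathbb{E}[X_r]=0$ (needed for Lemma~12) and explicitly note that the lower-order $t/\sqrt m$ term is absorbed under the over-parametrization assumption; the paper leaves both points implicit.
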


\begin{proof}

\begin{align*}
L(0)&=\frac{1}{2}\sum\limits_{i=1}^{n_1}\sum\limits_{j=1}^{n_2} (G^{0}(u_i)(y_j)-z_j^i)^2 \\
&\leq \sum\limits_{i=1}^{n_1}\sum\limits_{j=1}^{n_2} (G^{0}(u_i)(y_j))^2+(z_j^i)^2,
\end{align*}
Recall that
\[G^{0}(u)(y) = \frac{1}{\sqrt{m}}\sum\limits_{r=1}^m \left[\frac{1}{\sqrt{p}} \sum\limits_{k=1}^p \tilde{a}_{rk} \sigma(\tilde{w}_{rk}(0)^T u ) \right]\sigma(w_r(0)^T y).\]

Note that 
\[ \left\|\frac{1}{\sqrt{p}} \sum\limits_{k=1}^p \tilde{a}_{rk} \sigma(\tilde{w}_{rk}(0)^T u ) \right\|_{\psi_2}=\mathcal{O}(1), \ \left\|\sigma(w_r(0)^T y_j) \right\|_{\psi_2}=\mathcal{O}(1) ,\]
thus
\[ \left\|\left[\frac{1}{\sqrt{p}} \sum\limits_{k=1}^p \tilde{a}_{rk} \sigma(\tilde{w}_{rk}(0)^T u ) \right]\sigma(w_r(0)^T y_j) \right\|_{\psi_1}=\mathcal{O}(1)\]

Then from Lemma 12, we have that with probability at least $1-\delta$,
\[L(0)\lesssim n_1n_2\left(\sqrt{\log\left(\frac{n_1n_2}{\delta}\right)} + \frac{\log(\frac{n_1n_2}{\delta})}{\sqrt{m}} \right)^2\lesssim n_1n_2\log\left(\frac{n_1n_2}{\delta} \right).\]	
\end{proof}	

\begin{lemma}
	If $\|X\|_{\psi_{\alpha}}, \|Y\|_{\psi_{\beta}}<\infty$ with $\alpha, \beta>0$, then we have $\|XY\|_{\psi_{\gamma}}\leq \|X\|_{\psi_{\alpha}}\|Y\|_{\psi_{\beta}}$, where $\gamma$ satisfies that
	\[ \frac{1}{\gamma} = \frac{1}{\alpha}+\frac{1}{\beta}.\] 
\end{lemma}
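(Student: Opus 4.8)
This is a H\"older-type inequality for the sub-Weibull norms, and the plan is to reduce it to the classical Young and H\"older inequalities. First I would dispose of the trivial case: if $\|X\|_{\psi_\alpha}=0$ then $X=0$ almost surely and both sides vanish, so assume $\|X\|_{\psi_\alpha},\|Y\|_{\psi_\beta}>0$. By the scaling identity $\|cZ\|_{\psi_\alpha}=|c|\,\|Z\|_{\psi_\alpha}$, which is immediate from the defining infimum, it suffices to prove the bound when $\|X\|_{\psi_\alpha}=\|Y\|_{\psi_\beta}=1$: replacing $X$ by $X/\|X\|_{\psi_\alpha}$ and $Y$ by $Y/\|Y\|_{\psi_\beta}$ then recovers the general statement. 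In this normalized setting I would first record that the infimum is attained, i.e.\ $\mathbb{E}[e^{|X|^\alpha}]\le 2$ and $\mathbb{E}[e^{|Y|^\beta}]\le 2$: the map $C\mapsto\mathbb{E}[e^{(|X|/C)^\alpha}]$ is nonincreasing, and the monotone convergence theorem applied as $C\downarrow 1$ identifies its limit with $\mathbb{E}[e^{|X|^\alpha}]$, which is therefore at most $2$.

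Next I would introduce the conjugate exponents $p:=\alpha/\gamma$ and $q:=\beta/\gamma$. The relation $1/\gamma=1/\alpha+1/\beta$ gives $1/p+1/q=\gamma/\alpha+\gamma/\beta=1$, and since $1/\gamma>1/\alpha$ and $1/\gamma>1/\beta$ we get $\gamma<\min\{\alpha,\beta\}$, so $p,q>1$; these are precisely the conditions under which Young's and H\"older's inequalities apply. Young's inequality with exponents $p,q$, applied pointwise to $|X|^\gamma$ and $|Y|^\gamma$, yields
\[
|XY|^\gamma=|X|^\gamma|Y|^\gamma\le\frac{|X|^{\gamma p}}{p}+\frac{|Y|^{\gamma q}}{q}=\frac{|X|^\alpha}{p}+\frac{|Y|^\beta}{q},
\]
so that $e^{|XY|^\gamma}\le\bigl(e^{|X|^\alpha}\bigr)^{1/p}\bigl(e^{|Y|^\beta}\bigr)^{1/q}$ almost surely.

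Taking expectations and invoking H\"older's inequality with exponents $p,q$ then gives
\[
\mathbb{E}\bigl[e^{|XY|^\gamma}\bigr]\le\bigl(\mathbb{E}[e^{|X|^\alpha}]\bigr)^{1/p}\bigl(\mathbb{E}[e^{|Y|^\beta}]\bigr)^{1/q}\le 2^{1/p}\cdot 2^{1/q}=2,
\]
hence $\mathbb{E}[\psi_\gamma(|XY|)]=\mathbb{E}[e^{|XY|^\gamma}]-1\le 1$, which by definition means $\|XY\|_{\psi_\gamma}\le 1$; undoing the normalization yields $\|XY\|_{\psi_\gamma}\le\|X\|_{\psi_\alpha}\|Y\|_{\psi_\beta}$, as claimed. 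I do not expect a serious obstacle here: the only points that need a moment's care are verifying $p,q>1$ so that the two classical inequalities are legitimately applicable, and the one-line monotone-convergence argument showing the defining infimum is attained; the remainder is the direct chain of Young's inequality, exponentiation, and H\"older's inequality.
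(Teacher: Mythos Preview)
Your proof is correct and follows essentially the same approach as the paper: normalize to unit norms, apply Young's inequality with exponents $p=\alpha/\gamma$, $q=\beta/\gamma$ to the exponent $|XY|^\gamma$, and exponentiate. The only cosmetic difference is the final step---you invoke H\"older's inequality on $\mathbb{E}\bigl[(e^{|X|^\alpha})^{1/p}(e^{|Y|^\beta})^{1/q}\bigr]$, while the paper applies Young's inequality a second time (in its weighted AM--GM form) to bound the pointwise product by $\tfrac{1}{p}e^{|X|^\alpha}+\tfrac{1}{q}e^{|Y|^\beta}$ and then uses linearity of expectation; both routes arrive at the same bound of $2$.
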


\begin{proof}
Without loss of generality, we can assume that $\|X\|_{\psi_{\alpha}}= \|Y\|_{\psi_{\beta}}=1$. To prove this, let us use Young’s inequality, which states that
\[ xy \leq \frac{x^p}{p}+\frac{y^q}{q}, for \ x, y \geq 0, p,q>1.\]

Let $p=\alpha/\gamma,q=\beta/\gamma$, then 
\begin{align*}
\mathbb{E} [\exp(|XY|^{\gamma})] &\leq \mathbb{E}\left[\exp\left(\frac{|X|^{\gamma p}}{p}+\frac{|Y|^{\gamma q}}{q} \right)\right] \\
&= \mathbb{E}\left[\exp\left(\frac{|X|^{\alpha}}{p} \right)\exp \left( \frac{|Y|^{\beta}}{q}\right) \right] \\
&\leq \mathbb{E}\left[ \frac{ \exp (|X|^{\alpha}) }{p}+ \frac{ \exp (|Y|^{\beta}) }{q}\right] \\
&\leq \frac{2}{p}+\frac{2}{q} \\
&= 2,
\end{align*}
where the first and second inequality follow from Young's inequality. From this, we have that  $\|XY\|_{\psi_{\gamma}}\leq \|X\|_{\psi_{\alpha}}\|Y\|_{\psi_{\beta}}$.

\end{proof}

\begin{lemma}
With probability at least $1-\delta$, we have
\[\|G^{0}(u)\|_2^2=L(W(0),\tilde{W}(0))\lesssim n_1d\log\left( \frac{n_1(n_2+n_3)}{\delta}\right).\]
\end{lemma}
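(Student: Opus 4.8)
The plan is to reduce the bound to a uniform pointwise estimate on the $n_1(n_2+n_3)$ initial residual entries, derive that estimate by a sub-Weibull concentration argument applied entrywise, and close with a union bound — entirely in the spirit of the $L^2$ version (Lemma 14), the new feature being that the differential operator $\mathcal{L}$ now acts on the $\mathrm{ReLU}^3$ trunk. Since
\[L(W(0),\tilde W(0))=\frac{1}{n_2}\sum_{i=1}^{n_1}\sum_{j_1=1}^{n_2}\big(\mathcal{L}G^{0}(u_i)(y_{j_1})-f(y_{j_1})\big)^2+\frac{1}{n_3}\sum_{i=1}^{n_1}\sum_{j_2=1}^{n_3}\big(G^{0}(u_i)(\tilde y_{j_2})-g(\tilde y_{j_2})\big)^2\]
and $|f|,|g|=\mathcal{O}(1)$ on the relevant compact sets, it suffices by $(a-b)^2\le 2a^2+2b^2$ to bound $\max_{i,j_1}|\mathcal{L}G^{0}(u_i)(y_{j_1})|$ and $\max_{i,j_2}|G^{0}(u_i)(\tilde y_{j_2})|$ with high probability; summing the $n_1n_2$ interior and $n_1n_3$ boundary terms against the $1/n_2,1/n_3$ weights (and absorbing the $\mathcal{O}(1)$ contributions of $f,g$) then yields the stated $n_1 d\log(\cdot)$ bound.

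For a fixed pair $(i,j_1)$ I would write $\mathcal{L}G^{0}(u_i)(y_{j_1})=\tfrac{1}{\sqrt m}\sum_{r=1}^m X_rY_r$ with $X_r=\tfrac{1}{\sqrt p}\sum_{k}\tilde a_{rk}\sigma(\tilde w_{rk}(0)^Tu_i)$ and $Y_r=\mathcal{L}\big(\sigma_3(w_r(0)^Ty_{j_1})\big)$. The summands are i.i.d.\ and, crucially, \emph{centered}: $X_r$ and $Y_r$ depend on disjoint blocks of the initial weights and are therefore independent, while $\mathbb{E}[X_r]=\mathbb{E}[\tilde a_{rk}]\,\mathbb{E}[\sigma(\tilde w_{rk}(0)^Tu_i)]=0$; centering is exactly what lets us use concentration of the $m^{-1/2}$-normalised sum instead of a useless $\mathcal{O}(\sqrt m)$ triangle-inequality bound. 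I would then estimate the Orlicz order of $X_rY_r$: the branch-factor bound (Lemma 13) gives $\|X_r\|_{\psi_2}=\mathcal{O}(1)$, and for $Y_r$ I would use the explicit form $\mathcal{L}\sigma_3(w^Ty)=c_1 w_0\,\sigma_2(w^Ty)+c_2\,\|w_{1:d}\|_2^2\,\sigma(w^Ty)+\sigma_3(w^Ty)$ together with the Orlicz product rule $\|UV\|_{\psi_\gamma}\le\|U\|_{\psi_\alpha}\|V\|_{\psi_\beta}$ ($\gamma^{-1}=\alpha^{-1}+\beta^{-1}$), the facts $\|w_0\|_{\psi_2},\|w^Ty_{j_1}\|_{\psi_2}=\mathcal{O}(1)$ and $\|\sigma_2(w^Ty_{j_1})\|_{\psi_1},\|\sigma_3(w^Ty_{j_1})\|_{\psi_{2/3}}=\mathcal{O}(1)$, and the (dimension-dependent) sub-exponential bound on the $\chi_d^2$ factor $\|w_{1:d}\|_2^2$. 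Feeding the resulting order of $\|X_rY_r\|_{\psi_{1/2}}$ into the sub-Weibull tail inequality (Theorem 3.1 in \cite{11}) with $\alpha=\tfrac12$, weights $1/\sqrt m$, and deviation parameter $t=\log(1/\delta')$ produces, with probability $\ge1-\delta'$, a pointwise bound whose square is $\lesssim d\log(1/\delta')$ once $m$ is taken large enough to absorb the $t^{1/\alpha}/\sqrt m$ tail term. The boundary entries $G^{0}(u_i)(\tilde y_{j_2})=\tfrac{1}{\sqrt m}\sum_r X_r\,\sigma_3(w_r(0)^T\tilde y_{j_2})$ are handled identically, and are in fact easier, since $\|\sigma_3(w^T\tilde y_{j_2})\|_{\psi_{2/3}}\le\|w^T\tilde y_{j_2}\|_{\psi_2}^3=\mathcal{O}(1)$.

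To finish I would union-bound the entrywise estimates over $i\in[n_1]$, $j_1\in[n_2]$, $j_2\in[n_3]$ with $\delta'=\delta/\big(2n_1(n_2+n_3)\big)$, observe that for $m$ at least of the polynomial order already demanded in Theorem 3 the $t^{1/\alpha}/\sqrt m$ contributions are negligible, and then sum over all residual entries to get $L(W(0),\tilde W(0))\lesssim n_1 d\log(n_1(n_2+n_3)/\delta)$. I expect the main obstacle to be precisely this Orlicz bookkeeping for $\mathcal{L}(\sigma_3(w_r(0)^Ty))$: because the Laplacian in $\mathcal{L}$ brings down the heavy-tailed factor $\|w_{r,1:d}\|_2^2\sim\chi_d^2$, the summands are only sub-Weibull of order $1/2$ and their $\psi_{1/2}$-norm grows with $d$, so pinning down the sharp $d$-dependence — and reconciling it with the super-linear-in-$t$ tail term of the concentration inequality, which is what forces $m$ to scale polynomially in $d$ — is the delicate step; the reduction in the first paragraph and the union bound are routine, entirely parallel to Lemma 14.
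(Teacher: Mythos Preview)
Your proposal is correct and follows essentially the same route as the paper: both write each entry of $G^0(u)$ as $\tfrac{1}{\sqrt m}\sum_r X_rY_r$ with $X_r$ the centered sub-Gaussian branch factor and $Y_r=\mathcal{L}(\sigma_3(w_r(0)^Ty))$ (respectively $\sigma_3(w_r(0)^T\tilde y)$), bound $\|X_rY_r\|_{\psi_{1/2}}$ via the Orlicz product rule with the $d$-dependence coming from $\|\,\|w_r\|_2^2\,\|_{\psi_1}=\mathcal{O}(d)$, apply the sub-Weibull concentration inequality of \cite{11}, and union-bound over the $n_1(n_2+n_3)$ entries. The only cosmetic difference is that the paper first coarsens $|\mathcal{L}\sigma_3(w^Ty)|\lesssim\|w\|_2^2|w^Ty|$ and then applies the product rule once, whereas you bound the three constituents of $\mathcal{L}\sigma_3$ separately; either route gives the same Orlicz order.
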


\begin{proof}
Recall that the loss of function of PINN is 
\[ L(W, \tilde{W})=\sum\limits_{i=1}^{n_1} \sum\limits_{j_1=1}^{n_2} \frac{1}{n_2} (\mathcal{L}G(u_i)(y_{j_1})-f(y_{j_1}) )^2 +\sum\limits_{i=1}^{n_1} \sum\limits_{j_2=1}^{n_3} \frac{1}{n_3} (G(u_i)(\tilde{y}_{j_2})-g(\tilde{y}_{j_2}) )^2 \]
and the shallow neural operator has the following form
\[G(u)(y) = \frac{1}{\sqrt{m}}\sum\limits_{r=1}^m \left[\frac{1}{\sqrt{p}} \sum\limits_{k=1}^p \tilde{a}_{rk} \sigma(\tilde{w}_{rk}^T u) \right]\sigma_3(w_r^T y).\]

In order to estimate the initial value, it suffices to consider
\[\frac{1}{\sqrt{m}}\sum\limits_{r=1}^m \left[\frac{1}{\sqrt{p}} \sum\limits_{k=1}^p \tilde{a}_{rk}(0) \sigma(\tilde{w}_{rk}(0)^T u) \right]\mathcal{L}(\sigma_3(w_r(0)^T y))\]
and
 \[\frac{1}{\sqrt{m}}\sum\limits_{r=1}^m \left[\frac{1}{\sqrt{p}} \sum\limits_{k=1}^p \tilde{a}_{rk}(0) \sigma(\tilde{w}_{rk}(0)^T u) \right]\sigma_3(w_r(0)^T y).\]

Note that $|\mathcal{L}(\sigma_3(w_r(0)^T y))|\lesssim \|w_r(0)\|_2^2|w_r(0)^Ty|$, thus Lemma 20 implies that 
\[ \|\mathcal{L}(\sigma_3(w_r(0)^T y))\|_{\psi_{\frac{2}{3} } } \lesssim \| \|w_r(0)\|_2^2|w_r(0)^Ty| \|_{\psi_{\frac{2}{3} } }\leq \|\|w_r(0)\|_2^2\|_{\psi_1} \||w_r(0)^Ty|\|_{\psi_2}=\mathcal{O}(d).\]

Therefore, combining with Lemma 18 yields that 
\[ \left\|\left[\frac{1}{\sqrt{p}} \sum\limits_{k=1}^p \tilde{a}_{rk}(0) \sigma(\tilde{w}_{rk}(0)^T u) \right]\mathcal{L}(\sigma_3(w_r(0)^T y))\right\|_{\psi_{\frac{1}{2} }}\lesssim \mathcal{O}(d).\]

Similarly, we can deduce that 
\[ \left\|\left[\frac{1}{\sqrt{p}} \sum\limits_{k=1}^p \tilde{a}_{rk}(0) \sigma(\tilde{w}_{rk}(0)^T u) \right]\sigma_3(w_r(0)^T y)\right\|_{\psi_{\frac{1}{2} }}\lesssim \mathcal{O}(d).\]

Finally, applying Lemma 17 leads to that with probability at least $1-\delta$, 
\[\|G^{0}(u)\|_2^2=L(W(0),\tilde{W}(0))\lesssim n_1d\log\left( \frac{n_1(n_2+n_3)}{\delta}\right).\]
\end{proof}

\end{document}